\def\eqref#1{equation~\ref{#1}}
\def\1{\bm{1}}
\def\vzero{{\bm{0}}}
\def\va{{\bm{a}}}
\def\vs{{\bm{s}}}
\def\vu{{\bm{u}}}
\def\vv{{\bm{v}}}
\def\vx{{\bm{x}}}
\def\vz{{\bm{z}}}
\DeclareMathAlphabet{\mathsfit}{\encodingdefault}{\sfdefault}{m}{sl}
\SetMathAlphabet{\mathsfit}{bold}{\encodingdefault}{\sfdefault}{bx}{n}
\def\gA{{\mathcal{A}}}
\def\gB{{\mathcal{B}}}
\def\gE{{\mathcal{E}}}
\def\gF{{\mathcal{F}}}
\def\gG{{\mathcal{G}}}
\def\gH{{\mathcal{H}}}
\def\gN{{\mathcal{N}}}
\def\gO{{\mathcal{O}}}
\def\gP{{\mathcal{P}}}
\def\gS{{\mathcal{S}}}
\def\gU{{\mathcal{U}}}
\def\gX{{\mathcal{X}}}
\DeclareMathOperator*{\argmax}{arg\,max}
\DeclareMathOperator*{\argmin}{arg\,min}
\title{Approximate Policy Iteration with Bisimulation Metrics}
\theoremstyle{plain}
\newtheorem{theorem}{Theorem}[section]
\theoremstyle{definition}
\newtheorem{definition}[theorem]{Definition}
\theoremstyle{remark}
\newcommand*\circled[1]{\tikz[baseline=(char.base)]{
            \node[shape=circle,draw,inner sep=2pt] (char) {#1};}}
\DeclarePairedDelimiterX{\norm}[1]{\lVert}{\rVert}{#1}
\author{\name Mete Kemertas \email kemertas@cs.toronto.edu \\
      \addr Department of Computer Science, 
      University of Toronto \\
      Vector Institute
      \AND
      \name Allan Jepson \email jepson@cs.toronto.edu \\
      \addr \addr Department of Computer Science, University of Toronto \\
      Samsung AI Center, Toronto}
\begin{document}

\maketitle

\begin{abstract}
	Bisimulation metrics define a distance measure between states of a Markov decision process (MDP) based on a comparison of reward sequences. Due to this property they provide theoretical guarantees in value function approximation (VFA). In this work we first prove that bisimulation and $\pi$-bisimulation metrics can be defined via a more general class of Sinkhorn distances, which unifies various state similarity metrics used in recent work. Then we describe an approximate policy iteration (API) procedure that uses a bisimulation-based discretization of the state space for VFA and prove asymptotic performance bounds. Next, we bound the difference between $\pi$-bisimulation metrics in terms of the change in the policies themselves. Based on these results, we design an API($\alpha$) procedure that employs conservative policy updates and enjoys better performance bounds than the naive API approach. We discuss how such API procedures map onto practical actor-critic methods that use bisimulation metrics for state representation learning. Lastly, we validate our theoretical results and investigate their practical implications via a controlled empirical analysis based on an implementation of bisimulation-based API for finite MDPs.
\end{abstract}

\section{Introduction}
Reinforcement learning (RL) algorithms can be broadly grouped into two categories: (i) policy iteration (PI) methods and (ii) policy search methods. The former alternate between learning the value function of the current policy and improving the policy via greedy updates, while the latter directly optimize a performance objective in a feasible set of policies. Learning the value function in large state spaces (e.g., continuous spaces) can be intractable, so value function approximation (VFA) is typically employed for PI in practice. Even with powerful function approximators (e.g., deep neural networks), efficient and generalizable VFA remains an open research problem. State abstraction methods \citep{li2006towards} and state similarity metrics \citep{LanBC21} take a reductionist approach, aiming to exploit similarities between states to treat them as one, e.g., via state aggregation \citep{singh1994aggregation, Bertsekas19}. In this work we study API via bisimulation metrics with the goal of extending the theory surrounding them, bridging the gap between theory and practice, and improving the stability of API algorithms that rely on them.

Bisimulation metrics measure the \textit{functional} similarity of states by design, comparing only the extent to which reward sequences differ in expectation \citep{Ferns2004Metrics, Ferns2011Bisimulation}. Owing to this property, they provide error bounds in VFA, while enabling more efficient state representations. Recent work has tackled challenges in their estimation by introducing $\pi$-bisimulation \citep{castro2020scalable}, and employed them for simulated continuous control by constraining the representation space of a neural state encoder \citep{zhang2021learning}. \citet{zhang2021learning} showed that such constraints promote invariance to background distractors in visual environments, thereby improving sample efficiency in learning. Despite attempts to characterize the trade-offs and convergence properties of practical bisimulation-based RL algorithms (e.g., \citet{kemertas}), PI via bisimulation is still poorly understood.

In this paper we first generalize the definition of bisimulation metrics via $p$-Wasserstein metrics and Sinkhorn distances, and prove their existence for this more general family.
This generalization adds theoretical justification to a prior practical modification \citep{zhang2021learning}, lifts an assumption on theoretical results concerning VFA via $p$-Wasserstein bisimulation metrics \citep{kemertas}, connects a recently proposed metric by \citet{castro2021mico} to standard bisimulation metrics and allows for fast computation. We then describe an API procedure, which approximates a $\pi$-bisimulation metric at each iteration, and performs state aggregation under it before policy evaluation (PE) and approximate greedy improvement (GI) steps. To make the procedure more efficient, we motivate the use of conservative policy updates, and show that adopting such updates strikes a better trade-off between performance and computational complexity than the naive version. We then conduct a thorough empirical analysis of our theoretical findings to characterize trade-offs posed by various algorithm design choices in terms of asymptotic performance, rate of performance improvement, representation capacity and wall-clock time.

\section{Background}
\subsection{Setting}

Consider a discounted Markov Decision Process (MDP) given by a tuple, $\langle \gS, \gA, \gP, R, \gamma \rangle$: the state and action spaces, transition kernel, reward function and a discount factor $\gamma \in [0, 1)$. For ease of analysis, we assume that the state space $\gS$ is \textit{compact}\footnote{All finite discrete spaces are compact. A continuous space is compact if and only if it is totally bounded and complete.} as in \citet{Ferns2011Bisimulation}. An agent selects an action, $\va_t \in \gA$ at each discrete time-step according to a stationary policy $\pi(\va_t|\vs_t)$. The MDP transitions to the next state according to a transition distribution $\gP(\vs_{t+1}|\vs_{t}, \va_{t})$. The distribution over next states when actions are selected according to policy $\pi$ is denoted $\gP_\pi(\vs_{t+1}|\vs_t)$. With an abuse of notation, we write $\pi(\vs_t), \gP(\vs_t, \va_t)$ and $\gP_\pi(\vs_t)$ for these conditional distributions when appropriate. The agent collects a scalar reward $r_t=R(\vs_t, \va_t)$ from the environment, which is computed via a bounded reward function, $R: \gS \times \gA \rightarrow [0, 1]$. The reward range is selected to simplify analysis, although our theoretical results can be extended to arbitrary bounded reward ranges with ease. 
The expected immediate reward from choosing an action according to policy $\pi$ in state $\vs$ is denoted by $R_\pi(\vs)=\mathbb{E}_{\va \sim \pi(\vs)}[R(\vs, \va)]$.
The agent's discounted return in a given episode is $G = \sum_{t \geq 0}\gamma^tr_t$. We denote by $\gB(\gX)$ the set of real-valued bounded functions over $\gX$ and write $V^\pi \in \gB(\gS)$ for the value function of a policy $\pi$, i.e., the fixed point of the Bellman operator $T_\pi: \gB(\gS) \rightarrow \gB(\gS)$, given by the shorthand notation $T_\pi V = R_\pi + \gamma \gP_\pi V$. Similarly, $V^*$ denotes the optimal value function, or the fixed-point of the Bellman optimality operator $T: \gB(\gS) \rightarrow \gB(\gS)$, where $TV = \sup_{\pi \in \Pi}T_\pi V$, the supremum taken over the set of stationary policies $\Pi$. For a given function $f$, $\norm{f}_\infty$ denotes the supremum (uniform) norm, i.e., $\sup_{x \in \mathrm{dom}(f)}|f(x)|$. We write $\norm{f}_{p, \mu}^p = \mathbb{E}_{x \sim \mu}[|f(x)|^p]$ for a distribution $\mu$ supported on $\mathrm{dom}(f)$. 

Here, we are interested in state similarity (pseudo) metrics\footnote{For brevity, we drop ``pseudo'' in the following.} ${d: \gS \times \gS \rightarrow [0, \infty)}$ to be used to directly aggregate $\gS$ or constrain representations of its elements to save space and memory, and to promote efficient learning 
(e.g., via distraction invariance). In particular, given a state similarity metric $d$ and a threshold $2\epsilon$, one can derive a state abstraction function ${\Phi: \gS \rightarrow \widetilde{\gS}}$ that satisfies:
\begin{align}
\label{eq:bisimulation-based-abstraction}
    \Phi(\vs_i)=\Phi(\vs_j) \Rightarrow d(\vs_i, \vs_j) \leq 2\epsilon.
\end{align} This abstraction $\Phi$ maps a ground MDP $\langle \gS, \gA, \gP, R, \gamma \rangle$ to its abstract version $\langle \widetilde{\gS}, \gA, \widetilde{\gP}, \widetilde{R}, \gamma \rangle$, where $\widetilde{\gP}$ and $\widetilde{R}$ are defined as per-partition weighted averages. In particular, let ${B_\Phi(\vs) = \{\vz \in \gS ~|~ \Phi(\vs) = \Phi(\vz) \}}$ be the set of states that are in the same partition as $\vs$. Given an arbitrary non-negative measure $\xi$ that assigns positive measure ${\xi(B_\Phi(\vs)) > 0}$ to all partitions $B_\Phi(\vs)$, we write \citep{li2006towards, Ferns2011Bisimulation}:
\begin{gather}
\label{eq:aggregation}
\begin{split}
    \widetilde{R}(\Phi(\vs), \va) &\coloneqq \frac{1}{\xi(B_\Phi(\vs))}\int_{\vz \in B_\Phi(\vs)}R(\vz, \va)d\xi(\vz), \\
\widetilde{\gP}(\Phi(\vs^\prime) | \Phi(\vs), \va) &\coloneqq  \frac{1}{\xi(B_\Phi(\vs))}\int_{\vz \in B_\Phi(\vs)}\gP(B_\Phi(\vs^\prime)|\vz, \va)d\xi(\vz).
\end{split}
\end{gather}
Next, we discuss bisimulation metrics, which provide guarantees in approximating value functions over the ground MDP using value functions over abstract MDPs derived via (\ref{eq:bisimulation-based-abstraction}). Further background and references on state aggregation methods are provided in Appendix \ref{sec:extended-background}.

\subsection{Bisimulation Metrics for Continuous MDPs}

To provide VFA guarantees, \citet{Ferns2011Bisimulation} defined the following bisimulation metric for continuous MDPs as a weighted sum of the difference between immediate rewards obtained from respective states and a future-looking recursive term based on the 1-Wasserstein distance (see (\ref{eq:p-sinkhorn}) with $p=1, \lambda=0$ for a definition). 
\begin{definition}[Bisimulation metric for continuous MDPs, Thm. 3.12 of \citep{Ferns2011Bisimulation}]
\label{def:bisim2}
Let $\mathfrak{met}(\gS)$ be the set of bounded pseudo-metrics over a compact $\gS$. Given $c_R \in [0, \infty)$ and $c_T \in (0, 1)$, the following mapping $\gF: \mathfrak{met}(\gS) \rightarrow \mathfrak{met}(\gS)$ has a unique fixed-point $d^\sim$ called the bisimulation metric:
\begin{align}
\label{eq:ferns-bisim-metric}
    \gF(d)(\vs_i, \vs_j) &= \max_{\va \in \gA} c_R|R(\vs_i, \va) - R(\vs_j, \va)| + c_T W_1(d)(\gP(\vs_i, \va), \gP(\vs_j, \va)).
\end{align}
\end{definition}
\noindent The existence proof applies to compact state spaces via the Banach fixed-point theorem \citep{Ferns2011Bisimulation}. A special case of this metric for finite MDPs was also outlined previously by \citet{Ferns2004Metrics}. \citet{Ferns2011Bisimulation} showed that this formulation ensures a connection to optimal value functions. In particular, whenever ${c_T \geq \gamma}$, $V^*$ is $c_R^{-1}$-Lipschitz under the bisimulation metric, i.e., ${c_R|V^*(\vs_i) - V^*(\vs_j)| \leq d^\sim(\vs_i, \vs_j)}$. The Lipschitz continuity of $V^*$ with respect to $d^\sim$ results in the VFA guarantee that given a state abstraction $\Phi$ derived via $d^\sim$ as in (\ref{eq:bisimulation-based-abstraction}), whenever $c_T \in [\gamma, 1)$:
\begin{align}
\label{eq:vfa-vstar}
    \norm{V^* - \widetilde{V}^*_\Phi}_\infty ~\leq~ \frac{2\epsilon}{c_R(1-\gamma)},
\end{align}
where $\widetilde{V}^*_\Phi(\vs) = \widetilde{V}^*(\Phi(\vs))$. In words, whenever the bisimulation metric places as much weight on future distances as the value function places on future rewards, $\epsilon$-aggregation\footnote{An $\epsilon$-aggreagated state space is any partitioning of $\gS$ that permits a maximum partition radius of $\epsilon$ under a metric $d$.} under the bisimulation metric yields an abstract MDP, which has an optimal value function that is close to that of the ground MDP \citep{Ferns2011Bisimulation}.
Thus, given knowledge of the bisimulation metric, one can reduce an MDP with a possibly continuous state space to a finite MDP, which can be solved easily via regular (exact) PI, and have confidence that the solution is approximately optimal with worst-case error given as a linear function of the aggregation radius $\epsilon$, which determines the granularity of the partitioning. Given that exact PI for finite MDPs converges in $\gO\Big(\frac{|\gS| |\gA|}{1-\gamma}\log(\frac{1}{1-\gamma})\Big)$ steps \citep{scherrer2013improved}, if substantial reductions in the size of the state space are possible (e.g., due to the presence of distractors) such that ${|\gS| \gg |\widetilde{\gS}|}$, one can pre-compute bisimulation-based abstractions to find a near-optimal policy much more quickly. However, computing the metric itself exactly can be costly; for example, fixed-point iteration requires $\gO\Big(\frac{\log(\gE)}{\log(c_T)}|\gS|^5|\gA|\log|\gS|\Big)$ steps in the worst-case to find an approximate bisimulation metric \smash{$\widehat{d}$} such that \smash{${\norm{\widehat{d} - d^\sim}_\infty \leq \gE}$} \citep{ferns2006methods}. Assuming that $\Phi$ can be computed at a low cost, a naive combination of bisimulation-based partitioning followed by PI over the reduced MDP yields a complexity of ${\gO\Big(\frac{\log(\gE)}{\log(c_T)}|\gS|^5|\gA|\log|\gS| + \frac{|\widetilde{\gS}| |\gA|}{1-\gamma}\log(\frac{1}{1-\gamma})\Big)}$, which need not be superior to directly applying PI on a finite ground MDP. Further, fixed-point iteration cannot be used trivially over continuous state spaces so that function approximation needs to be adopted. Hence, fast approximations of bisimulation metrics are necessary in practice to amortize the cost of metric learning and enable usage in continuous MDPs or large finite MDPs. Indeed, in Sec. \ref{sec:optimal-transport-speedup} we consider fast approximation of Wasserstein distances, which is a major bottleneck that prior work attempted to overcome or circumvent via assumptions \citep{ferns2006methods, castro2020scalable, zhang2021learning, castro2021mico}.

\subsection{$\pi$-bisimulation Metrics}
While bisimulation metrics are useful for approximating $V^*$ of a large MDP, they can be difficult to compute for large (e.g., continuous) action spaces due to the $\max$ operation in (\ref{eq:ferns-bisim-metric}). Secondly, the $\max$ operator is inherently pessimistic in assigning a notion of similarity to states. \citet{castro2020scalable} highlighted these issues and proposed $\pi$-bisimulation metrics to address them.
\begin{definition}[$\pi$-bisimulation metric \citep{castro2020scalable}]
Given a fixed policy $\pi$, the following mapping $\gF: \mathfrak{met}(\gS) \rightarrow \mathfrak{met}(\gS)$ has a unique fixed-point $d_\pi^\sim$ called the $\pi$-bisimulation metric:\footnote{\citet{castro2020scalable} originally defined the metric with ${c_R=1}$ and ${c_T=\gamma}$.}
\label{def:on-policy}
\begin{align} 
\label{eq:on-policy-bisim-metric}
\gF_\pi(d)(\vs_i, \vs_j) &\coloneqq c_R|R_\pi(\vs_i) - R_\pi(\vs_j)| + c_T W_1(d)(\gP_\pi(\vs_i), \gP_\pi(\vs_j)).
\end{align}
\end{definition}
An approach to learning $d_\pi^\sim$ via stochastic approximation with replay buffer samples was presented; the approach reduces the complexity of metric learning by a factor of $|\gA|$ for finite MDPs. \citet{castro2020scalable} also showed that the value function $V^\pi$ of a policy is similarly 1-Lipschitz under the $\pi$-bisimulation metric when ${c_R=1}$, i.e., ${|V^\pi(\vs_i) - V^\pi(\vs_j)| \leq d_\pi^\sim(\vs_i, \vs_j)}$. Recently, \citet{kemertas} assumed that $\gF_\pi$ has a unique fixed-point if defined via an arbitrary $p$-Wasserstein metric with $p \geq 1$ instead of the $1$-Wasserstein metric specifically. Then, given an abstraction $\Phi$ derived via $d_\pi^\sim$, for any $c_T \in [\gamma, 1)$ and $p \geq 1$,
\begin{align}
\label{eq:vfa-vpi}
    \norm{V^\pi - \widetilde{V}^\pi_\Phi}_\infty ~\leq~ \frac{2\epsilon}{c_R(1-\gamma)},
\end{align}
where $\widetilde{V}^\pi = \widetilde{R}_\pi + \gamma \widetilde{\gP}_\pi \widetilde{V}^\pi$. Similarly to (\ref{eq:aggregation}), $\widetilde{R}_\pi$ and $\widetilde{\gP}_\pi$ were defined as per-partition weighted averages of $R_\pi$ and $\gP_\pi$ respectively \citep{kemertas}. In the next section, we use (\ref{eq:vfa-vpi}) to construct API algorithms with performance bounds.

\section{Theoretical Analysis}
\label{sec:theoretical}
In this section, we first prove that bisimulation metrics can be defined via a more general class of statistical distances including $p$-Wasserstein metrics and Sinkhorn distances \citep{cuturi2013sinkhorn}, which can be used to compute upper bounds on the 1-Wasserstein metric at improved complexity. 
Based on these results, we will derive a feasible API procedure with bounded error to optimality. The procedure performs alternating updates to a sequence of policies $\pi_k$ and approximations of corresponding sequence of metrics $d_{\pi_k}^\sim$. Unlike \citet{zhang2021learning}, we do not assume a continuously improving policy to argue for convergence. Rather, we leave the possibility of \textit{policy oscillation} open (unlike exact PI, approximate PI is not guaranteed to converge \citep{BertsekasTsitsiklis96}), but provide asymptotic bounds on optimality. Next we show that restricting the size of policy updates renders such procedures more stable, making a case for the use of conservative policy updates in the context of bisimulation. To further this point, we outline an API($\alpha$) algorithm that bounds the policy update size, and consequently enjoys better performance bounds than the naive API algorithm. We conclude the section by discussing the connections between our theoretical setting and practical algorithms used for larger-scale problems.  All proofs are relegated to the Appendix for space.

\subsection{On the Use of Optimal Transport for Bisimulation Metrics}
\label{sec:optimal-transport-speedup}
In prior theoretical results, bisimulation metrics were defined via the $1$-Wasserstein metric (i.e., the Kantorovich metric) \citep{Ferns2004Metrics, Ferns2011Bisimulation, castro2020scalable}. $p$-Wasserstein distance computation between a pair of distributions over a finite space $\gS$ has worst-case complexity $\gO(|\gS|^3\log|\gS|)$ \citep{orlin1988faster}. This makes the usage of the 1-Wasserstein metric a major obstacle for practical use since for a single fixed-point update it is computed $|\gS|^2|\gA|$ and $|\gS|^2$ times for (\ref{eq:ferns-bisim-metric}) and (\ref{eq:on-policy-bisim-metric}) respectively. To circumvent this problem in empirical studies, \cite{castro2020scalable} assumed deterministic dynamics.\footnote{The Wasserstein distance between two delta distributions is simply the distance between the two points \citep{villani2008optimal}.} Similarly, \citet{zhang2021learning} assumed the dynamics can be modelled as Gaussians over a latent space and successfully used a $2$-Wasserstein metric to exploit the closed-form of the $W_2$ distance between Gaussians \citep{olkin1982distance}, albeit without theoretical justification. Here, we show that $p$-Wasserstein distances can indeed be used safely and thereby lift the assumption made by \citet{kemertas} to prove (\ref{eq:vfa-vpi}) for arbitrary $p \geq 1$. 

We further show that Sinkhorn distances, which bound Wasserstein distances above via entropic regularization, can also be used. The practical advantages of using Sinkhorn distances are three-fold: (i) owing to a strictly convex optimization objective, the Sinkhorn-Knopp algorithm \citep{sinkhorn1967concerning} can be used to compute them in $\gO(|\gS|^2\log|\gS|)$ time  \citep{altschuler2017near, dvurechensky2018computational}, (ii) unlike standard Wasserstein distance solvers this computation can be massively parallelized on GPUs  \citep{cuturi2013sinkhorn}, and (iii) between fixed-point iterations of $\gF_\pi$ one can easily save Sinkhorn potentials to warm-start the Sinkhorn-Knopp algorithm at an overall memory cost of $\gO(|\gS|^3)$. Now, we define primal and dual Sinkhorn distances with $p \geq 1$ in preparation of a generalized definition of bisimulation metrics.
\begin{restatable}[$(p, \zeta)$- and $(p, \lambda)$-Sinkhorn distances]{definition}{sinkhorndef}
Let $d: \gX \times \gX \rightarrow [0, \infty)$ be a distance function and $\Omega$ the set of all joint distributions over $\gX \times \gX$ with marginals $\mu_1, \mu_2 \in \gP_p(\gX)$, where $\gP_p(\gX)$ denotes the set of probability measures with bounded moments of order $p$ on $\gX$. Given the product of marginals $\mu_1 \otimes \mu_2$ \citep{genevay2016stochastic} and ${p \geq 1, \zeta \geq 0}$, we call the following primal form $(p, \zeta)$-Sinkhorn distances:
\begin{align}
\label{eq:zeta-sinkhorn}
    W_p^\zeta(d)(\mu_1, \mu_2) = \min_{\omega \in \Omega(\zeta)} \norm{d}_{p, \omega}, ~\mathrm{where}~ \Omega(\zeta) = \{ \omega \in \Omega ~|~ D_{\mathrm{KL}}(\omega ~||~ \mu_1 \otimes \mu_2) \leq \frac{1}{\zeta} \}.
\end{align}
The dual form is given by the Lagrangian of (\ref{eq:zeta-sinkhorn}) with $\lambda \geq 0$: 
\begin{align}
\label{eq:p-sinkhorn}
    W_p^\lambda(d)(\mu_1, \mu_2) = \norm{d}_{p, \omega^*}, ~\mathrm{where}~ \omega^* = \argmin_{\omega \in \Omega} \norm{d}_{p, \omega}^p - \lambda \gH(\omega),
\end{align}
\end{restatable}
where $\gH$ denotes Shannon entropy. To each $\zeta$ and pair of distributions $(\mu_1, \mu_2)$ corresponds a $\lambda$ such that $W_p^\lambda(d)(\mu_1, \mu_2) = W_p^\zeta(d)(\mu_1, \mu_2)$ \citep{cuturi2013sinkhorn}. While ${\lambda=0}$ recovers $p$-Wasserstein distances as a special case where $\zeta$ is sufficiently small, ${\lambda > 0}$ renders the objective of the dual form strictly convex. Consequently, when computing $W_p^\lambda(d)(\mu_1, \mu_2)$ one can use the Sinkhorn-Knopp algorithm, which converges in fewer iterations for higher $\lambda$ \citep{cuturi2013sinkhorn}, albeit at the expense of weaker upper bounds on $W_p(d)(\mu_1, \mu_2)$.
\begin{restatable}[A $(p, \zeta)$-Sinkhorn distance bound]{lemma}{wassersteinlemma}
\label{lem:p-wass-difference-bound}
Given metrics $d$ and $d^\prime$, for all $p \geq 1$ and $\zeta \geq 0$,
\begin{align}
    \left| W_p^\zeta(d)(\mu_1, \mu_2) - W_p^\zeta(d^\prime)(\mu_1, \mu_2) \right| \leq \norm{d - d^\prime}_\infty.
\end{align}
\end{restatable}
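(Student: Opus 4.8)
The plan is to exploit the fact that the primal $(p,\zeta)$-Sinkhorn distance is a minimum of $\norm{\cdot}_{p,\omega}$ over the \emph{same} feasible set $\Omega(\zeta)$ for both $d$ and $d'$, so the problem reduces to bounding the difference of two minima of related objectives over a common domain. The elementary fact I would invoke is that for any two real-valued functions $f, g$ on a common set $A$ with $\sup_{a \in A}|f(a) - g(a)| \le \delta$, one has $|\inf_A f - \inf_A g| \le \delta$; applying this with $A = \Omega(\zeta)$, $f(\omega) = \norm{d}_{p,\omega}$ and $g(\omega) = \norm{d'}_{p,\omega}$ reduces everything to showing the pointwise bound $\bigl|\norm{d}_{p,\omega} - \norm{d'}_{p,\omega}\bigr| \le \norm{d - d'}_\infty$ for every coupling $\omega \in \Omega$.

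The pointwise bound is just the reverse triangle inequality for the $L^p(\omega)$ norm. Writing $\norm{d}_{p,\omega} = \bigl(\mathbb{E}_{(\vx,\vy) \sim \omega}[d(\vx,\vy)^p]\bigr)^{1/p}$, the map $d \mapsto \norm{d}_{p,\omega}$ is a seminorm on bounded functions over $\gX \times \gX$, so $\bigl|\norm{d}_{p,\omega} - \norm{d'}_{p,\omega}\bigr| \le \norm{d - d'}_{p,\omega}$. Then I bound $\norm{d - d'}_{p,\omega} = \bigl(\mathbb{E}_\omega[|d - d'|^p]\bigr)^{1/p} \le \bigl(\mathbb{E}_\omega[\norm{d-d'}_\infty^p]\bigr)^{1/p} = \norm{d - d'}_\infty$, since $|d(\vx,\vy) - d'(\vx,\vy)| \le \norm{d-d'}_\infty$ for all $(\vx,\vy)$ and $\omega$ is a probability measure. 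This $\delta := \norm{d - d'}_\infty$ is uniform in $\omega$, so it controls the sup over $\Omega(\zeta)$, and the minima-comparison fact finishes the argument.

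For completeness I would spell out the minima-comparison step: pick $\omega_1 \in \Omega(\zeta)$ attaining (or nearly attaining, then take a limit) $\min_\omega \norm{d}_{p,\omega}$; since $\omega_1$ is feasible for the $d'$ problem, $W_p^\zeta(d')(\mu_1,\mu_2) \le \norm{d'}_{p,\omega_1} \le \norm{d}_{p,\omega_1} + \delta = W_p^\zeta(d)(\mu_1,\mu_2) + \delta$, and symmetrically with the roles swapped, giving $|W_p^\zeta(d)(\mu_1,\mu_2) - W_p^\zeta(d')(\mu_1,\mu_2)| \le \delta$. One minor technical point worth a sentence: the minimum in (\ref{eq:zeta-sinkhorn}) is attained because $\Omega(\zeta)$ is compact in the weak topology (it is a KL-ball intersected with the coupling polytope, closed and tight) and $\omega \mapsto \norm{d}_{p,\omega}$ is lower semicontinuous for bounded continuous $d$; if one prefers to avoid this, the near-minimizer argument with an $\eta \to 0$ slack works just as well. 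I expect no genuine obstacle here — the only thing to be careful about is that the feasible set truly does not depend on which metric defines the cost, which is immediate from the form of $\Omega(\zeta)$ in (\ref{eq:zeta-sinkhorn}).
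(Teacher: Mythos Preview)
Your proposal is correct and takes essentially the same approach as the paper: both arguments rest on the triangle inequality for $\norm{\cdot}_{p,\omega}$, the bound $\norm{d-d'}_{p,\omega} \leq \norm{d-d'}_\infty$, and the fact that the feasible set $\Omega(\zeta)$ is common to both optimization problems. Your presentation is slightly more modular (isolating the ``comparison of minima'' step and adding a remark on attainment), whereas the paper runs the same inequalities in a single chain, but the substance is identical.
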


\begin{restatable}[$(p, \zeta)$-Sinkhorn bisimulation metrics]{theorem}{existencep}
\label{thm:p-wass}
Let $c_T \in [0, 1)$, $c_R \in [0, \infty)$, $p \geq 1$ and $\zeta \geq 0$. The mappings $\gF, \gF_\pi: \mathfrak{met}(\gS) \rightarrow \mathfrak{met}(\gS)$ each have unique fixed-points:
\begin{gather}
\label{eq:F1}
    \gF(d)(\vs_i, \vs_j) \coloneqq \max_{\va \in \gA}c_R|R(\vs_i, \va) - R(\vs_j, \va)| + c_T W_{p}^\zeta(d)(\gP( \vs_i, \va), \gP( \vs_j, \va)), \\
    \label{eq:F2}\gF_\pi(d)(\vs_i, \vs_j) \coloneqq c_R|R_\pi(\vs_i) - R_\pi(\vs_j)| + c_T W_{p}^\zeta(d)(\gP_\pi( \vs_i), \gP_\pi( \vs_j)).
\end{gather}
Whenever $c_T \geq \gamma$, (\ref{eq:vfa-vstar}) and (\ref{eq:vfa-vpi}) hold for all $p \geq 1$ and $\zeta \geq 0$ for fixed-points $d^\sim$ and $d^\sim_\pi$ respectively. 
\end{restatable}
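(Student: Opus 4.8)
The plan is to mimic the classical Banach fixed-point argument of \citet{Ferns2011Bisimulation} (Def.~\ref{def:bisim2}), replacing the $1$-Wasserstein term with the $(p,\zeta)$-Sinkhorn term and invoking Lemma~\ref{lem:p-wass-difference-bound} wherever the original proof used the $1$-Lipschitz dependence of $W_1$ on the underlying metric. The two things to check are: (i) $\gF$ and $\gF_\pi$ actually map $\mathfrak{met}(\gS)$ into itself, i.e.\ $\gF(d)$ is again a bounded pseudo-metric on the compact space $\gS$; and (ii) $\gF, \gF_\pi$ are contractions in $\norm{\cdot}_\infty$ with modulus $c_T < 1$, so that the Banach fixed-point theorem on the complete metric space $(\mathfrak{met}(\gS), \norm{\cdot}_\infty)$ yields the unique $d^\sim$ and $d^\sim_\pi$.

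For (i), boundedness is immediate since rewards lie in $[0,1]$ and $W_p^\zeta(d) \le \norm{d}_\infty$ (each $\omega \in \Omega(\zeta)$ gives $\norm{d}_{p,\omega} \le \norm{d}_\infty$), so $\norm{\gF(d)}_\infty \le c_R + c_T\norm{d}_\infty < \infty$. Non-negativity, symmetry and $\gF(d)(\vs,\vs)=0$ are clear (for the last, the diagonal coupling is feasible for every $\zeta$ since $D_{\mathrm{KL}}=0\le 1/\zeta$, and it attains cost $0$). The triangle inequality requires knowing that $\mu \mapsto W_p^\zeta(d)(\mu_1,\mu_2)$ itself satisfies a triangle inequality in its distribution arguments; this is the one genuinely non-routine point, and I expect it to be handled by a gluing/coupling argument — given near-optimal entropically-constrained couplings $\omega_{12}$ of $(\mu_1,\mu_2)$ and $\omega_{23}$ of $(\mu_2,\mu_3)$, glue them along $\mu_2$ to build a coupling of $(\mu_1,\mu_3)$, verify its KL constraint is met (using convexity of KL and the chain rule / superadditivity of relative entropy under gluing), and apply Minkowski's inequality for $\norm{\cdot}_{p,\omega}$ to get $W_p^\zeta(d)(\mu_1,\mu_3) \le W_p^\zeta(d)(\mu_1,\mu_2) + W_p^\zeta(d)(\mu_2,\mu_3)$. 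Equivalently one may pass through the dual form \eqref{eq:p-sinkhorn} and the correspondence $\lambda \leftrightarrow \zeta$; either way this is the main obstacle, since for fixed $\zeta$ (as opposed to fixed $\lambda$) one must confirm the glued coupling still satisfies the \emph{same} KL budget — I would argue this via joint convexity of $D_{\mathrm{KL}}$ in its first argument together with the product structure of the reference measures. Composition with $\max_\va$ and with the affine map $d \mapsto c_R|R(\cdot,\va)-R(\cdot,\va)| + c_T(\cdot)$ preserves the pseudo-metric properties, so $\gF(d),\gF_\pi(d) \in \mathfrak{met}(\gS)$.

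For (ii), fix $d,d' \in \mathfrak{met}(\gS)$. For each $\va$ and each state pair the reward terms cancel, and Lemma~\ref{lem:p-wass-difference-bound} gives $|W_p^\zeta(d)(\gP(\vs_i,\va),\gP(\vs_j,\va)) - W_p^\zeta(d')(\gP(\vs_i,\va),\gP(\vs_j,\va))| \le \norm{d-d'}_\infty$; using $|\max_\va f(\va) - \max_\va g(\va)| \le \max_\va |f(\va)-g(\va)|$ and taking the supremum over $(\vs_i,\vs_j)$ yields $\norm{\gF(d)-\gF(d')}_\infty \le c_T\norm{d-d'}_\infty$, and identically (without the $\max$) $\norm{\gF_\pi(d)-\gF_\pi(d')}_\infty \le c_T\norm{d-d'}_\infty$. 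Since $c_T \in [0,1)$ and $(\mathfrak{met}(\gS),\norm{\cdot}_\infty)$ is complete (a uniform limit of bounded pseudo-metrics is a bounded pseudo-metric), the Banach fixed-point theorem delivers the unique $d^\sim$ and $d^\sim_\pi$.

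Finally, the claim that \eqref{eq:vfa-vstar} and \eqref{eq:vfa-vpi} still hold when $c_T \ge \gamma$: I would reproduce the Lipschitz-continuity argument of \citet{Ferns2011Bisimulation} / \citet{castro2020scalable} / \citet{kemertas} verbatim, the only new ingredient being that it needs $W_1(d)(\mu_1,\mu_2) \le W_p^\zeta(d)(\mu_1,\mu_2)$. This holds because $W_1(d) \le W_p(d)$ (Jensen / monotonicity of $L^p$ norms in $p$ on a probability space) and $W_p(d) = W_p^0(d) \le W_p^\zeta(d)$ since the entropic constraint $D_{\mathrm{KL}}(\omega\|\mu_1\otimes\mu_2)\le 1/\zeta$ shrinks the feasible set $\Omega(\zeta) \subseteq \Omega$, raising the minimum. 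Hence $c_R|V^*(\vs_i)-V^*(\vs_j)| \le d^\sim(\vs_i,\vs_j)$ (resp.\ the $V^\pi$ version) goes through by the same induction on the Bellman iterates that the cited works use — the abstraction bound \eqref{eq:bisimulation-based-abstraction} combined with this Lipschitz property and a standard value-aggregation error bound then gives \eqref{eq:vfa-vstar} and \eqref{eq:vfa-vpi}. I do not expect any difficulty here beyond bookkeeping; the substantive work is the triangle inequality for $W_p^\zeta$ in step (i).
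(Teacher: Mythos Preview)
Your proposal is correct and follows essentially the same route as the paper: Lemma~\ref{lem:p-wass-difference-bound} gives the $c_T$-contraction in $\norm{\cdot}_\infty$, Banach on $(\mathfrak{met}(\gS),\norm{\cdot}_\infty)$ gives the fixed point, and the VFA bounds follow from the monotonicity $W_1(d)\le W_p(d)\le W_p^\zeta(d)$ together with the known $(1,0)$ case.

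Two small differences are worth flagging. First, the obstacle you correctly isolate --- that the glued coupling of two entropically-constrained plans still satisfies the \emph{same} KL budget --- is not handled by joint convexity of $D_{\mathrm{KL}}$ as you suggest (convexity would give a bound by a convex combination, not by the common budget $1/\zeta$); the paper instead cites Lemma~1 of \citet{cuturi2013sinkhorn}, which establishes exactly this closure of $\Omega(\zeta)$ under gluing via a data-processing/chain-rule argument, and then applies Minkowski as you do. Second, for the Lipschitz/VFA part the paper does not redo the Bellman-iterate induction; it argues at the level of the fixed points that $(p_1,\zeta_1)\preceq(p_2,\zeta_2)$ implies $d_1^\sim\le d_2^\sim$ (Remark~\ref{rmk:p-wass-finer}), so the $(p,\zeta)$-metric dominates the $(1,0)$-metric, which already upper-bounds $c_R|V^\pi(\vs_i)-V^\pi(\vs_j)|$ by the cited prior work. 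Your induction route would also work, but the paper's comparison of fixed points is shorter.
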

\noindent The existence and uniqueness proof follows from Lemma \ref{lem:p-wass-difference-bound} and the Banach fixed-point theorem. We note the following relationship between bisimulation metrics that use different values $(p, \zeta)$.
\begin{restatable}[]{remark}{pcorollary}
\label{rmk:p-wass-finer}
Given metrics $d^\sim_1$ and $d^\sim_2$ defined via $(p_1, \zeta_1)$ and $(p_2, \zeta_2)$, $(p_1, \zeta_1) \preceq (p_2, \zeta_2) \Rightarrow d^\sim_1 \leq d^\sim_2$.
\end{restatable}
That is, $\epsilon$-aggregation of $\gS$ under $d^\sim_2$ is finer-grained than under $d^\sim_1$. Thus any speedups obtained via $p > 1$ or $\lambda > 0$ come at the expense of possibly less efficient discretizations (larger $|\widetilde{\gS}|$), although one still enjoys the same VFA bounds given in (\ref{eq:vfa-vstar}) and (\ref{eq:vfa-vpi}) for the same $\epsilon$. Interestingly, we recover MICo \citep{castro2021mico} as a special case of $(p, \zeta)$-Sinkhorn bisimulation metrics; when $p=1$ and $\zeta \rightarrow \infty$, the optimal transport plan $\omega^* \rightarrow \mu_1 \otimes \mu_2$ and the $(p, \zeta)$-Sinkhorn distance becomes the expected distance over $\mu_1 \otimes \mu_2$. This was precisely the distance used by MICo to replace the costly $1$-Wasserstein distance.\footnote{This distance measure is also known as the {\L}ukaszyk-Karmowski distance \citep{lukaszyk2004new, castro2021mico}.} Hence, the more general form (\ref{eq:F1}-\ref{eq:F2}) with $\zeta \geq 0$ establishes a continuum of metrics that spans bisimulation metrics and MICo at its two extremes. Similarly to \citet{cuturi2013sinkhorn}, we provide theoretical results for the primal Sinkhorn distance $W_p^{\zeta}(d)$, but for empirical analysis in Sec. \ref{sec:experiments} we use the dual distance $W_p^{\lambda}(d)$ with a fixed $\lambda$ rather than optimize the dual variable $\lambda$ for a fixed $\zeta$. In particular, we investigate the quality of metrics with varying $p$ and $\lambda$, and how their differences may influence API algorithms that rely on said metrics for VFA.

\subsection{Approximate Policy Iteration with $\pi$-bisimulation}
\label{sec:api-theoretical}

Now that we can approximate bisimulation metrics more efficiently using Sinkhorn distances, we introduce a feasible API procedure with $\pi$-bisimulation metrics. Using (\ref{eq:vfa-vpi}), we will derive error bounds on optimality. We write $\mathrm{GreedyImprovement}(V, \delta)$ to indicate an approximate greedy update with respect to a function ${V \in \gB(\gS)}$, which yields a policy $\pi_g$ over $\gS$ such that $\norm{T_{\pi_g}V - TV}_\infty \leq \delta$. 
\begin{restatable}[API with $\pi$-bisimulation]{theorem}{apibisim}
\label{thm:api-bisim}
Let ${c_R=1}$, ${c_T = \gamma}$ and $\{\pi_k\}_{k \in \mathbb{N}}$ be a sequence of policies generated with the following updates per step, where ${d_{0} = \vzero \in \mathfrak{met}(\gS)}$, and $\epsilon \geq 0$ and $n \in \mathbb{N}_+$ are algorithm parameters. Let $c_n = \gamma^n/(1-\gamma)$, and consider for any $p \geq 1$ and $\zeta \geq 0$:
\begin{align}
    \widehat{d}_{\pi_{k}} &\leftarrow \gF_{\pi_k}^{(n)}(d_0) \label{eq:fp-iteration}\\
    \widetilde{\gS}, \Phi_k &\leftarrow \mathrm{HardAggregation}(\gS, \widehat{d}_{\pi_k}, \epsilon) \label{eq:hard-aggregation} \\
    \widetilde{V}^{\pi_k} &\leftarrow \mathrm{PolicyEvaluation}(\widetilde{\gS}, \pi_k) \label{eq:policy-evaluation}\\
    \pi_{k+1} &\leftarrow \mathrm{GreedyImprovement}(\widetilde{V}^{\pi_k}_{\Phi_k}, \delta) \label{eq:greedy-improvement},
\end{align}
and $\widetilde{V}^{\pi_k}_{\Phi_k} \in \gB(\gS)$ is the composition of $\widetilde{V}^{\pi_k}$ and $\Phi_k$.
If the sequence $\{\pi_k\}_{k \in \mathbb{N}}$ converges to a policy $\overline{\pi}$, we have
\begin{align}
    \norm{V^{\overline{\pi}} - V^*}_\infty \leq \frac{\delta}{1-\gamma} +  \frac{2\gamma(2\epsilon + c_n)}{ (1-\gamma)^2} \label{eq:guarantee-on-Vstar}.
\end{align}
Otherwise, it has the following limiting bound,
\begin{align}
    \limsup_{k \rightarrow \infty}\norm{V^{\pi_k} - V^*}_\infty \leq \frac{\delta}{(1-\gamma)^2} + \frac{2\gamma(2\epsilon + c_n)}{ (1-\gamma)^3} \label{eq:oscillating-Vstar}.
\end{align}
\end{restatable}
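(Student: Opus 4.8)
The plan is to establish a one-step error bound relating the policy $\pi_{k+1}$ produced by the approximate greedy update to the true value functions $V^{\pi_k}$ and $V^*$, and then to chain these bounds either along a convergent tail or along an oscillating sequence in the standard manner of approximate policy iteration analysis \citep{BertsekasTsitsiklis96, scherrer2013improved}. The three sources of error that must be tracked are: (i) the truncation error from running only $n$ fixed-point iterations of $\gF_{\pi_k}$ starting from $d_0 = \vzero$ instead of computing $d^\sim_{\pi_k}$ exactly, (ii) the aggregation error from replacing $\gS$ by $\widetilde{\gS}$ under the metric $\widehat{d}_{\pi_k}$, which is controlled via (\ref{eq:vfa-vpi}) — but with the \emph{approximate} metric, not the true one — and (iii) the greedy-improvement slack $\delta$.

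First I would control the metric truncation. Since $\gF_{\pi_k}$ is a $\gamma$-contraction in $\norm{\cdot}_\infty$ on $\mathfrak{met}(\gS)$ (this is exactly what Lemma~\ref{lem:p-wass-difference-bound} and the Banach argument behind Theorem~\ref{thm:p-wass} give, using $c_T = \gamma$), and since $\norm{d^\sim_{\pi_k} - d_0}_\infty = \norm{d^\sim_{\pi_k}}_\infty \leq 1/(1-\gamma)$ (rewards lie in $[0,1]$, so the fixed point is bounded by the geometric series), we get $\norm{\widehat{d}_{\pi_k} - d^\sim_{\pi_k}}_\infty \leq \gamma^n/(1-\gamma) = c_n$. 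Next I would upgrade the VFA bound (\ref{eq:vfa-vpi}): the abstraction $\Phi_k$ satisfies $\Phi_k(\vs_i) = \Phi_k(\vs_j) \Rightarrow \widehat{d}_{\pi_k}(\vs_i, \vs_j) \leq 2\epsilon \Rightarrow d^\sim_{\pi_k}(\vs_i,\vs_j) \leq 2\epsilon + 2c_n$, so $\Phi_k$ is a valid $(\epsilon + c_n)$-aggregation under the \emph{true} metric $d^\sim_{\pi_k}$, and (\ref{eq:vfa-vpi}) with $c_R = 1$ yields $\norm{V^{\pi_k} - \widetilde{V}^{\pi_k}_{\Phi_k}}_\infty \leq \frac{2\epsilon + 2c_n}{1-\gamma} = \frac{2(\epsilon + c_n)}{1-\gamma}$. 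Call this quantity $\beta$.

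Then I would assemble the per-step approximate-PI bound. Writing $\widetilde V^{\pi_k}_{\Phi_k}$ as an approximation $U_k$ to $V^{\pi_k}$ with $\norm{U_k - V^{\pi_k}}_\infty \leq \beta$, and using that $\pi_{k+1}$ is $\delta$-greedy w.r.t.\ $U_k$, a routine manipulation with the Bellman operators (inserting $T_{\pi_{k+1}} U_k$, $T U_k$, bounding $\norm{T U_k - T V^{\pi_k}}_\infty \leq \gamma\beta$ and $\norm{T_{\pi_{k+1}} U_k - T_{\pi_{k+1}} V^{\pi_k}}_\infty \leq \gamma\beta$, and using $T V^{\pi_k} \geq T_{\pi_k} V^{\pi_k} = V^{\pi_k}$) gives $\norm{V^* - V^{\pi_{k+1}}}_\infty \leq \gamma \norm{V^* - V^{\pi_k}}_\infty + \frac{\delta + 2\gamma\beta}{1-\gamma}$, which is the familiar $\gamma$-geometric recursion with additive error $\frac{\delta + 2\gamma\beta}{1-\gamma} = \frac{\delta}{1-\gamma} + \frac{2\gamma \cdot 2(\epsilon+c_n)}{(1-\gamma)^2}$. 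If $\pi_k \to \overline\pi$ then taking limits in the recursion (both $V^{\pi_k} \to V^{\overline\pi}$ by continuity of $\pi \mapsto V^\pi$ and the recursion persists in the limit) and solving the fixed-point inequality $x \leq \gamma x + (\text{err})$ gives (\ref{eq:guarantee-on-Vstar}); if it oscillates, unrolling the recursion and taking $\limsup$ gives $\limsup_k \norm{V^* - V^{\pi_k}}_\infty \leq \frac{1}{1-\gamma}\cdot(\text{err})$, which is (\ref{eq:oscillating-Vstar}) — the extra factor $(1-\gamma)^{-1}$ distinguishing the two cases.

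\textbf{The main obstacle} I anticipate is getting the per-step recursion constants to line up \emph{exactly} with the stated bounds — in particular making sure $\widetilde V^{\pi_k}$ from $\mathrm{PolicyEvaluation}$ on $\widetilde{\gS}$ is interpreted so that $\widetilde V^{\pi_k}_{\Phi_k}$ is precisely the quantity appearing in (\ref{eq:vfa-vpi}) (the aggregated-MDP value function composed with $\Phi_k$, with the per-partition averaging of (\ref{eq:aggregation}) consistent with the measure used), and being careful that the greedy step acts on $\widetilde V^{\pi_k}_{\Phi_k} \in \gB(\gS)$ rather than on $V^{\pi_k}$ — the $2\gamma\beta$ term, not $\gamma\beta$, is what produces the factor of $2$ in the numerator. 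A secondary subtlety is justifying, in the convergent case, that one may pass to the limit in the one-step inequality: this needs continuity of $\pi \mapsto V^\pi$ and of the greedy slack, or alternatively one can avoid it by noting that if $\pi_k \to \overline\pi$ then for large $k$ the bound $\norm{V^* - V^{\pi_k}}_\infty$ is within $\epsilon'$ of $\norm{V^* - V^{\overline\pi}}_\infty$ and letting $\epsilon' \to 0$ after solving the recursion.
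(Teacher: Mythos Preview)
Your approach is essentially the paper's: bound the metric truncation error via the contraction property, feed this into the VFA bound to control policy-evaluation error, then invoke the standard Bertsekas API bounds (the paper packages the last step as Lemma~\ref{lem:PI-bound-Bertsekas}). Two corrections are needed to make the constants line up and to close the convergent case.

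First, the arithmetic slip: from $\norm{\widehat{d}_{\pi_k} - d^\sim_{\pi_k}}_\infty \leq c_n$ you get $d^\sim_{\pi_k}(\vs_i,\vs_j) \leq \widehat{d}_{\pi_k}(\vs_i,\vs_j) + c_n \leq 2\epsilon + c_n$, not $2\epsilon + 2c_n$. So the correct policy-evaluation bound is $\beta = (2\epsilon + c_n)/(1-\gamma)$, and with this $\beta$ your per-step recursion produces exactly (\ref{eq:oscillating-Vstar}). The paper obtains the same $\beta$ directly via Lemma~\ref{lem:approximating-Vpi-w-error}.

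Second, and more substantively: your treatment of the convergent case does not deliver (\ref{eq:guarantee-on-Vstar}). Passing to the limit in the recursion $a_{k+1} \leq \gamma a_k + e$ with $e = (\delta + 2\gamma\beta)/(1-\gamma)$ gives $a \leq e/(1-\gamma)$, which is the \emph{same} bound as the oscillating case --- there is no ``extra factor $(1-\gamma)^{-1}$ distinguishing the two cases'' coming out of that argument. The tighter convergent bound $\norm{V^{\overline\pi} - V^*}_\infty \leq (\delta + 2\gamma\beta)/(1-\gamma)$ is a separate result (Proposition~2.4.5 of \citet{bertsekas2018abstract}): once $\pi_k = \overline\pi$ stabilizes, $\overline\pi$ is itself $\delta$-greedy with respect to a $\beta$-approximation of its own value function $V^{\overline\pi}$, and one bounds $\norm{V^* - V^{\overline\pi}}_\infty$ directly from $T_{\overline\pi} V \geq TV - \delta$ and $\norm{V - V^{\overline\pi}}_\infty \leq \beta$ without iterating the recursion. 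The paper simply cites this as part of Lemma~\ref{lem:PI-bound-Bertsekas}.
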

\noindent Here $\mathrm{HardAggregation}$ yields a partitioning $\widetilde{\gS}$ of $\gS$ under $\widehat{d}_{\pi_k}$ with partition radius at most $\epsilon$ (see Appendix \ref{sec:aggregation} for pseudocode of an implementation for finite $\gS$). As $\epsilon \rightarrow 0$, policy evaluation becomes 
increasingly accurate. The number of metric learning updates $n$ also provides a trade-off between run-time and policy performance guarantees. Larger $n$ implies more accurate approximations of $d_{\pi_k}^\sim$ due to (\ref{eq:fp-iteration}). This translates to a better bound on the worst-case error on $V^*$ due to the $c_n$ term in (\ref{eq:guarantee-on-Vstar}-\ref{eq:oscillating-Vstar}). Note that metric learning errors, as well as errors due to stochastic approximation of $R_\pi$ or inexact environment dynamics can be absorbed in $\epsilon$; see Lemma \ref{thm:p-Wass-decomp} in Appendix \ref{sec:proofs} for a decomposition of error terms.

For ease of analysis, this procedure naively learns the approximate metric $\widehat{d}_{\pi_{k}}$ from scratch after each policy update (see (\ref{eq:fp-iteration})). In practice, we may wish to warm-start metric learning with updates ${\widehat{d}_{\pi_{k}} \leftarrow \gF_{\pi_k}^{(n)}(\widehat{d}_{\pi_{k-1}})}$ to approximate $d_{\pi_k}^\sim$ in fewer iterations $n$, or we may be learning a parametrized metric via gradient descent as in the DBC algorithm \citep{zhang2021learning}. To understand the tradeoff of such metric updates, we next derive a bound on how much the $\pi$-bisimulation metric changes when the underlying policy is changed.
\begin{restatable}[Comparing $\pi$-bisimulation metrics of different policies]{lemma}{diffpolicybisim}
\label{lem:distance-difference}
Let $\pi, \pi^\prime$ be a pair of policies and $d_\pi^\sim, d_{\pi^\prime}^\sim$ corresponding $\pi$-bisimulation metrics given by $p \in [1, \infty)$ and $\lambda=0$. The difference between $d_\pi^\sim$ and $d_{\pi^\prime}^\sim$ is bounded by ${D_{\mathrm{TV}}^\infty\left(\pi, \pi^\prime\right)=\sup_{\vs \in \gS}D_{\mathrm{TV}}(\pi(\vs), \pi^\prime(\vs))}$, the worst-case total variation distance of $\pi$ and $\pi^\prime$:
\begin{align}
\label{eq:comparing-dpi}
\norm{d_\pi^\sim - d_{\pi^\prime}^\sim}_\infty \leq \frac{2c_R}{(1-c_T)^2} D_{\mathrm{TV}}^\infty\left(\pi, \pi^\prime\right)^{\frac{1}{p}}.
\end{align}
\end{restatable}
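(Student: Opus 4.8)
The plan is to exploit that $d^\sim_\pi$ and $d^\sim_{\pi'}$ are the unique fixed points of $\gF_\pi$ and $\gF_{\pi'}$ respectively (Theorem~\ref{thm:p-wass} with $\lambda=0$), and to decompose
\begin{align*}
d^\sim_\pi - d^\sim_{\pi'} &= \gF_\pi(d^\sim_\pi) - \gF_{\pi'}(d^\sim_{\pi'}) \\
&= \big(\gF_\pi(d^\sim_\pi) - \gF_\pi(d^\sim_{\pi'})\big) + \big(\gF_\pi(d^\sim_{\pi'}) - \gF_{\pi'}(d^\sim_{\pi'})\big).
\end{align*}
The first bracket is controlled by the contraction property: the two applications of $\gF_\pi$ differ only in their $W_p$ arguments, so Lemma~\ref{lem:p-wass-difference-bound} gives $\norm{\gF_\pi(d^\sim_\pi) - \gF_\pi(d^\sim_{\pi'})}_\infty \le c_T \norm{d^\sim_\pi - d^\sim_{\pi'}}_\infty$. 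Hence, once the second bracket is bounded by $\tfrac{2c_R}{1-c_T}\, D_{\mathrm{TV}}^\infty(\pi,\pi')^{1/p}$ uniformly over state pairs, the self-referential inequality $\norm{d^\sim_\pi - d^\sim_{\pi'}}_\infty \le c_T\norm{d^\sim_\pi - d^\sim_{\pi'}}_\infty + \tfrac{2c_R}{1-c_T}\, D_{\mathrm{TV}}^\infty(\pi,\pi')^{1/p}$ rearranges to (\ref{eq:comparing-dpi}).

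To bound $\gF_\pi(d^\sim_{\pi'})(\vs_i,\vs_j) - \gF_{\pi'}(d^\sim_{\pi'})(\vs_i,\vs_j)$ I would split it into a reward part and a transition part. For the rewards, $\big||R_\pi(\vs_i)-R_\pi(\vs_j)| - |R_{\pi'}(\vs_i)-R_{\pi'}(\vs_j)|\big| \le |R_\pi(\vs_i)-R_{\pi'}(\vs_i)| + |R_\pi(\vs_j)-R_{\pi'}(\vs_j)|$, and since $R(\vs,\cdot)\in[0,1]$ each difference equals $\big|\mathbb{E}_{\va\sim\pi(\vs)}[R(\vs,\va)] - \mathbb{E}_{\va\sim\pi'(\vs)}[R(\vs,\va)]\big| \le D_{\mathrm{TV}}(\pi(\vs),\pi'(\vs)) \le D_{\mathrm{TV}}^\infty(\pi,\pi')$, contributing $2c_R D_{\mathrm{TV}}^\infty(\pi,\pi')$.

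The transition part is where the real work is. First I would use the triangle inequality for $W_p(d^\sim_{\pi'})$ — valid because with $\lambda=0$ this is the genuine $p$-Wasserstein distance, hence a pseudometric on $\gP_p(\gS)$ (this is also why the lemma is stated for $\lambda=0$) — to get $\big|W_p(d^\sim_{\pi'})(\gP_\pi(\vs_i),\gP_\pi(\vs_j)) - W_p(d^\sim_{\pi'})(\gP_{\pi'}(\vs_i),\gP_{\pi'}(\vs_j))\big| \le W_p(d^\sim_{\pi'})(\gP_\pi(\vs_i),\gP_{\pi'}(\vs_i)) + W_p(d^\sim_{\pi'})(\gP_\pi(\vs_j),\gP_{\pi'}(\vs_j))$. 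To bound a single such term $W_p(d^\sim_{\pi'})(\gP_\pi(\vs),\gP_{\pi'}(\vs))$ I would (i) note that the fixed-point equation together with $R\in[0,1]$ forces $\norm{d^\sim_{\pi'}}_\infty \le c_R/(1-c_T)$; (ii) couple $\gP_\pi(\vs)$ and $\gP_{\pi'}(\vs)$ by an optimal-TV (maximal) coupling, under which the two draws differ with probability $D_{\mathrm{TV}}(\gP_\pi(\vs),\gP_{\pi'}(\vs))$ and, on that event, pay cost at most $\norm{d^\sim_{\pi'}}_\infty^p$ (using $d^\sim_{\pi'}(\vz,\vz)=0$), giving $W_p(d^\sim_{\pi'})(\gP_\pi(\vs),\gP_{\pi'}(\vs)) \le \norm{d^\sim_{\pi'}}_\infty D_{\mathrm{TV}}(\gP_\pi(\vs),\gP_{\pi'}(\vs))^{1/p}$; and (iii) observe that $\gP_\pi(\vs),\gP_{\pi'}(\vs)$ are pushforwards of $\pi(\vs),\pi'(\vs)$ through the same Markov kernel $\gP(\vs,\cdot)$, so TV contracts: $D_{\mathrm{TV}}(\gP_\pi(\vs),\gP_{\pi'}(\vs)) \le D_{\mathrm{TV}}(\pi(\vs),\pi'(\vs)) \le D_{\mathrm{TV}}^\infty(\pi,\pi')$. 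Collecting terms, the second bracket is at most $2c_R D_{\mathrm{TV}}^\infty(\pi,\pi') + \tfrac{2c_T c_R}{1-c_T} D_{\mathrm{TV}}^\infty(\pi,\pi')^{1/p}$; since $D_{\mathrm{TV}}^\infty(\pi,\pi') \le 1$ and $p\ge1$ we have $D_{\mathrm{TV}}^\infty(\pi,\pi') \le D_{\mathrm{TV}}^\infty(\pi,\pi')^{1/p}$, and $c_R + \tfrac{c_T c_R}{1-c_T} = \tfrac{c_R}{1-c_T}$, which is exactly the bound needed in the first paragraph; dividing the self-referential inequality through by $1-c_T$ yields (\ref{eq:comparing-dpi}). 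I expect step (ii) — the maximal-coupling inequality $W_p(d)(\mu,\nu)\le\norm{d}_\infty D_{\mathrm{TV}}(\mu,\nu)^{1/p}$ for a bounded pseudometric — to be the main obstacle, together with checking that the TV-contraction in step (iii) requires nothing beyond $\gP(\cdot|\vs,\va)$ being a probability kernel; the rest is bookkeeping.
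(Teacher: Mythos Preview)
Your proposal is correct and follows essentially the same route as the paper. The paper compares $d^\sim_\pi=\gF_\pi(d^\sim_\pi)$ with $d^\sim_{\pi'}=\gF_{\pi'}(d^\sim_{\pi'})$ directly and handles the simultaneous change of ground metric and marginals in the Wasserstein term via Corollary~\ref{cor:wass-approx-lemma} (which is exactly triangle inequality plus Lemma~\ref{lem:p-wass-difference-bound}); your add-and-subtract of $\gF_\pi(d^\sim_{\pi'})$ just performs that same split one level up. Your steps (i)--(iii) are precisely the content of the paper's auxiliary Lemmas~\ref{lem:tv-pi-P}--\ref{lem:P-pi-wasserstein-bound}, and the final self-referential rearrangement is identical.
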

\noindent 
Here, (\ref{eq:comparing-dpi}) guarantees that small policy updates lead to small changes in the $\pi$-bisimulation metric.
Thus, we conjecture that restricting the policy update size should keep the metric learning objective stable and result in a better performance guarantee when warm-starting is used for faster metric learning. Indeed, inspired by \citet{scherrer2014approximate}, we write an API($\alpha$) procedure, which constrains policy updates such that ${D_{\mathrm{TV}}^\infty(\pi_{k+1}, \pi_{k}) \leq \alpha, ~\forall k \in \mathbb{N}}$. Given such updates, we are guaranteed to have ${\norm{d_{\pi_k}^\sim - d_{\pi_{k-1}}^\sim}_\infty \leq 2c_R\alpha^{\frac{1}{p}}/(1-c_T)^2}$ by (\ref{eq:comparing-dpi}), which can be leveraged to ensure that warm-starting metric learning updates provides a better asymptotic bound than (\ref{eq:oscillating-Vstar}) under some conditions. Before that, we pause for another lemma, which generalizes Propositions 2.4.3 and 2.4.4 of \citet{bertsekas2018abstract}.
\begin{restatable}[Generalized API($\alpha$) bounds]{lemma}{mixturepolicybound}
\label{lem:mixturebound}
Let $V \in \gB(\gS)$ and policies $\pi, \pi^\prime, \pi_g$ satisfy the following:
\begin{align*}
\norm{V^\pi - V}_\infty \leq \delta_{\mathrm{PE}} \\    
\norm{T_{\pi_g}V - TV}_\infty \leq \delta_{\mathrm{GI}} \\
\pi^\prime = \alpha \pi_g + (1-\alpha) \pi,
\end{align*}
where $\alpha \in [0, 1]$. Then,
\begin{align}
\label{eq:conservative-progress}
    \norm{V^{\pi^\prime} - V^*}_\infty \leq (1 - \alpha  + \alpha \gamma) \norm{V^{\pi} - V^*}_\infty + \alpha \frac{\delta_{\mathrm{GI}} + 2\gamma \delta_{\mathrm{PE}}}{1-\gamma}.
\end{align}
Next, consider an API($\alpha$) algorithm that generates a sequence of policies $\{\pi_k\}_{k \in \mathbb{N}}$ via functions $\{V_k\}_{k \in \mathbb{N}}$ with policy evaluation error ${\norm{V^{\pi_k} - V_k}_\infty \leq \delta_{\mathrm{PE},k}}$, approximate greedy updates with error ${\norm{T_{\pi_{g,k}}V_k - TV_k}_\infty \leq \delta_{\mathrm{GI,k}}}$ and policy updates $\pi_{k+1} \leftarrow \alpha \pi_{g,k} + (1 - \alpha) \pi_k$. 
For any $\alpha \in (0, 1]$,
\begin{align}
\label{eq:api-alpha-bound}
    \limsup_{k 
    \rightarrow \infty}\norm{V^{\pi_k} - V^*}_\infty \leq \frac{\limsup_{k \rightarrow \infty}  \delta_{\mathrm{GI},k} + 2\gamma \delta_{\mathrm{PE},k}}{(1-\gamma)^2}.
\end{align}
\end{restatable}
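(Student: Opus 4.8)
The plan is to reduce the whole statement to the one-step inequality (\ref{eq:conservative-progress}) and then iterate. Once (\ref{eq:conservative-progress}) is available, instantiating it at $V=V_k$, $\pi=\pi_k$, $\pi_g=\pi_{g,k}$, $\pi'=\pi_{k+1}$ gives $\norm{V^{\pi_{k+1}}-V^*}_\infty \le \beta\norm{V^{\pi_k}-V^*}_\infty + \frac{\alpha}{1-\gamma}(\delta_{\mathrm{GI},k}+2\gamma\delta_{\mathrm{PE},k})$ with $\beta := 1-\alpha+\alpha\gamma = 1-\alpha(1-\gamma)$. Since rewards lie in $[0,1]$, the errors $e_k := \norm{V^{\pi_k}-V^*}_\infty$ are uniformly bounded by $(1-\gamma)^{-1}$, so taking $\limsup_{k\to\infty}$ on both sides and using subadditivity of $\limsup$ (together with $\limsup_k \beta e_k = \beta\limsup_k e_k$, as $\beta\ge 0$) yields $\limsup_k e_k \le \beta\limsup_k e_k + \frac{\alpha}{1-\gamma}\limsup_k(\delta_{\mathrm{GI},k}+2\gamma\delta_{\mathrm{PE},k})$; because $\alpha\in(0,1]$ forces $1-\beta = \alpha(1-\gamma)>0$, rearranging gives (\ref{eq:api-alpha-bound}). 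Thus all the work is in (\ref{eq:conservative-progress}).

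For (\ref{eq:conservative-progress}) I would set $\bar\delta := \delta_{\mathrm{GI}}+2\gamma\delta_{\mathrm{PE}}$, $\Delta := V^*-V^\pi \ge \vzero$, fix an optimal policy $\pi^*$ (so $V^*=T_{\pi^*}V^*$), and run four steps. (i) Because $\pi'=\alpha\pi_g+(1-\alpha)\pi$, the reward vectors and transition kernels mix linearly, so $T_{\pi'}=\alpha T_{\pi_g}+(1-\alpha)T_\pi$ as affine maps on $\gB(\gS)$; in particular $T_\pi V^\pi=V^\pi$. (ii) Propagate the greedy error from $V$ to $V^\pi$: by the triangle inequality and the $\gamma$-contraction property of $T_{\pi_g}$ and $T$, $\norm{T_{\pi_g}V^\pi-TV^\pi}_\infty \le \gamma\delta_{\mathrm{PE}}+\delta_{\mathrm{GI}}+\gamma\delta_{\mathrm{PE}}=\bar\delta$, hence $T_{\pi_g}V^\pi \ge TV^\pi-\bar\delta\vone$. (iii) A near-monotonicity bound: using (i)--(ii) and $TV^\pi\ge T_\pi V^\pi=V^\pi$, we get $T_{\pi'}V^\pi-V^\pi=\alpha(T_{\pi_g}V^\pi-V^\pi)\ge -\alpha\bar\delta\vone$; applying the positive operator $(I-\gamma\gP_{\pi'})^{-1}$ to the identity $V^{\pi'}-V^\pi=(I-\gamma\gP_{\pi'})^{-1}(T_{\pi'}V^\pi-V^\pi)$ gives $V^{\pi'}\ge V^\pi-\frac{\alpha\bar\delta}{1-\gamma}\vone$. (iv) Telescope $V^*-V^{\pi'}=T_{\pi^*}V^*-T_{\pi'}V^{\pi'}$ through $T_{\pi^*}V^\pi$ and $T_{\pi'}V^\pi$ into three pieces: the first is $\gamma\gP_{\pi^*}\Delta$; the third is $\gamma\gP_{\pi'}(V^\pi-V^{\pi'})\le \frac{\gamma\alpha\bar\delta}{1-\gamma}\vone$ by (iii); the middle one is controlled via $T_{\pi_g}V^\pi\ge TV^\pi-\bar\delta\vone\ge T_{\pi^*}V^\pi-\bar\delta\vone$ and the identity $T_{\pi^*}V^\pi-V^\pi=(I-\gamma\gP_{\pi^*})\Delta$, giving $T_{\pi^*}V^\pi-T_{\pi'}V^\pi\le (1-\alpha)(I-\gamma\gP_{\pi^*})\Delta+\alpha\bar\delta\vone$. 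Summing, the $\gP_{\pi^*}$ terms collapse to $(1-\alpha)\Delta+\alpha\gamma\gP_{\pi^*}\Delta$ (sup-norm $\le(1-\alpha+\alpha\gamma)\norm{\Delta}_\infty$, since $\vzero\le\Delta,\gP_{\pi^*}\Delta\le\norm{\Delta}_\infty\vone$), while the error terms add to $(\alpha+\frac{\gamma\alpha}{1-\gamma})\bar\delta=\frac{\alpha\bar\delta}{1-\gamma}$; as $V^{\pi'}\le V^*$, taking $\norm{\cdot}_\infty$ delivers (\ref{eq:conservative-progress}).

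The main obstacle is obtaining the \emph{sharp} contraction coefficient $1-\alpha+\alpha\gamma$ rather than the weaker $\frac{1-\alpha+\alpha\gamma}{1-\gamma}$ that naive arguments produce: iterating $T_{\pi'}$ from $V^\pi$, or applying $(I-\gamma\gP_{\pi'})^{-1}$ to a bound on $V^*-T_{\pi'}V^*$, each inflates the $\norm{\Delta}_\infty$ term by a factor $(1-\gamma)^{-1}$. Avoiding this is exactly why step (iv) keeps $\gamma\gP_{\pi^*}\Delta$ as an operator expression, so it cancels against the $-(1-\alpha)\gamma\gP_{\pi^*}\Delta$ inside the middle piece before any norm is taken, leaving only $(1-\alpha)\Delta+\alpha\gamma\gP_{\pi^*}\Delta$, whose sup-norm is $(1-\alpha+\alpha\gamma)\norm{\Delta}_\infty$ instead of $(1-\alpha)\norm{\Delta}_\infty+\gamma\norm{\Delta}_\infty$. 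A second subtlety is that a conservative update need not improve the policy, so step (iii) is essential to absorb a possible drop of $V^{\pi'}$ below $V^\pi$ into the error at the correct order, which is what turns $\alpha\bar\delta$ into $\frac{\alpha\bar\delta}{1-\gamma}$. Everything else — the mixing identity (i), the triangle inequality (ii), the resolvent estimates, and the $\limsup$ step — is routine; as a sanity check, $\alpha=1$ collapses (\ref{eq:conservative-progress})--(\ref{eq:api-alpha-bound}) to the classical approximate policy iteration bounds of \citet{bertsekas2018abstract}.
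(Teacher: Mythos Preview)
Your proof is correct and hits the same two structural beats as the paper --- the mixing identity $T_{\pi'}=\alpha T_{\pi_g}+(1-\alpha)T_\pi$, and the near-monotonicity estimate $V^\pi-V^{\pi'}\le\frac{\alpha\bar\delta}{1-\gamma}\vone$ --- but the decomposition you use for the main inequality is genuinely different from the paper's. The paper writes $V^*-V^{\pi'}=TV^*-T_{\pi'}V^{\pi'}$ and immediately splits along the mixture into $\alpha(TV^*-T_{\pi_g}V^{\pi'})+(1-\alpha)(TV^*-T_\pi V^{\pi'})$, then bounds each summand by a sup-norm chain (the $(1-\alpha)$-piece yields $\norm{V^*-V^\pi}_\infty+\frac{\gamma\alpha e}{1-\gamma}$, the $\alpha$-piece yields $\gamma\norm{V^*-V^\pi}_\infty+e+\frac{\gamma\alpha e}{1-\gamma}$), so the sharp coefficient $(1-\alpha+\alpha\gamma)$ appears as a weighted average of $1$ and $\gamma$. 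You instead telescope through $T_{\pi^*}V^\pi$ and $T_{\pi'}V^\pi$ and keep $\gamma\gP_{\pi^*}\Delta$ as an operator expression until it cancels against the $-(1-\alpha)\gamma\gP_{\pi^*}\Delta$ coming from $T_{\pi^*}V^\pi-V^\pi=(I-\gamma\gP_{\pi^*})\Delta$, leaving $(1-\alpha)\Delta+\alpha\gamma\gP_{\pi^*}\Delta$ before any norm is taken. Your route is a bit cleaner (the resolvent argument for (iii) is shorter than the paper's sup-norm chain for the same fact), and it makes explicit \emph{why} the contraction coefficient is sharp, whereas the paper's $\alpha/(1-\alpha)$ split is more elementary and avoids fixing an optimal policy $\pi^*$. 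The $\limsup$ step is identical in both.
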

For ${\alpha=1}$, (\ref{eq:conservative-progress}) recovers Proposition 2.4.4 of \citet{bertsekas2018abstract} as a special case. (\ref{eq:api-alpha-bound}) follows by setting $\pi^\prime=\pi_{k+1}$, $\pi=\pi_k$ in (\ref{eq:conservative-progress}) and taking a limit superior on both sides; it proves that the same asymptotic bound as Proposition 2.4.3 of \citet{bertsekas2018abstract} holds for API($\alpha$) with arbitrary $\alpha \in (0, 1]$. Furthermore, it makes explicit that the oscillation amplitude due to PE and GI errors is modulated by $\alpha$. 
\begin{restatable}[API($\alpha$) with $\pi$-bisimulation]{theorem}{apialphabisim}
\label{thm:api-alpha-bisim} 
Under the same conventions as Thm. \ref{thm:api-bisim}, let ${n > \log(\frac{1-\gamma}{1+\gamma})/\log(\gamma)}$ and $\overline{c}_n=(1+\gamma)c_n$. Given $\alpha = \big(\overline{\alpha}(1-\overline{c}_n)(1-\gamma)/2\big)^p$ for some $\overline{\alpha} \in (0, 1]$, $\lambda = 0$ and any $p \in [1, \infty)$:
\begin{align}
    \widehat{d}_{\pi_{k}} &\leftarrow \gF_{\pi_k}^{(n)}(\widehat{d}_{\pi_{k-1}}) \label{eq:fixed-point-warmstart} \\
    \widetilde{\gS}, \Phi_k &\leftarrow \mathrm{HardAggregation}(\gS, \widehat{d}_{\pi_k}, \epsilon) \label{eq:hard-aggregation-alpha} \\
    \widetilde{V}^{\pi_k} &\leftarrow \mathrm{PolicyEvaluation}(\widetilde{\gS}, \pi_k) \label{eq:policy-eval-alpha} \\
    \pi_g &\leftarrow \mathrm{GreedyImprovement}(\widetilde{V}^{\pi_k}_{\Phi_k}, \delta) \label{eq:greedy-improvement-alpha} \\
    \pi_{k+1} &\leftarrow (1-\alpha)\pi_k + \alpha \pi_g \label{eq:conservative-update}.
\end{align}
The sequence $\{\pi_k\}_{k \in \mathbb{N}}$ has the following limiting bound,
\begin{align}
    \limsup_{k \rightarrow \infty}\norm{V^{\pi_k} - V^*}_\infty \leq \frac{\delta}{(1-\gamma)^2}  +\frac{2\gamma(2\epsilon + \overline{\alpha}c_n)}{ (1-\gamma)^3} \label{eq:oscillating-Vstar-alpha}.
\end{align}
\end{restatable}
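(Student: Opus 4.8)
The plan is to cast the algorithm (\ref{eq:fixed-point-warmstart})--(\ref{eq:conservative-update}) into the abstract API($\alpha$) template of Lemma~\ref{lem:mixturebound} and then control the single quantity specific to the bisimulation setting, the error of the warm-started metric iteration. Take $V_k = \widetilde{V}^{\pi_k}_{\Phi_k}$; then the greedy error is $\delta_{\mathrm{GI},k} = \norm{T_{\pi_g}\widetilde{V}^{\pi_k}_{\Phi_k} - T\widetilde{V}^{\pi_k}_{\Phi_k}}_\infty \le \delta$ by definition of $\mathrm{GreedyImprovement}$, and the evaluation error is $\delta_{\mathrm{PE},k} = \norm{V^{\pi_k} - \widetilde{V}^{\pi_k}_{\Phi_k}}_\infty$. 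Writing $\eta_k \coloneqq \norm{\widehat{d}_{\pi_k} - d_{\pi_k}^\sim}_\infty$ for the metric-iteration error, the partitions of $\Phi_k$ (built to have $\widehat{d}_{\pi_k}$-diameter at most $2\epsilon$) have $d_{\pi_k}^\sim$-diameter at most $2\epsilon + \eta_k$ by the triangle inequality, so the VFA guarantee (\ref{eq:vfa-vpi}) with $c_R=1$ (cf.\ the error decomposition of Lemma~\ref{thm:p-Wass-decomp}, which also absorbs evaluation/dynamics error into $\epsilon$) gives $\delta_{\mathrm{PE},k} \le (2\epsilon+\eta_k)/(1-\gamma)$. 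Since $\overline{\alpha}(1-\overline{c}_n)(1-\gamma)/2 \le 1/2 < 1$ we have $\alpha = \big(\overline{\alpha}(1-\overline{c}_n)(1-\gamma)/2\big)^p \in (0,1)$, so (\ref{eq:api-alpha-bound}) of Lemma~\ref{lem:mixturebound} applies and yields $\limsup_k\norm{V^{\pi_k}-V^*}_\infty \le \big(\delta + 2\gamma\limsup_k\delta_{\mathrm{PE},k}\big)/(1-\gamma)^2$. Hence (\ref{eq:oscillating-Vstar-alpha}) reduces to the claim $\limsup_k \eta_k \le \overline{\alpha}c_n$.

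For that, I would set up a one-step recursion in $\eta_k$. With $c_T=\gamma$, Lemma~\ref{lem:p-wass-difference-bound} makes each $\gF_{\pi_k}$ a $\gamma$-contraction on $\mathfrak{met}(\gS)$, and Theorem~\ref{thm:p-wass} identifies its fixed point as $d_{\pi_k}^\sim$, so the warm-started update (\ref{eq:fixed-point-warmstart}) satisfies $\eta_k = \norm{\gF_{\pi_k}^{(n)}(\widehat{d}_{\pi_{k-1}}) - d_{\pi_k}^\sim}_\infty \le \gamma^n\norm{\widehat{d}_{\pi_{k-1}} - d_{\pi_k}^\sim}_\infty \le \gamma^n\big(\eta_{k-1} + \norm{d_{\pi_{k-1}}^\sim - d_{\pi_k}^\sim}_\infty\big)$ by the triangle inequality. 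The conservative update (\ref{eq:conservative-update}), $\pi_k = (1-\alpha)\pi_{k-1}+\alpha\pi_g$, gives $D_{\mathrm{TV}}^\infty(\pi_{k-1},\pi_k) \le \alpha$ since total variation is convex in each argument and bounded by $1$, so Lemma~\ref{lem:distance-difference} yields $\norm{d_{\pi_{k-1}}^\sim - d_{\pi_k}^\sim}_\infty \le \frac{2}{(1-\gamma)^2}\alpha^{1/p} = \frac{2}{(1-\gamma)^2}\cdot\frac{\overline{\alpha}(1-\overline{c}_n)(1-\gamma)}{2} = \frac{\overline{\alpha}(1-\overline{c}_n)}{1-\gamma}$ — the value of $\alpha$ is chosen precisely so that this cancellation occurs. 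Using $\gamma^n = (1-\gamma)c_n$, the recursion collapses to $\eta_k \le \gamma^n \eta_{k-1} + \overline{\alpha}(1-\overline{c}_n)c_n$.

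The hypothesis $n > \log(\tfrac{1-\gamma}{1+\gamma})/\log\gamma$ says exactly that $\overline{c}_n = (1+\gamma)c_n < 1$, and $\gamma^n < 1$, so this recursion is a genuine contraction; unrolling it, the transient contribution of $\eta_0$ decays geometrically and $\limsup_k \eta_k \le \overline{\alpha}(1-\overline{c}_n)c_n/(1-\gamma^n)$. Finally $\gamma^n \le \overline{c}_n$ because $\overline{c}_n = (1+\gamma)\gamma^n/(1-\gamma)$ and $(1+\gamma)/(1-\gamma)\ge 1$, hence $1-\overline{c}_n \le 1-\gamma^n$ and $\limsup_k \eta_k \le \overline{\alpha}c_n$, as required. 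Plugging $\limsup_k \delta_{\mathrm{PE},k} \le (2\epsilon+\overline{\alpha}c_n)/(1-\gamma)$ and $\delta_{\mathrm{GI},k}\le\delta$ back into (\ref{eq:api-alpha-bound}) gives (\ref{eq:oscillating-Vstar-alpha}). Comparing with the naive bound (\ref{eq:oscillating-Vstar}) of Theorem~\ref{thm:api-bisim}, the metric term $c_n$ has been scaled by the free parameter $\overline{\alpha}\in(0,1]$, at the cost of smaller (hence slower) policy updates.

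I expect the main obstacle to be the constant bookkeeping of the last two paragraphs: the exact form of $\alpha$ and the lower bound on $n$ are forced by requiring the fixed point of the $\eta_k$-recursion to be at most $\overline{\alpha}c_n$, and this is tightly coupled to the $1/p$ exponent in Lemma~\ref{lem:distance-difference} (the reason $\alpha$ appears raised to the $p$-th power). A secondary point is that (\ref{eq:vfa-vpi}) is stated for abstractions built from the exact metric $d_{\pi_k}^\sim$, whereas $\Phi_k$ is built from the approximation $\widehat{d}_{\pi_k}$; handling this cleanly requires the decomposition of Lemma~\ref{thm:p-Wass-decomp} so that $\eta_k$ enters the effective aggregation radius rather than being lost. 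The remaining pieces — the $\gamma$-contraction estimate, the triangle inequality, and the convexity bound on total variation — are routine.
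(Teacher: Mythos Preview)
Your proposal is correct and follows the same overall architecture as the paper: cast the procedure into the API($\alpha$) template of Lemma~\ref{lem:mixturebound}, reduce to a $\limsup$ bound on the metric error $\eta_k=\norm{\widehat d_{\pi_k}-d_{\pi_k}^\sim}_\infty$, and feed the result through Lemma~\ref{lem:approximating-Vpi-w-error}. The one substantive difference is in how the recursion for $\eta_k$ is obtained. The paper uses the Banach a~priori estimate $\eta_k\le c_n\norm{\gF_{\pi_k}(\widehat d_{\pi_{k-1}})-\widehat d_{\pi_{k-1}}}_\infty$ and then bounds $\norm{\gF_{\pi_k}(\widehat d_{\pi_{k-1}})-d_{\pi_{k-1}}^\sim}_\infty$ by re-deriving the content of Lemma~\ref{lem:distance-difference} inline via Corollary~\ref{cor:wass-approx-lemma} and Lemma~\ref{lem:P-pi-wasserstein-bound}; this produces the recursion $\eta_k\le \overline c_n\,\eta_{k-1}+\frac{2\alpha^{1/p}c_n}{1-\gamma}$ with contraction rate $\overline c_n$. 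You instead use the direct contraction $\eta_k\le\gamma^n\norm{\widehat d_{\pi_{k-1}}-d_{\pi_k}^\sim}_\infty$ together with Lemma~\ref{lem:distance-difference} as a black box, obtaining $\eta_k\le\gamma^n\eta_{k-1}+\overline\alpha(1-\overline c_n)c_n$ with the smaller rate $\gamma^n$; the extra step $(1-\overline c_n)/(1-\gamma^n)\le 1$ then closes the gap. Your route is slightly more modular (it actually \emph{uses} Lemma~\ref{lem:distance-difference} rather than reproving it) and gives a tighter pre-limit recursion, while the paper's route makes the dependence on the individual ingredients (reward shift, transition shift, metric error) explicit at each step. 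Both yield the same $\limsup$ and hence the same final bound.
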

As expected, by exploiting the induced stability of the sequence $\{\widehat{d}_{\pi_k} \}_{k \in \mathbb{N}}$ via a small $\alpha$, we obtain a better asymptotic bound for the same $n$ as compared to Thm. \ref{thm:api-bisim} (since $\overline{\alpha} \leq 1$). The trade-off here is that setting $\alpha$ too small can slow down policy improvement and require more policy updates to attain the asymptotic bound. In particular, the number of steps $k$ necessary to attain a fixed worst-case error bound scales as $1/\alpha$ due to the contraction rate $1-\alpha+\alpha\gamma$ seen in (\ref{eq:conservative-progress}). Note that $\alpha=0$ would amount to no policy update and is therefore ruled out by assumption: $\overline{\alpha} \in (0, 1]$. We omit the convergence case here, although it may be possible to obtain a bound that scales similarly to (\ref{eq:guarantee-on-Vstar}) by following Proposition 2.4.5 of \citet{bertsekas2018abstract}.\footnote{\citet{BertsekasTsitsiklis96} note that convergence is quite uncommon unless approximation errors are extremely small.}

The bounds here are expressed in terms of the sup-norm for simplicity and represent the worst case. However stronger bounds in terms of $L_p$ norms can be derived following prior work, e.g., \citet{munos2003error, farahmand2010}. Indeed, as noted by \citet{bertsekas2011approximate}, the limiting bounds in (\ref{eq:oscillating-Vstar}) and (\ref{eq:oscillating-Vstar-alpha}) can be conservative and quickly attained in practice. Nevertheless, they provide insight about co-learning bisimulation metrics and policies when one does not assume a continuously improving policy. In Section \ref{sec:experiments}, we validate this point empirically with an implementation of the procedures described in Thms. \ref{thm:api-bisim} and \ref{thm:api-alpha-bisim}.

\subsection{Bridging Theory and Practice for Co-learning Policies and Bisimulation Metrics}

Recall that \citet{zhang2021learning} incorporated $\pi$-bisimulation metrics into the Soft actor-critic (SAC) algorithm \citep{haarnoja2018soft} via an auxiliary loss, which encourages the neural state representations $\phi(\vs)$ used by critic and (optionally) actor networks to respect an approximation $\widehat{d}_\pi$ of $d_\pi^\sim$ computed with replay buffer samples and learned dynamics, i.e., ${\norm{\phi(\vs_i) - \phi(\vs_j)} \approx \widehat{d}_\pi(\vs_i, \vs_j)}$. Such latent representations were shown to promote distraction invariance for continuous control tasks, which in turn improves sample efficiency and performance under heavy distraction over various representation learning baselines \citep{zhang2021learning}, as well as the vanilla SAC algorithm \citep{kemertas}. Similarly, MICo learned (in a value-based framework) a state encoder that respects another metric defined via $\mathbb{E}_{x_1 \sim \mu_1, x_2 \sim \mu_2}[d(x_1, x_2)]$ instead of $W_1(d)(\mu_1, \mu_2)$ for fast computation \citep{castro2021mico}, which we connected to $(1, \zeta)$-Sinkhorn distances in Sec. \ref{sec:optimal-transport-speedup} by taking a limit ${\zeta \rightarrow \infty}$. Here, we describe how the algorithm given in Thm. \ref{thm:api-alpha-bisim} maps onto actor-critic approaches used in applications.

The $n$-step fixed-point update given in (\ref{eq:fixed-point-warmstart}) is feasible for sufficiently small finite $\gS$, but not for large or continuous $\gS$. Thus DBC computes the fixed-point target for a batch of states in a continuous setting, where ${\widehat{d}_{\pi_{k}}(\vs_i, \vs_j) \coloneqq \norm{\phi_k(\vs_i) - \phi_k(\vs_j)}}$ and a sequence of $n$ fixed-point updates is replaced by a gradient update $\phi_{k+1} \leftarrow \phi_k + w \nabla J(\phi_k)$ with step-size $w$ for a loss,
\begin{align*}
\label{eq:dbc-loss}
    J(\phi_{k}) = \frac{1}{2} \mathbb{E} \left[ \left( \widehat{d}_{\pi_{k}}(\vs_i, \vs_j) - |R_\pi(\vs_i) - R_\pi(\vs_j)| - \gamma W_2(\widehat{d}_{\pi_{k}})(\widehat{\gP}_\pi(\vs_i), \widehat{\gP}_\pi(\vs_j)) \right)^2 \right], 
\end{align*}
where the expectation is estimated with replay buffer samples and $\widehat{\gP}$ is a latent Gaussian model over $\phi$ space.

\newcommand\szz{0.338}
\begin{figure*}
\centering
\begin{minipage}{\szz\textwidth}
  \centering
    \includegraphics[width=1.\linewidth]{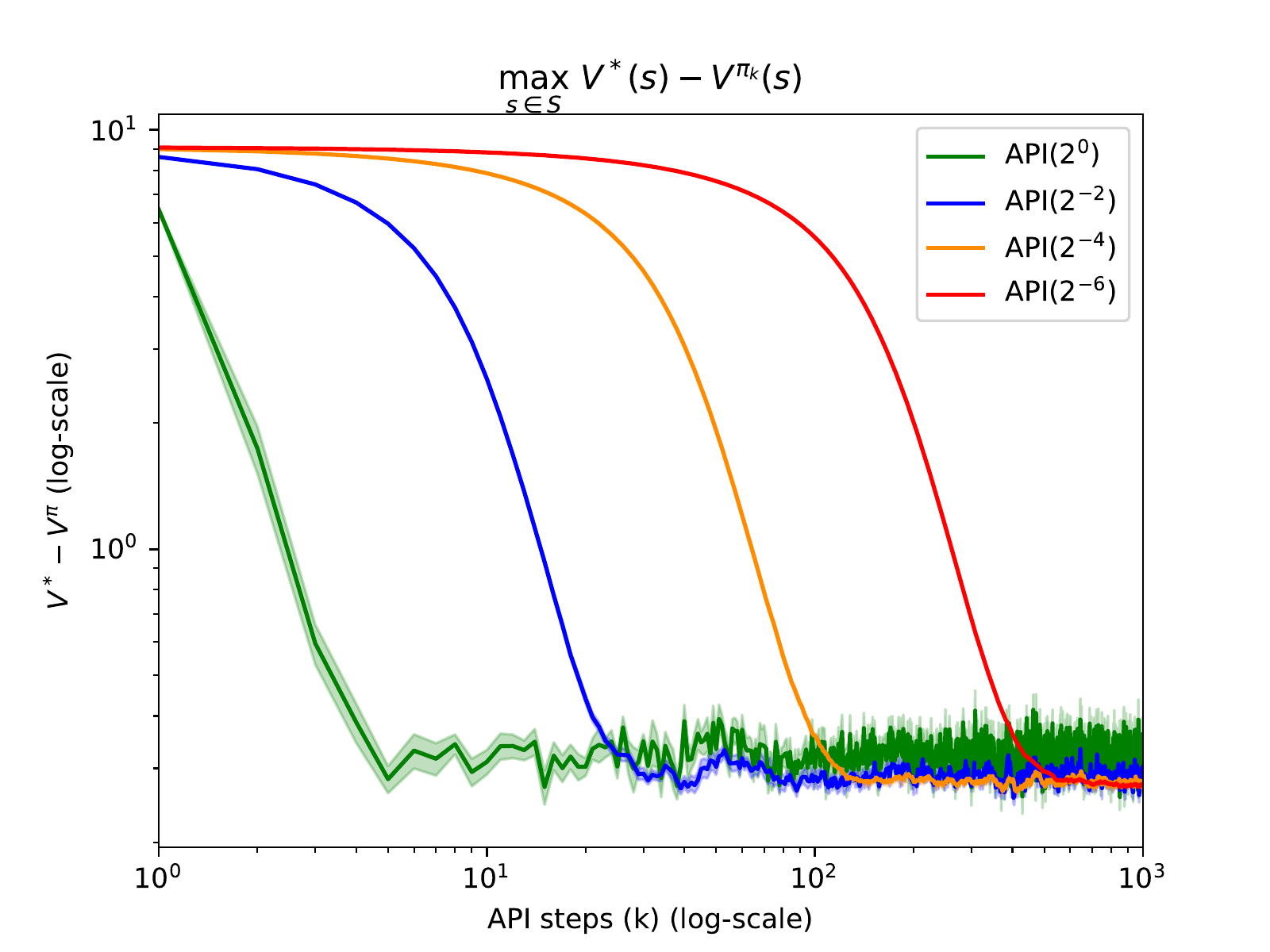}
\end{minipage}
\hspace{-10pt} %
\begin{minipage}{\szz\textwidth}
  \centering
    \includegraphics[width=1.\linewidth]{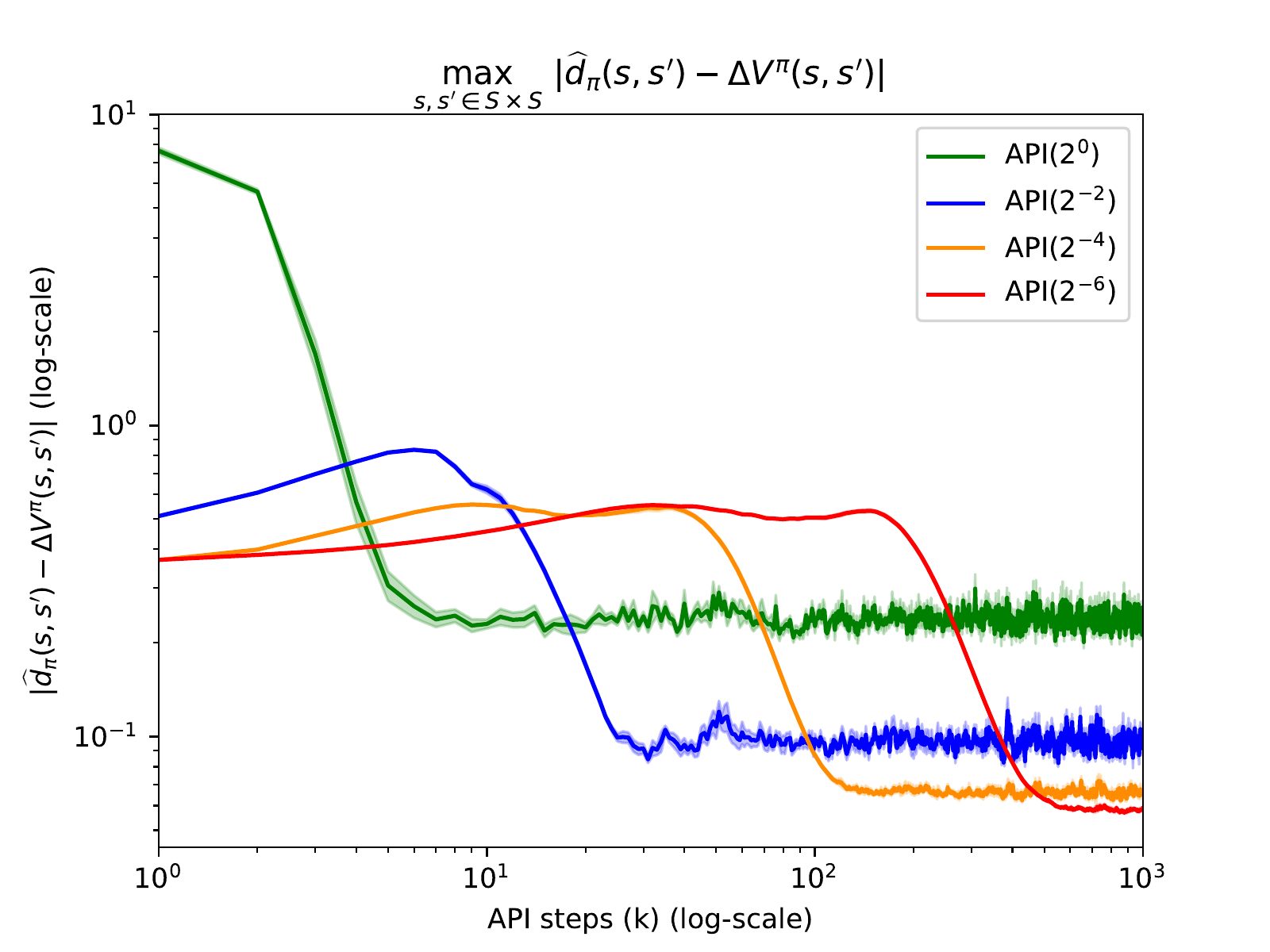}
\end{minipage}
\hspace{-10pt} %
\begin{minipage}{\szz
\textwidth}
  \centering
  \includegraphics[width=1.\linewidth]{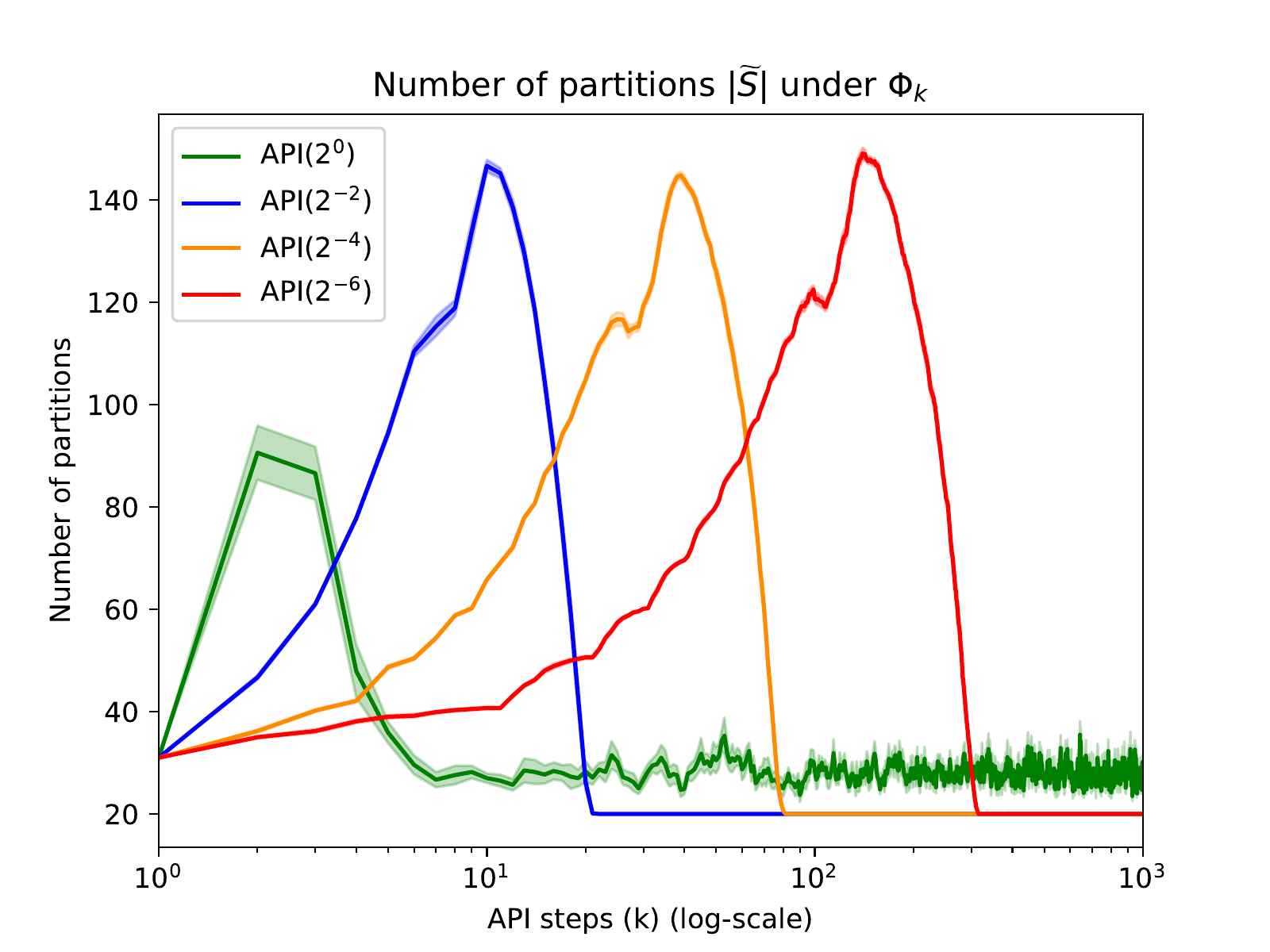}
\end{minipage}
\begin{minipage}{\szz\textwidth}
  \centering
    \includegraphics[width=1.\linewidth]{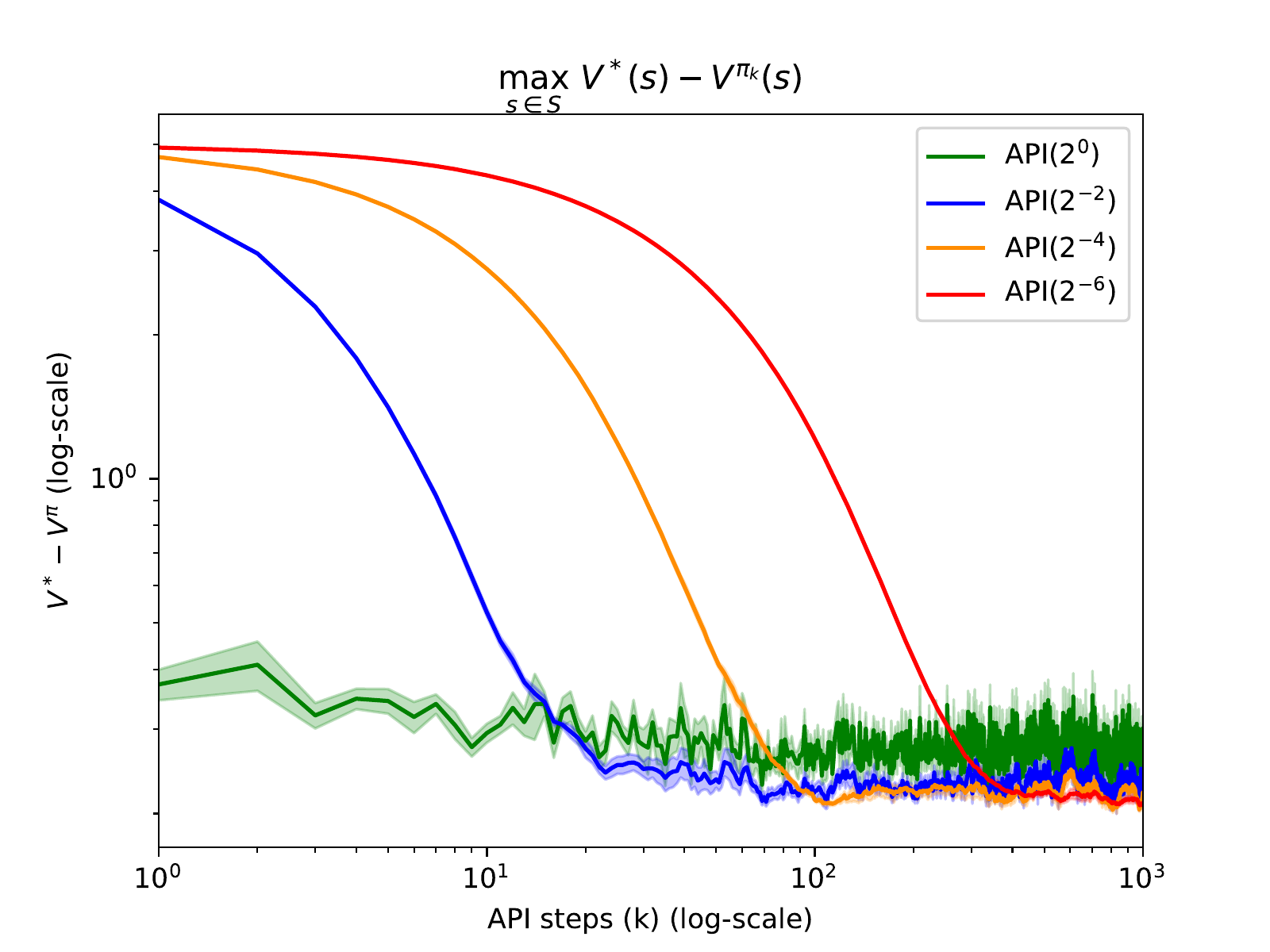}
\end{minipage}
\hspace{-10pt} %
\begin{minipage}{\szz\textwidth}
  \centering
    \includegraphics[width=1.\linewidth]{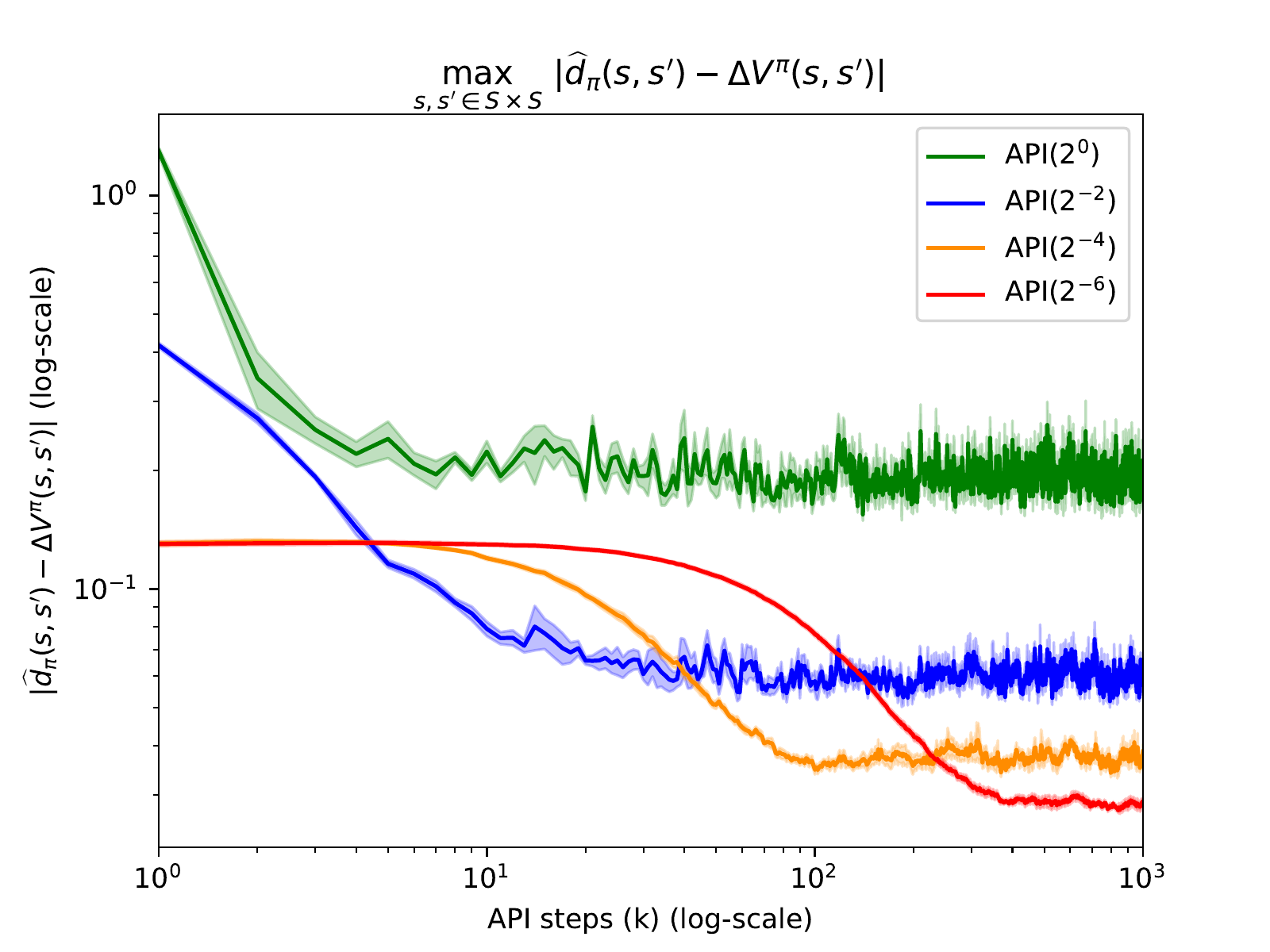}
\end{minipage}
\hspace{-10pt} %
\begin{minipage}{\szz
\textwidth}
  \centering
  \includegraphics[width=1.\linewidth]{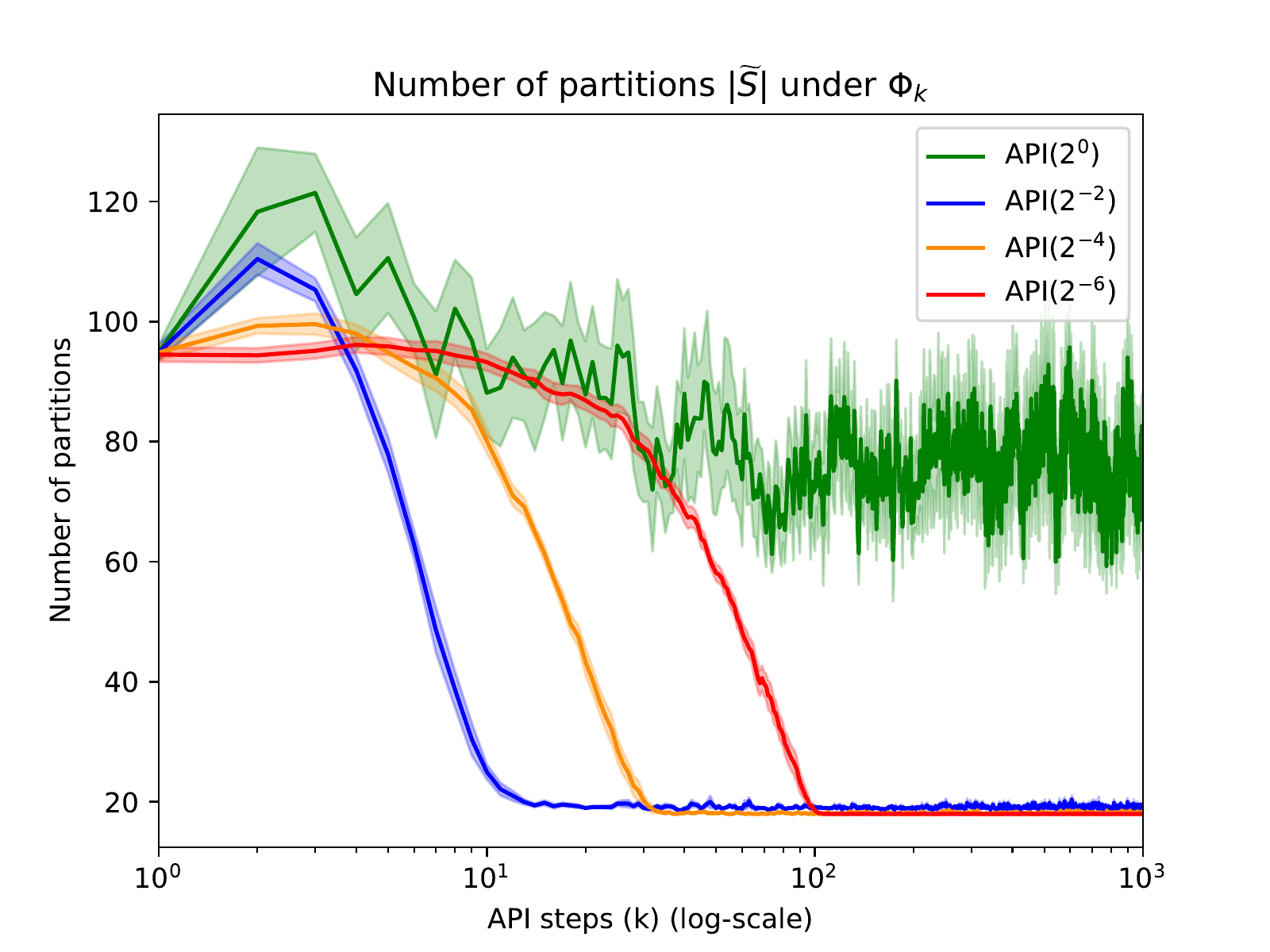}
\end{minipage}
\caption{Ablation of $\alpha$ for the algorithm analyzed in Thm. \ref{thm:api-alpha-bisim} for two MDPs (top and bottom rows). \textbf{(Left)} While the algorithm reaches a similar performance range for all $\alpha$, we observe more oscillatory behavior for higher values of $\alpha$, which aligns with the ordering of asymptotic worst-case performance (limit superior) predicted by (\ref{eq:oscillating-Vstar-alpha}). The trade-off of reduced rate of improvement is also seen clearly. \textbf{(Middle)} The metric learned by lower $\alpha$ better approximates the value difference $\Delta V^\pi(\vs, \vs^\prime) = |V^\pi(\vs) - V^\pi(\vs^\prime)|$ both in terms of final error and variance in the limit. \textbf{(Right)} All ${\alpha<1}$ shown here converge to a (near-)optimal partitioning of $\gS$ with ${|\widetilde{\gS}|=m}$, but ${\alpha=1}$ is unable to find the optimal partitioning and has $\Phi_k$ oscillating for both MDPs.}
\vspace{-10pt}
\label{fig:ablating-alpha}
\end{figure*}

For VFA, we use a $\mathrm{HardAggregation}$ operation which explicitly computes $\widetilde{\gS}$ and an aggregation function $\Phi$ as in (\ref{eq:hard-aggregation-alpha}). For finite MDPs, $\Phi$ is represented as an $|\widetilde{\gS}| \times |\gS|$ matrix with one-hot encoded columns, so that  (\ref{eq:policy-eval-alpha}) is performed via value iteration over $\widetilde{R}_\pi$ and $\widetilde{\gP}_\pi$ easily with matrix operations. The use of (\ref{eq:hard-aggregation-alpha}-\ref{eq:policy-eval-alpha}) enables theoretical analysis via (\ref{eq:vfa-vpi}). On the other hand, practical approaches for continuous MDPs such as DBC learn a neural critic of the form $V_\theta(\phi(\vs))$ with SGD, where $\phi(\vs)$ is trained to respect the bisimulation metric as described above. Hence, by construction we have ${\widetilde{V}^\pi(\Phi(\vs_i)) = \widetilde{V}^\pi(\Phi(\vs_j))}$ whenever ${\Phi(\vs_i) = \Phi(\vs_j)}$ such that ${\widehat{d}_{\pi}(\vs_i, \vs_j) \leq 2\epsilon}$. In contrast, in actor-critic based continuous control one softly promotes ${V_\theta(\phi(\vs_i)) \approx V_\theta(\phi(\vs_j))}$ for small ${\widehat{d}_{\pi} = \norm{\phi(\vs_i) - \phi(\vs_j)}}$ via an architectural constraint on the critic.

Lastly, the operation $\mathrm{GreedyImprovement}(\widetilde{V}^{\pi_k}_{\Phi_k}, \delta)$ in (\ref{eq:greedy-improvement-alpha}) can be viewed as a combination of an exact update ${\widetilde{\pi}_g \leftarrow \mathrm{GreedyImprovement}(\widetilde{V}^{\pi_k}, 0)}$ in the finite space $\widetilde{\gS}$ followed by a lifting of this quantized policy to a policy $\pi_g$ over a possibly continuous $\gS$ with some error $\delta \geq 0$. For practical continuous control, one noisily updates the parameters $\psi$ of an actor network $\pi_\psi$ with SGD to maximize values predicted by the bisimulation-constrained critic $V_\theta \circ \phi$. Computing an $\alpha$-mixture of policies as in (\ref{eq:conservative-update}) is trivial for finite MDPs as it amounts to a simple interpolation of probability vectors over the action space. However, iterative mixing of neural policies as in (\ref{eq:conservative-update}) over continuous MDPs is non-trivial. Hence, we focus on finite MDPs in the next section for a controlled empirical analysis of the theoretical results presented here.
\begin{figure*}
\centering
\begin{minipage}{\szz\textwidth}
  \centering
    \includegraphics[width=1.\linewidth]{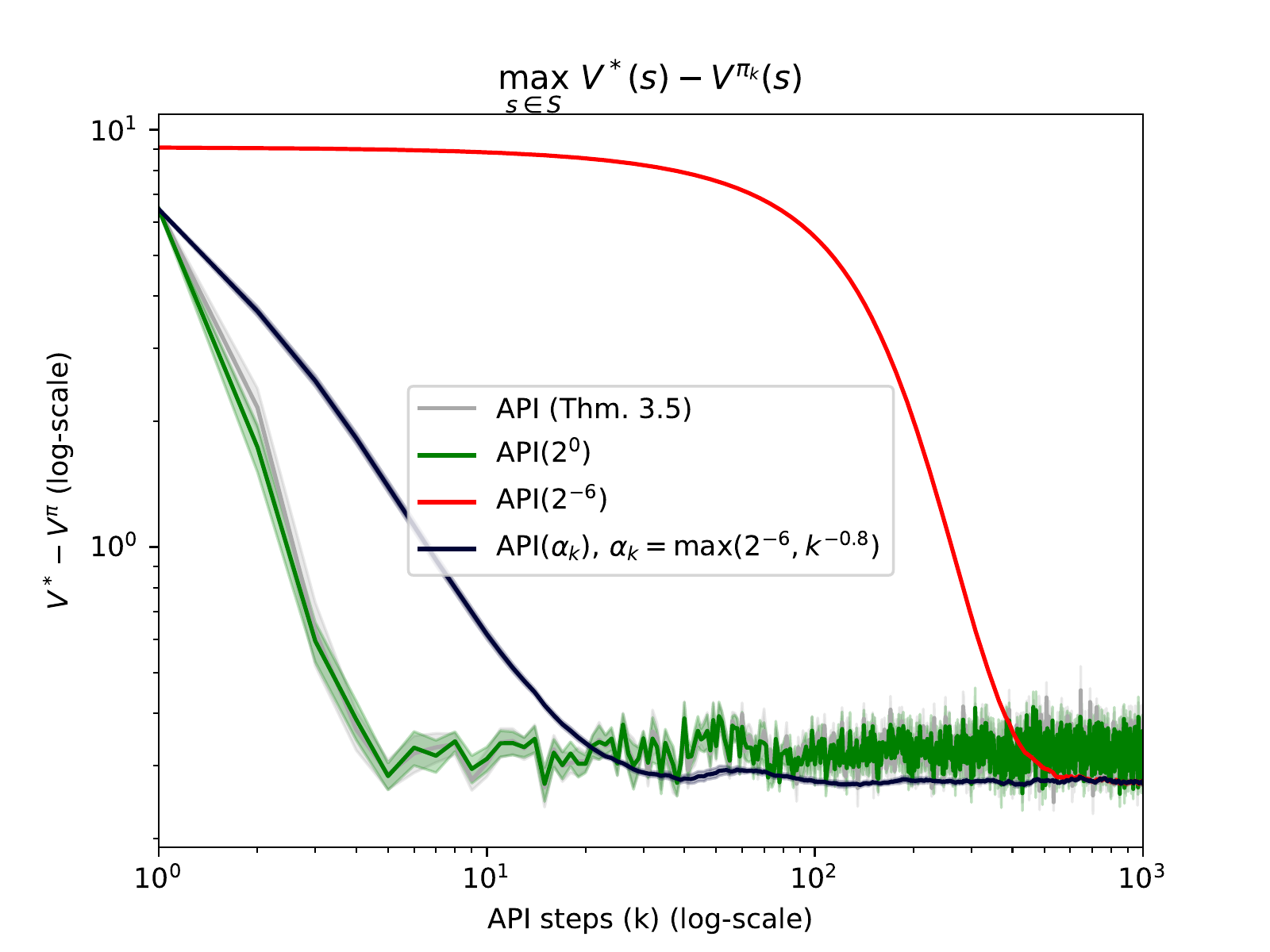}
\end{minipage}
\hspace{-10pt} %
\begin{minipage}{\szz\textwidth}
  \centering
    \includegraphics[width=1.\linewidth]{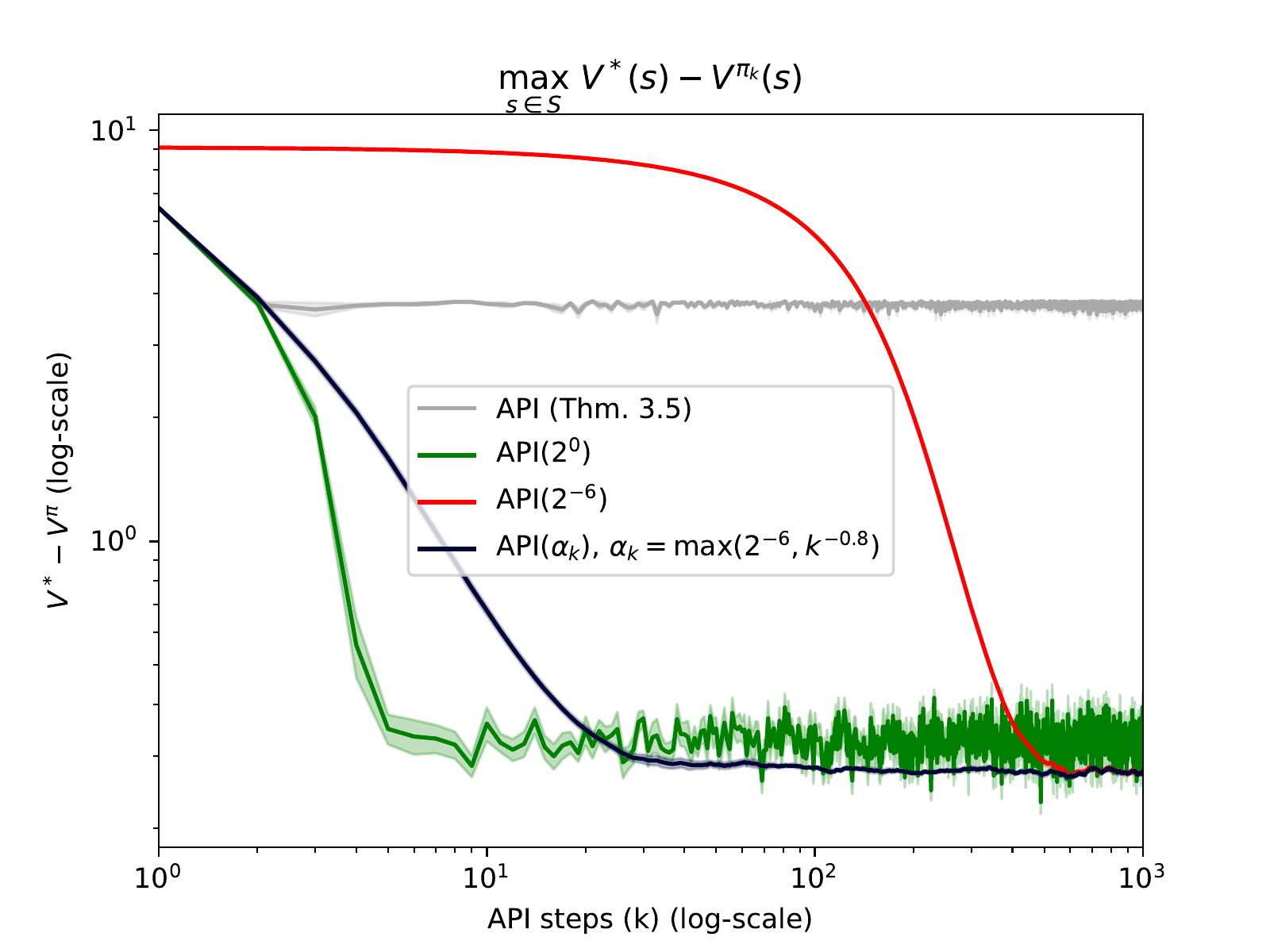}
\end{minipage}
\hspace{-10pt} %
\begin{minipage}{\szz
\textwidth}
  \centering
  \includegraphics[width=1.\linewidth]{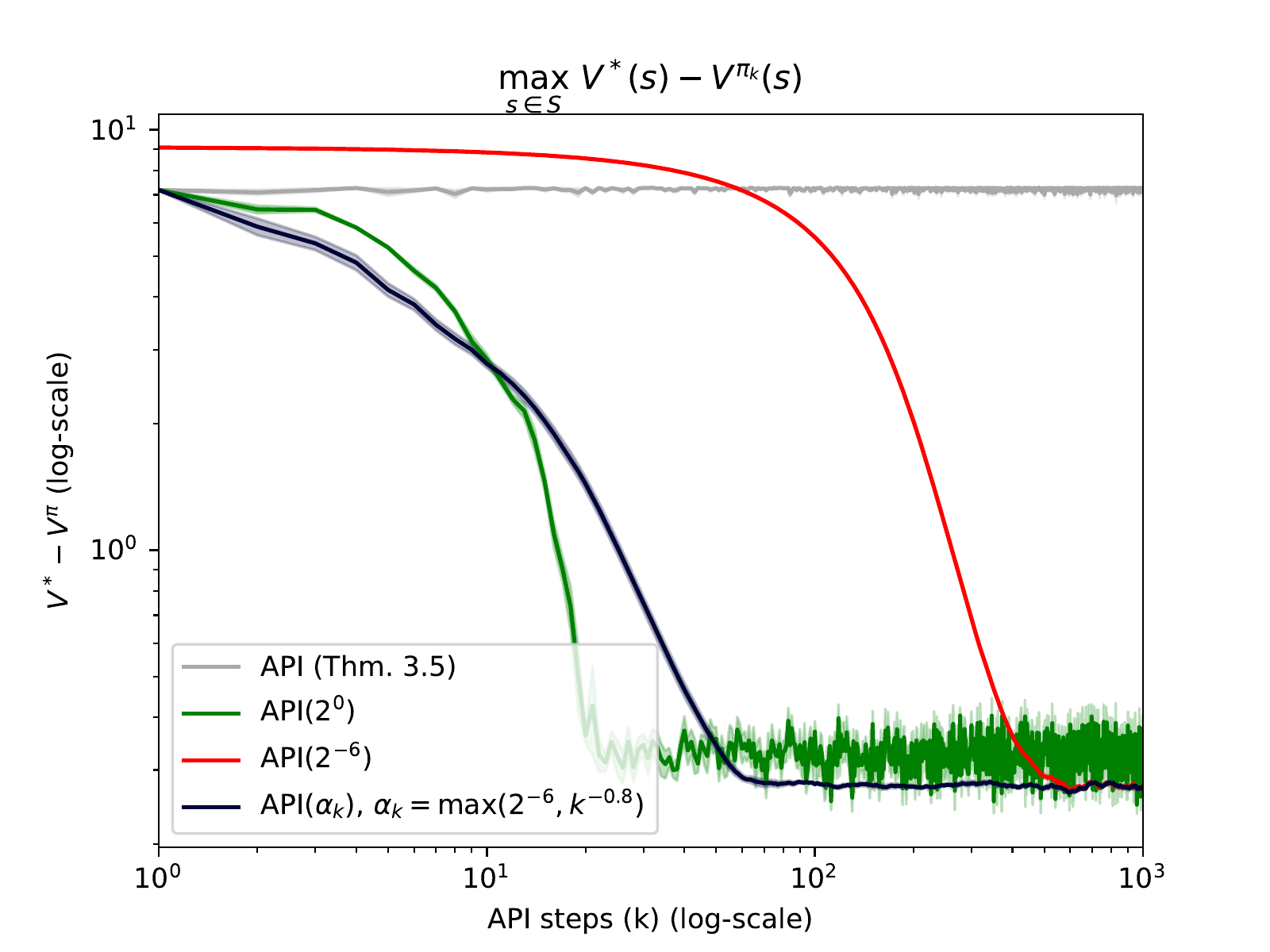}
\end{minipage}
\caption{Comparing ${\norm{V^* - V^{\pi_k}}_\infty}$ using API, API($\alpha$) and API($\alpha_k$) on the first MDP with $n \in [28, 7, 1]$ (\textbf{left-to-right}). See text.}
\vspace{-10pt}
\label{fig:alpha-k}
\end{figure*}
\section{Empirical Analysis}
\label{sec:experiments}
In this section, we conduct experiments to empirically investigate the practical implications of the theoretical results in Sec. \ref{sec:theoretical}. To this end, we consider a discounted problem involving $|\gS|$ states and $m$ equivalence classes (ECs) denoted $B_i$ where ${i \in [1, \ldots, m]}$ and each EC contains the same number of states $|\gS|/m$. At each step, an agent decides between two actions $\va_0$ and $\va_1$: taking $\va_0$ in $B_{i}$ transitions the agent to $B_{i+1}$, while $\va_1$ transitions the agent to $B_1$ (both with probability $1$). Only when the agent takes $\va_0$ in $B_m$ does it obtain a reward of $1$ and taking $\va_0$ keeps the agent in $B_m$ then. Hence, the agent is required to take $\va_0$ at least $m$ times consecutively to collect rewards after taking $\va_1$ once. When constructing $\gP$, we sample uniformly from the ${(|\gS|/m{-}1)}$-simplex to determine the transition probabilities to states in each EC. Therefore, each random seed generates an MDP with a different $\gP$, but they all map to the same reduced MDP. We also consider a second MDP with dense rewards inspired by Example 6.4 of \citet{BertsekasTsitsiklis96}. This time, the agent stays in the same EC if it takes $\va_1$ rather than being transitioned to $B_1$. Once the agent reaches $B_m$, it stays there forever regardless of which action it takes. Rewards for when the agent takes $\va_0$ are defined recursively as $r_0 = e, r_{i+1} = \gamma r_i + e$, where we set $e = \frac{1-\gamma}{1-\gamma^m}$ so that $R \in [0, 1]$. We also consider a third class of MDPs in Appendix \ref{sec:additional-experiments} and obtain similar results to those presented here.

In the experiments outlined here, we choose ${|\gS|=200}$, ${m=20}$ and $\gamma=0.9$. Unless otherwise stated, we use use $p=\lambda=1$ and ${n=\lceil\log(\frac{1-\gamma}{1+\gamma})/\log(\gamma)\rceil}$ as in Thm. \ref{thm:api-alpha-bisim}  ($n=28$ for $\gamma = 0.9$). However, metric learning can be terminated early at a given step $k$ if $\norm{\smash{\gF_{\pi_k}^{(i+1)}}(d) - \smash{\gF_{\pi_k}^{(i)}}(d)}_\infty \leq 10^{-3}$. To simulate noisy greedy updates, we perturb action probabilities of the ground-truth greedy policy with Gaussian noise and renormalize to form a distribution. A heuristic search of the noise variance ensures $\norm{T_{\pi_g}V - TV}_\infty \in [0.05, 0.1]$ so that ${\delta \leq 0.1}$. In all cases, we initialize $\pi_0$ to be the maximum-entropy policy. Our source code will be open-sourced for reproducibility. We base our Sinkhorn-Knopp implementation on the Python Optimal Transport package \citep{flamary2021pot}. All figures present results over 10 seeds with shaded areas showing standard error.

\textbf{API, API($\alpha$) and API($\alpha_k$).} We begin our analysis by testing the algorithm in Thm. \ref{thm:api-alpha-bisim} with varying $\alpha$. The results shown in Fig. \ref{fig:ablating-alpha} confirm the key insights that emerged from the analysis in Sec. \ref{sec:api-theoretical}: (i) lower $\alpha$ makes progress more slowly, but is more stable and has better asymptotic worst-case performance, (ii) lower $\alpha$ better exploits warm-starting of metric learning to learn a higher quality metric, which in turn better approximates the value difference $\Delta V^\pi(\vs, \vs^\prime) = |V^\pi(\vs) - V^\pi(\vs^\prime)|$ (recall that the true metric $d^\sim_\pi$ satisfies $|V^\pi(\vs) - V^\pi(\vs^\prime)| \leq d^\sim_\pi(\vs, \vs^\prime)$ for ${c_R=1}$, see also Appendix \ref{sec:abs-value-difference}), (iii) a higher quality metric makes for a better partitioning based on it so that all settings with $\alpha < 1$ recover the $m=20$ ECs of the MDPs, while $\alpha=1$ does not. Note that we use $\epsilon=0.1$ for the first MDP and $\epsilon=0.04$ for the second.

Next, we compare the algorithm in Thm. \ref{thm:api-alpha-bisim} to its naive version described in Thm. \ref{thm:api-bisim} for various settings of the number of fixed-point iterations $n$. We add a new variant called API($\alpha_k$) to the comparison based on the observation that we can leverage the high rate of improvement of a high $\alpha$ in early phases of learning and the improved stability of low $\alpha$ in the limit by scheduling $\alpha_k$ to decay gradually to a limiting $\alpha$. Indeed, given $\alpha_k \rightarrow \alpha$ the same asymptotic bound in (\ref{eq:oscillating-Vstar-alpha}) holds. In Fig. \ref{fig:alpha-k}, we show that the naive algorithm behaves similarly to API($1.0$) for $n$ large enough. However, for smaller $n$ the naive algorithm fails (cf. (\ref{eq:guarantee-on-Vstar}-\ref{eq:oscillating-Vstar})), while the warm-start algorithm succeeds. Secondly, Fig. \ref{fig:alpha-k} shows that given a decay schedule such as $\alpha_k = \max(2^{-6}, k^{-0.8})$, we are able to obtain a better trade-off in terms of rate of improvement and stability than either of API($1.0$) and API($2^{-6}$). We note that lower $n$ does not seem to compromise the warm-start algorithm stability, but only slightly slows down policy improvement in terms of time-steps $k$ for a good trade-off on wall-clock time (as fixed-point updates $\gF_\pi$ comprise the main bottleneck on run-time). Lastly, we repeated the same experiment with $\gamma = 0.99$ and did not observe any qualitative differences.

\begin{figure*}
\centering
\begin{minipage}{\szz\textwidth}
  \centering
    \includegraphics[width=1.\linewidth]{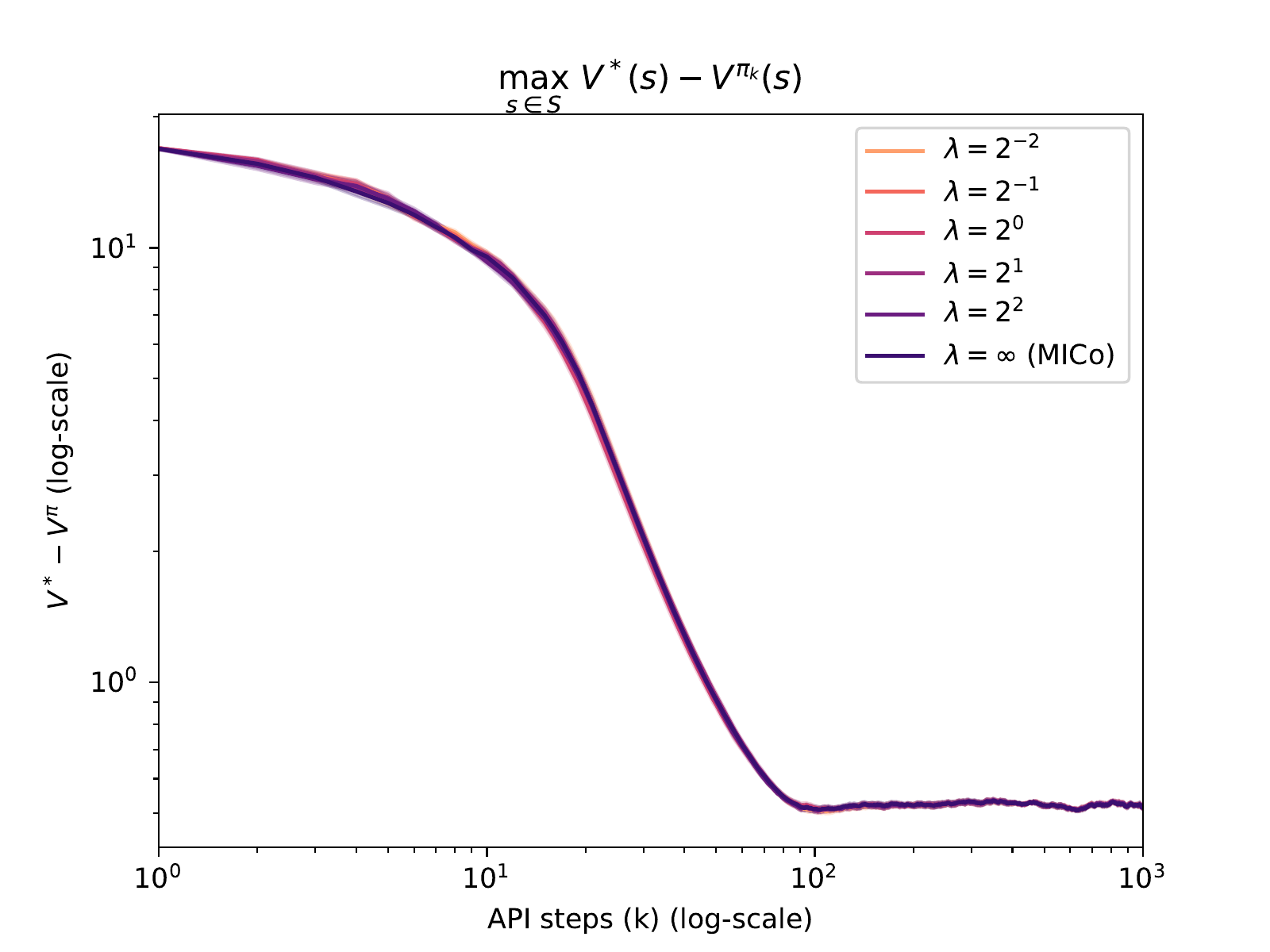}
\end{minipage}
\hspace{-10pt} %
\begin{minipage}{\szz\textwidth}
  \centering
    \includegraphics[width=1.\linewidth]{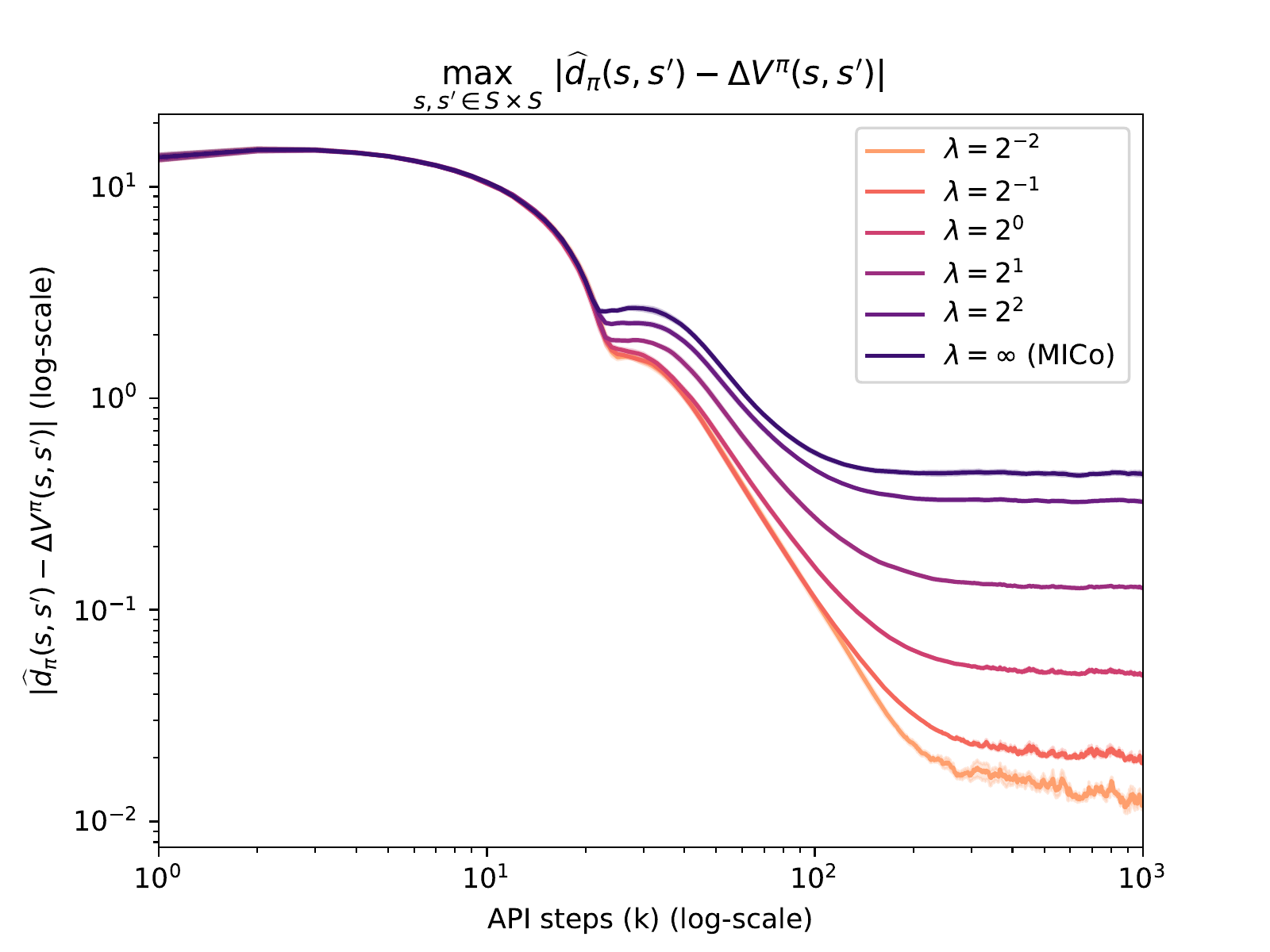}
\end{minipage}
\hspace{-10pt} %
\begin{minipage}{\szz
\textwidth}
  \centering
  \includegraphics[width=1.\linewidth]{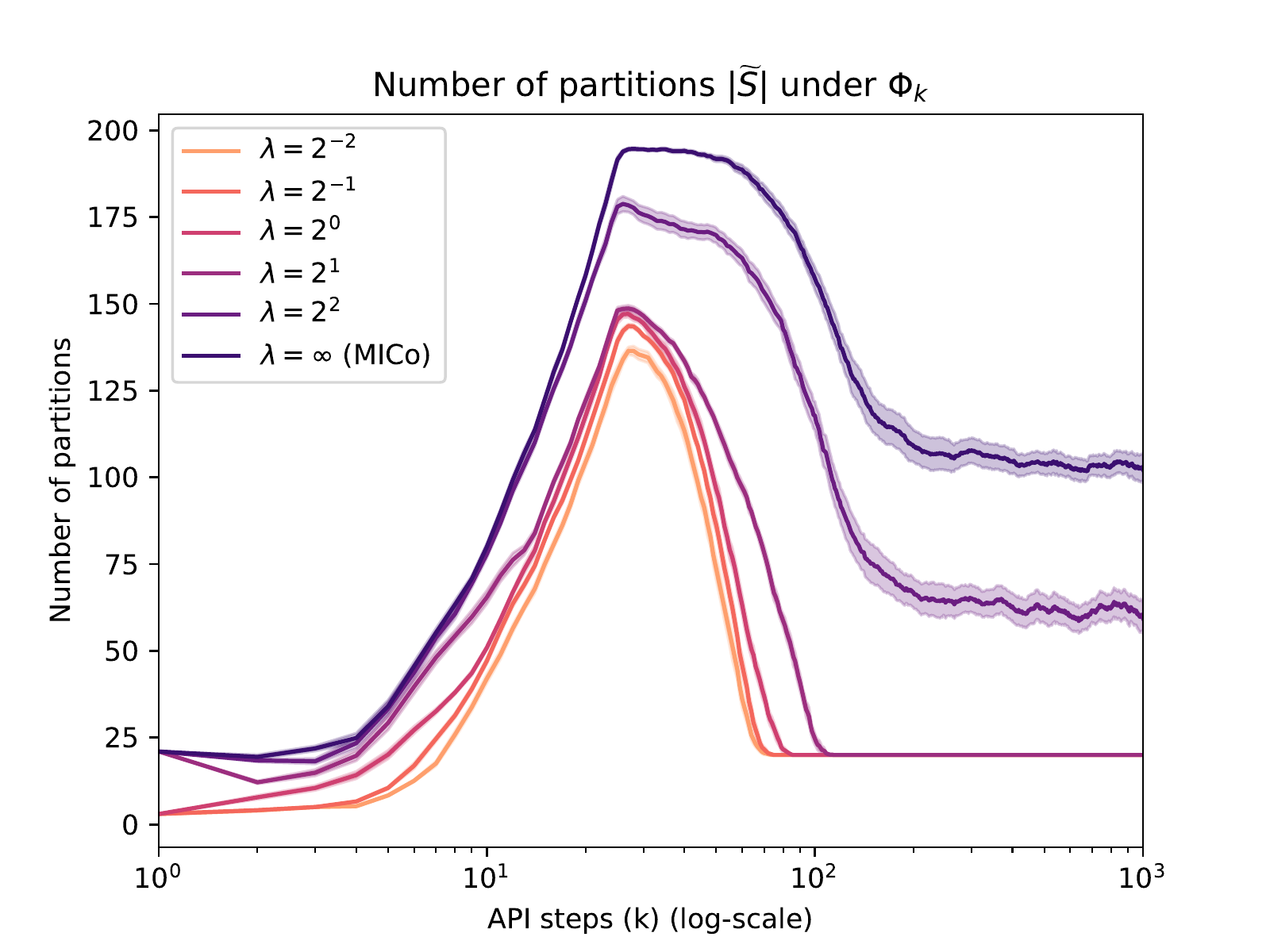}
\end{minipage}
\begin{minipage}{\szz\textwidth}
  \centering
    \includegraphics[width=1.\linewidth]{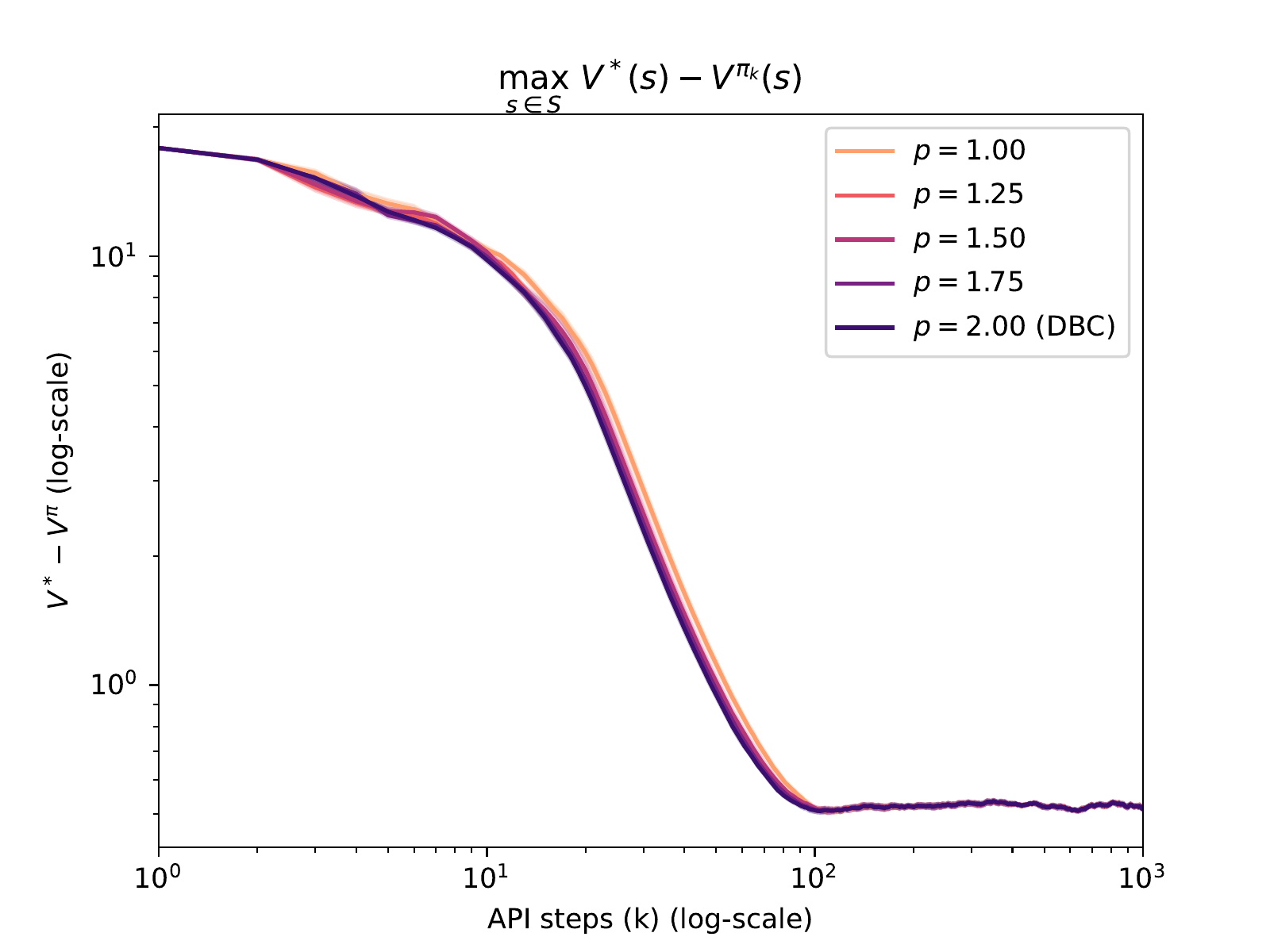}
\end{minipage}
\hspace{-10pt} %
\begin{minipage}{\szz\textwidth}
  \centering
    \includegraphics[width=1.\linewidth]{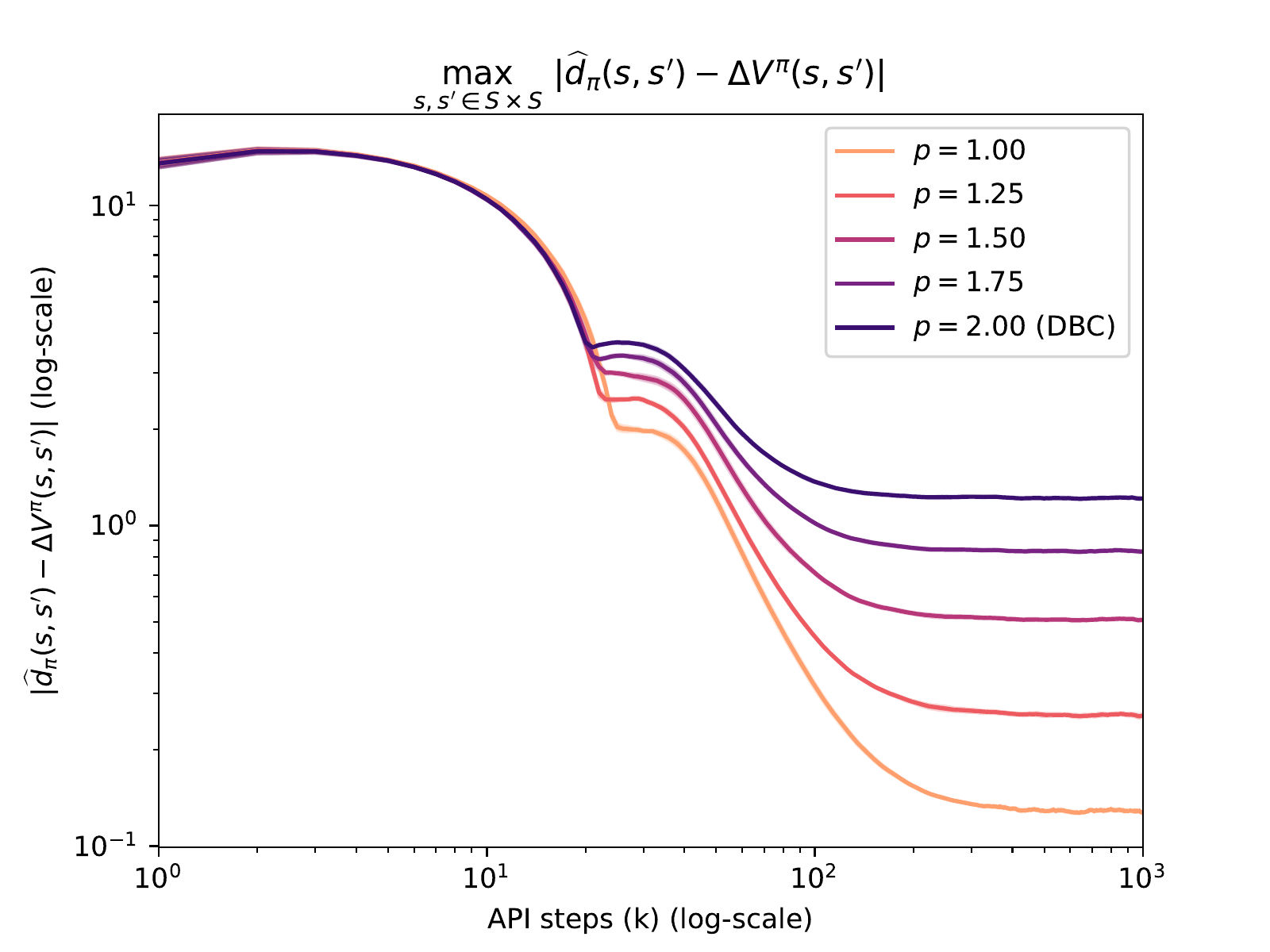}
\end{minipage}
\hspace{-10pt} %
\begin{minipage}{\szz
\textwidth}
  \centering
  \includegraphics[width=1.\linewidth]{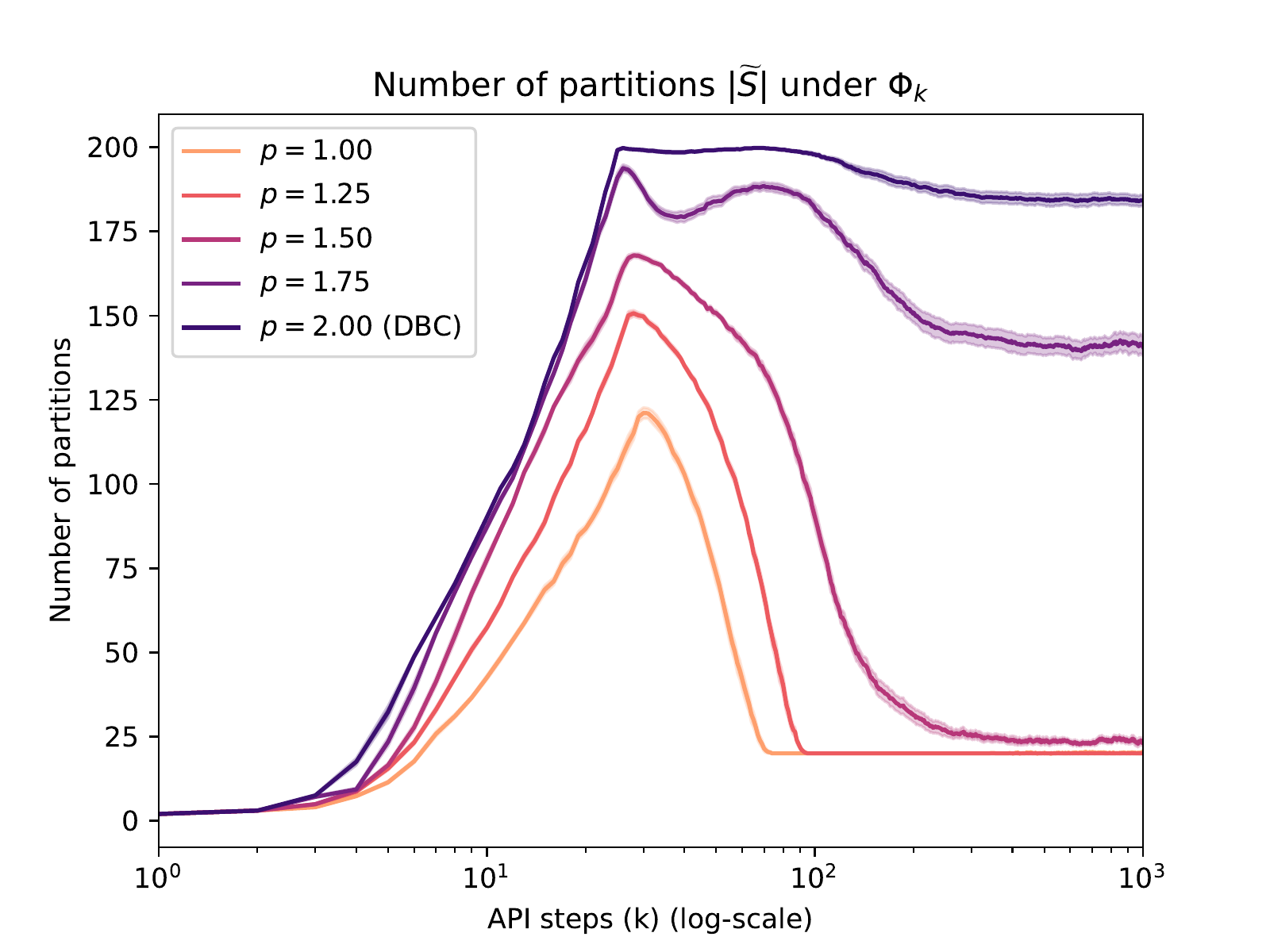}
\end{minipage}
\caption{Similar to Fig. \ref{fig:ablating-alpha}, but varying $\lambda$ for $p=1$ and $\epsilon=0.1$ \textbf{(top)}, and $p$ for $\lambda=0.1$ and $\epsilon=0.25$ \textbf{(bottom)} on the first MDP. See text.}
\vspace{-23pt}
\label{fig:ablating-p-lambda}
\end{figure*}

\textbf{$(p, \lambda)$-Sinkhorn distances.} 
Here, our main goal is to investigate the results provided by Thm. \ref{thm:p-wass} and Remark \ref{rmk:p-wass-finer} in Sec. \ref{sec:optimal-transport-speedup}. As noted in Sec. 3.1, we use the dual Sinkhorn distance with a fixed $\lambda > 0$ for ease of computation and omit the optimization of the dual variable $\lambda$ for a fixed $\zeta$. However, we note the monotonic relationship between $\lambda$ and $\zeta$. In Fig. \ref{fig:ablating-p-lambda}, we ablate $\lambda$ and $p$ to show how they influence learning for an API($\alpha$) algorithm. We use the API$(\alpha_k)$ variant described above with $\alpha_k = \max(0.01, k^{-0.8})$ as it strikes a better trade-off between rate of improvement and asymptotic stability than the fixed-$\alpha$ variant. Recall that all settings of $(p, \zeta)$ enjoy the same bound (\ref{eq:vfa-vpi}) as stated in Thm. \ref{thm:p-wass}, and consequently their usage in API also yields the same asymptotic performance in Thm. \ref{thm:api-alpha-bisim}. In the left-most column of Fig. \ref{fig:ablating-p-lambda}, we observe near-identical behavior in terms of optimality error $\norm{V^* - V^{\pi_k}}_\infty$ not just asymptotically, but for all $k$ regardless of the choice of ($p, \lambda$), i.e., learning dynamics are almost identical. However, the quality of the metric decreases with increasing $p$ and $\lambda$ in terms of its ability to approximate the value difference between a pair of states (see Remark \ref{rmk:p-wass-finer}). As a result, given a fixed $\epsilon$ one obtains a less efficient discretization (finer-grained partitioning) of the state space. The output of bisimulation-based RL algorithms is not only a policy optimized to solve a certain task, but also an efficient state encoder (or aggregator) that ignores functionally-irrelevant aspects of the environment state; the experiments here show that while all of these metrics (when learned with the same number of fixed-point iterations) follow a nearly identical policy optimization path, the choice of $(p, \lambda)$ defines a trade-off between time and space complexity for hard-aggregation with $\Phi$, and possibly time complexity and encoder \textit{quality} for representation learning with state encoders $\phi$.

\textbf{Aggregation on a space budget.} In Figs. \ref{fig:ablating-alpha} and \ref{fig:ablating-p-lambda}, we observe that the algorithm often computes a fine-grained partitioning ($|\widetilde{\gS}|$ close to ${|\gS|=200}$) early on given a fixed $\epsilon$. We inquire whether this costly phase is necessary to find the optimal policy in $\pi$-bisimulation-based API. Secondly, we also observed finer-grained partitioning with increasing $\lambda$. We thus inquire whether the metric learned by MICo ($\lambda \rightarrow \infty$) still captures the same geometric information about the state space but only on a different scale, or it actually loses information about state similarities due to a weaker upper bound on $\Delta V^\pi$. To investigate these questions, we test a variant of the algorithm; we replace $\epsilon$-aggregation as in (\ref{eq:hard-aggregation-alpha}) with partitioning around medioids (PAM) with a fixed number of 30 partitions  \citep{kaufman1990partitioning}. Thirdly, in Appendix \ref{sec:sinkhorn-upper-bound-entropy}, we identify a potential shortcoming of the Sinkhorn distance: guided by information-theoretic intuition, we find that the Sinkhorn distance tends to compute a weaker upper bound on the Wasserstein distance when the distributions being compared have higher entropy. We thus consider MDPs with varying degrees of stochasticity in $\gP$: in particular, we perturb each transition matrix $\gP(\cdot, \va)$ from the first MDP, which is deterministic, by mixing it with randomly sampled transition matrices (each row sampled uniformly from the ${(|\gS|{-}1)}$-simplex) with mixture weights $0.05$ and $0.5$. Conclusions from Fig. \ref{fig:fig-nmi} are threefold. (i) The algorithm makes similar progress towards solving the task given a partition budget $50\%$ above $m$, i.e., it does not strictly require too many partitions early on to make progress later. (ii) We confirm that metrics with smaller $\lambda$ yield a more informative partitioning, and observe correlation between the quality of the metric and performance. In other words, if representation capacity allocated to $\Phi$ and $\widetilde{V}^\pi_\Phi$ is constrained, lower $\lambda$ produces better results at the expense of some added computation time (more Sinkhorn-Knopp iterations). (iii) The Sinkhorn distance offers a better complexity-performance trade-off for more deterministic MDPs.
\begin{figure*}
\centering
\begin{minipage}{\szz\textwidth}
  \centering
    \includegraphics[width=1.\linewidth]{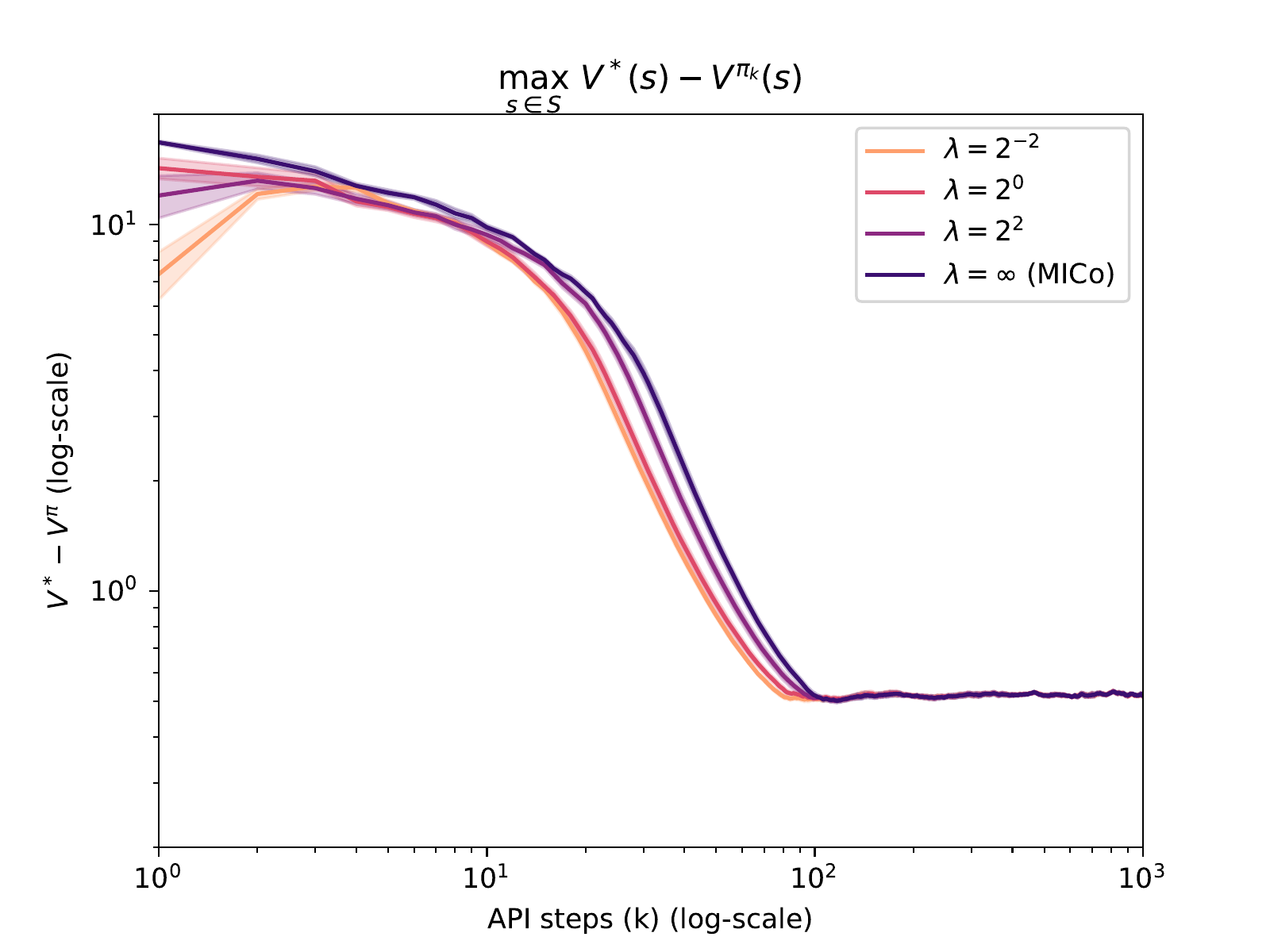}
\end{minipage}
\hspace{-10pt} %
\begin{minipage}{\szz\textwidth}
  \centering
    \includegraphics[width=1.\linewidth]{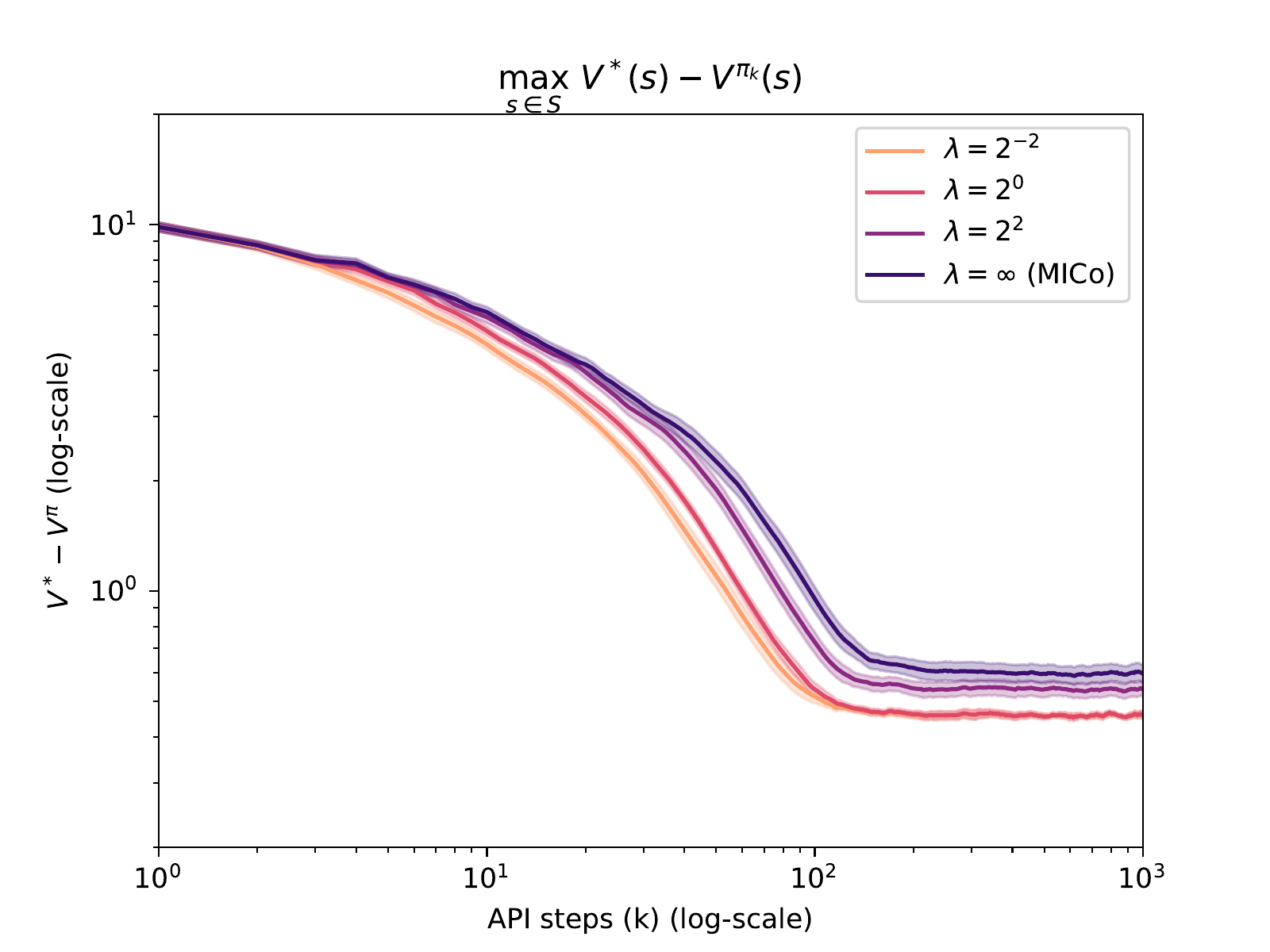}
\end{minipage}
\hspace{-10pt} %
\begin{minipage}{\szz\textwidth}
  \centering
    \includegraphics[width=1.\linewidth]{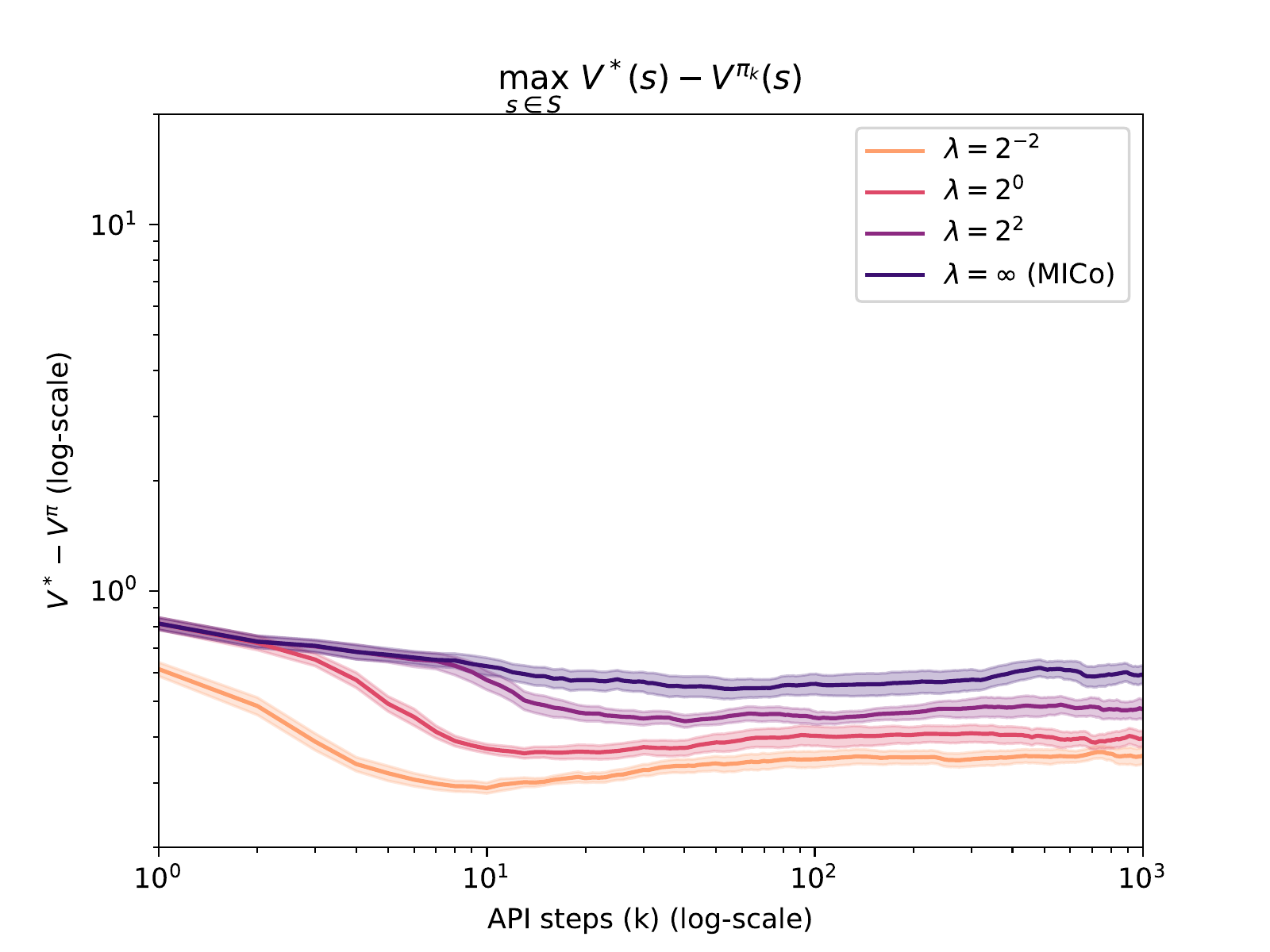}
\end{minipage}
\begin{minipage}{\szz\textwidth}
  \centering
    \includegraphics[width=1.\linewidth]{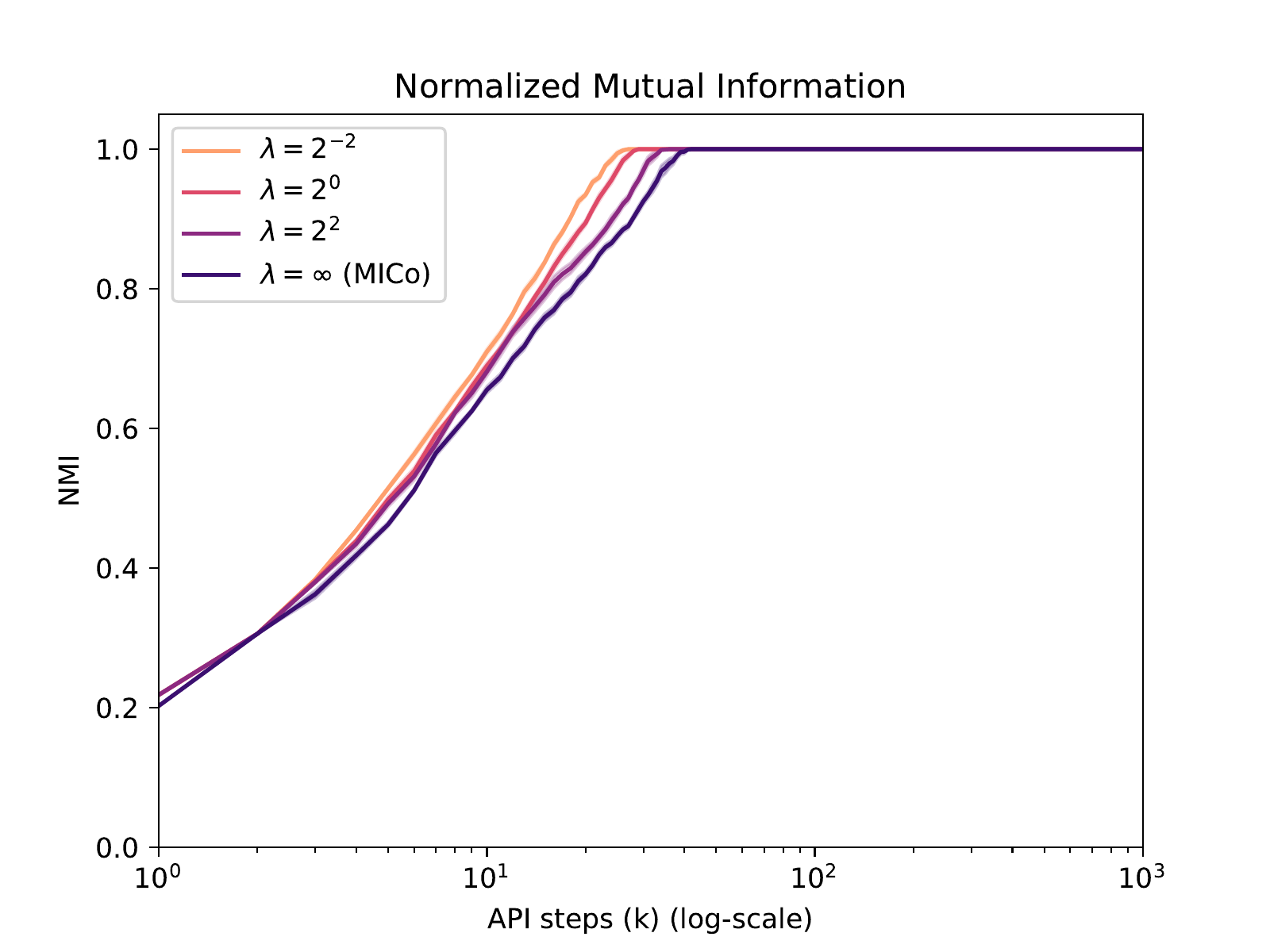}
\end{minipage}
\hspace{-10pt} %
\begin{minipage}{\szz\textwidth}
  \centering
    \includegraphics[width=1.\linewidth]{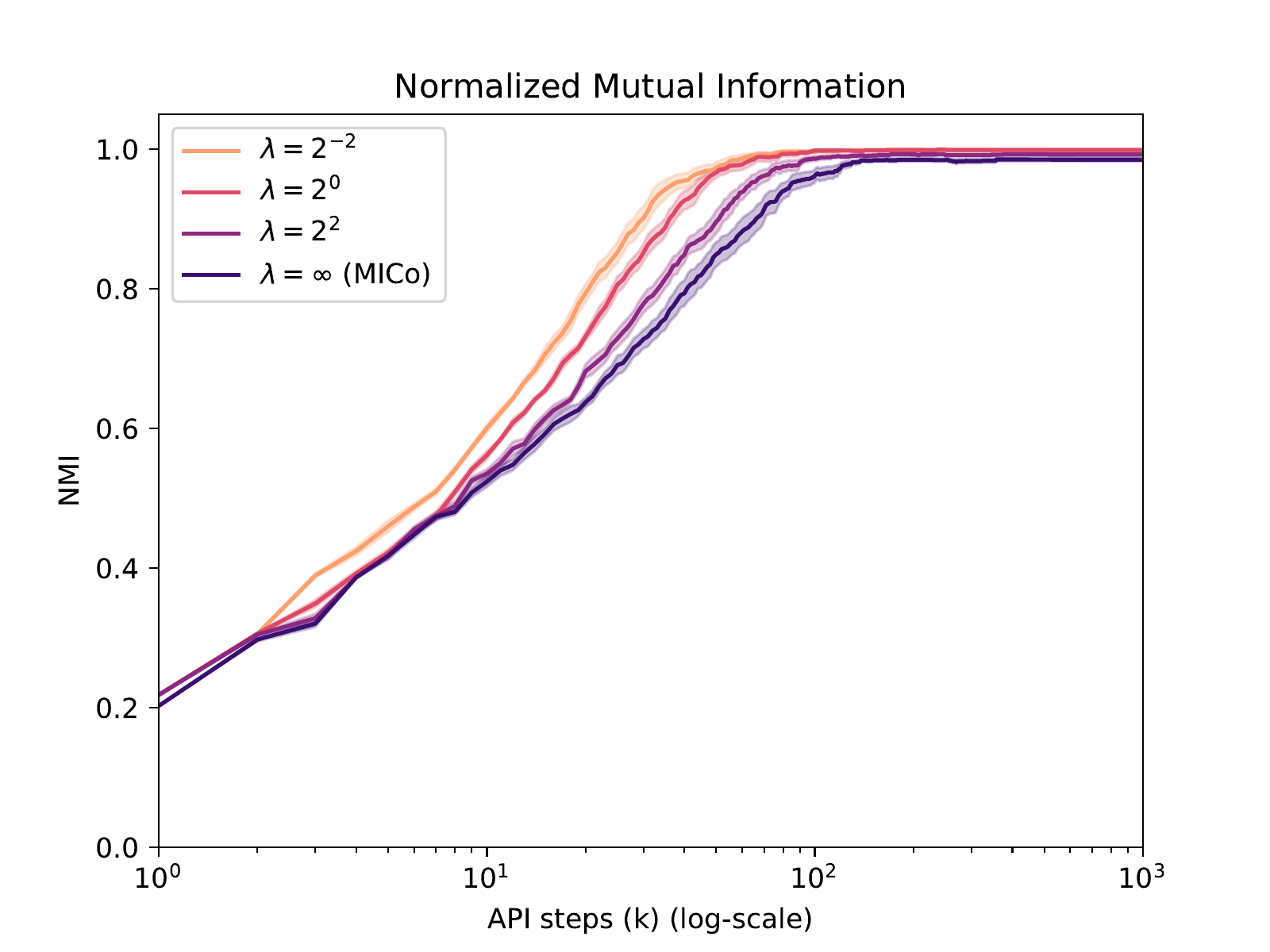}
\end{minipage}
\hspace{-10pt} %
\begin{minipage}{\szz\textwidth}
  \centering
    \includegraphics[width=1.\linewidth]{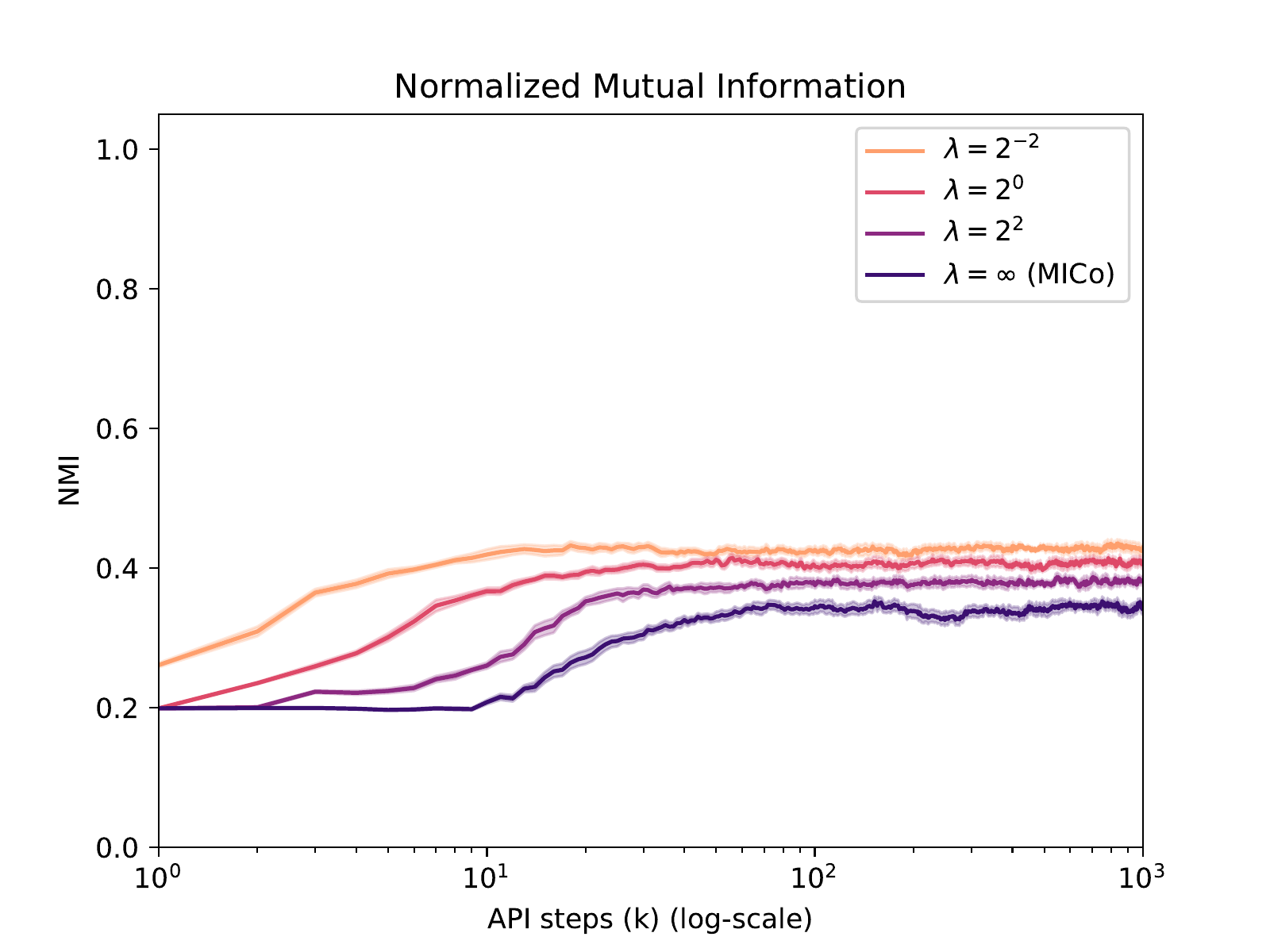}
\end{minipage}
\vspace{-12pt}
\caption{Varying $\lambda$ with unperturbed, lightly perturbed and heavily perturbed $\gP$ \textbf{(left-to-right)} when $\epsilon$-aggregation is replaced by $30$-medioids partitioning. \textbf{(Top)} Performance gap between different $\lambda$ settings widens with more stochastic $\gP$. \textbf{(Bottom)} Quality of the learned metric as measured by normalized mutual information (NMI) of labels found by $m$-medioids with ground-truth EC labels of the unperturbed MDP.}
\vspace{-14pt}
\label{fig:fig-nmi}
\end{figure*}

\textbf{Runtime measurements.} 
In Sec. \ref{sec:optimal-transport-speedup}, we discussed how one can warm-start the Sinkhorn-Knopp algorithm with Sinkhorn potentials of previous metrics at some memory cost (see Appendix \ref{sec:warm-starting-sinkhorn} for details). A second advantage involves the use of conservative policy updates. Intuitively, when $\pi_k$, $\gP_{\pi_k}$ and the ground metric $\widehat{d}_{\pi_k}$ do not change much over time-steps $k$, the Sinkhorn potentials corresponding to $W_p^\lambda(\widehat{d}_{\pi_k})(\gP_{\pi_k}(\vs), \gP_{\pi_k}(\vs^\prime))$ should not either. In light of Lemma \ref{lem:distance-difference}, we suspect that with conservative policy updates, the initial guess provided to the Sinkhorn-Knopp algorithm might be closer to the true solution when conservative policy updates are employed, which may result in faster convergence. In Fig. \ref{fig:fig-runtime}, we measure the wall-clock time per iteration of the API($\alpha$) algorithm to validate this hypothesis. Secondly, we noted in Sec. \ref{sec:optimal-transport-speedup} that greater $\lambda$ requires fewer steps for Sinkhorn-Knopp to converge; this is also demonstrated in Fig. \ref{fig:fig-runtime}. However, the cost of a small $\lambda$ is eventually amortized by the warm-start strategy as the metric converges (after $\approx 300$ steps). Note that for $\lambda \rightarrow \infty$, we have a closed-form for finite spaces \citep{cuturi2013sinkhorn}:
\begin{align*}
    \lim_{\lambda \rightarrow \infty}W_1^{\lambda}(d)(\mu_1, \mu_2) = \lim_{\zeta \rightarrow \infty}W_1^{\zeta}(d)(\mu_1, \mu_2) = \mu_1^T D \mu_2,
\end{align*}
where $D$ is a distance matrix with ${D_{ij} = d(x_i, x_j)}$, and $\mu_1$ and $\mu_2$ are probability vectors. Hence, in this case one bypasses Sinkhorn-Knopp entirely. The computation of the metric $W_1^{\lambda}(d)(\gP_\pi(\vs), \gP_\pi(\vs^\prime))$ for all state pairs can be easily parallelized for finite MDPs; all pairwise distances can be computed as $\gP_\pi D \gP_\pi^T$ so that the case $\lambda \rightarrow \infty$ can be taken as a best-case run-time benchmark for our implementation.
\begin{figure*}
\centering
\begin{minipage}{0.345\textwidth}
  \centering
    \includegraphics[width=1.\linewidth]{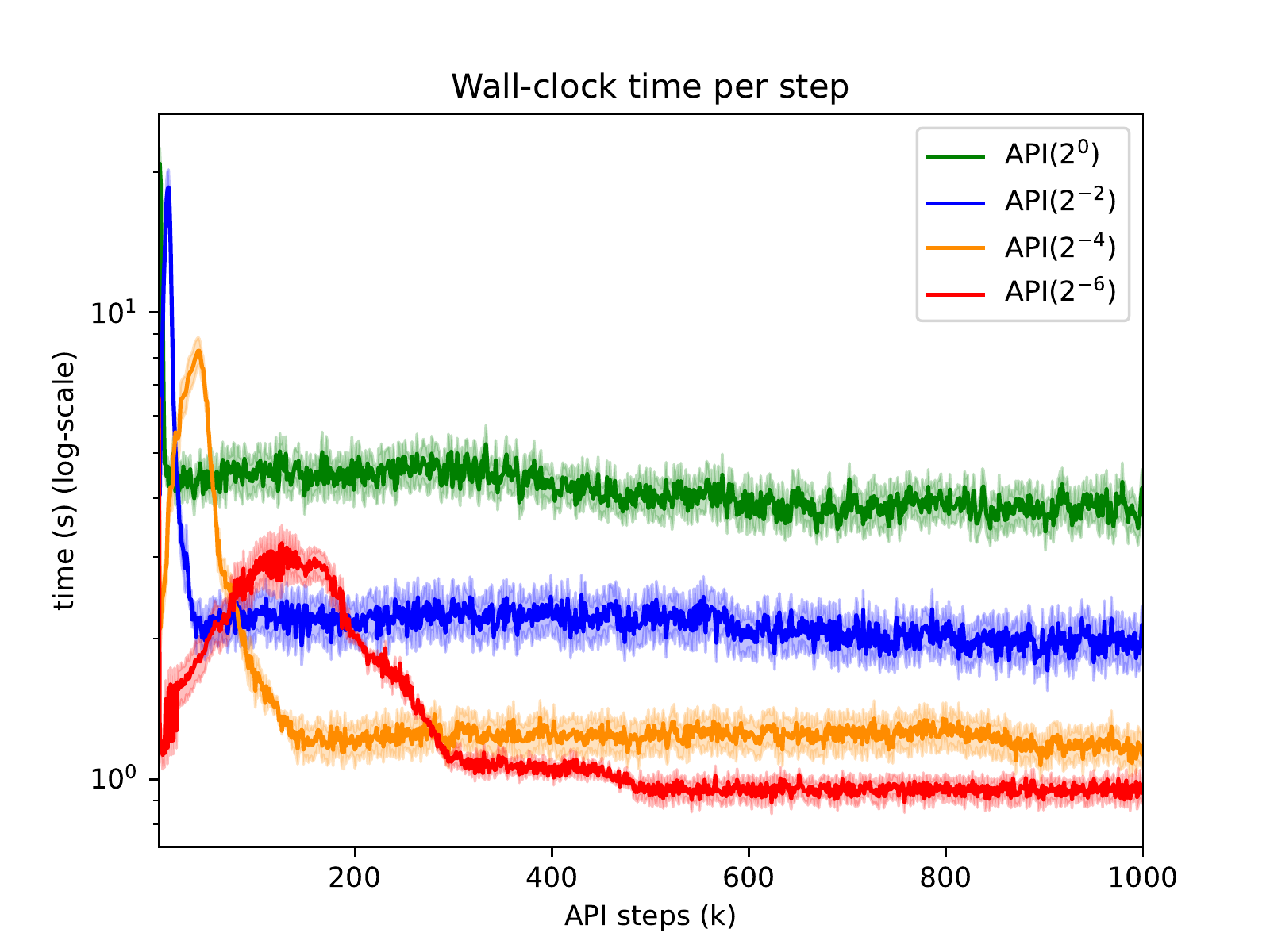}
\end{minipage}
\hspace{-10pt} %
\begin{minipage}{0.345\textwidth}
  \centering
    \includegraphics[width=1.\linewidth]{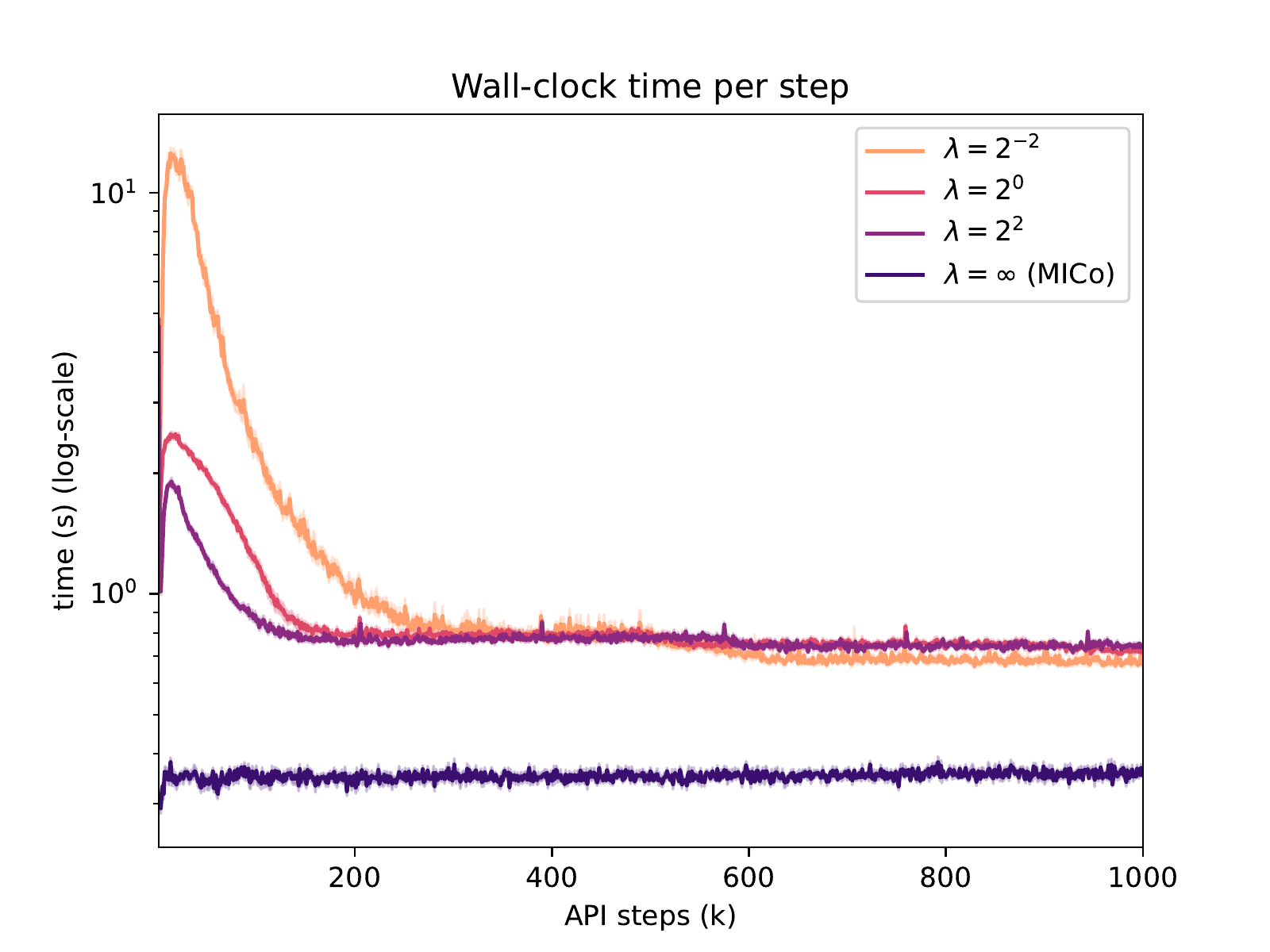}
\end{minipage}
\caption{Better runtime with lower $\alpha$ (\textbf{Left}) and higher $\lambda$ with decaying $\alpha_k$ as before (\textbf{Right}) on an NVIDIA GeForce GTX 1080 GPU.}
\vspace{-8pt}
\label{fig:fig-runtime}
\end{figure*}

\section{Conclusion}

In this work, we analyzed the use of bisimulation metrics in approximate policy iteration to bridge the gap between theory and practice. We first generalized bisimulation metrics to $(p, \zeta)$-Sinkhorn distances where $p \geq 1$ and $\zeta \geq 0$. The $p$-Wasserstein generalization confirmed the theoretical results on VFA of \citet{kemertas} given in (\ref{eq:vfa-vpi}) and added theoretical justification to the use of $2$-Wasserstein metrics for fast computation as in \citet{zhang2021learning}. Sinkhorn distances enabled GPU-based fast approximation of upper bounds on $p$-Wasserstein bisimulation metrics with better time complexity than prior work \citep{Ferns2004Metrics}, and established a theoretical formalism for a more general family of metrics encompassing standard bisimulation metrics and MICo \citep{castro2021mico}. We further conducted a theoretical analysis of API procedures that use a bisimulation-based discretization of the state space for VFA. The analysis indicated that conservative updates may benefit such procedures since a rapidly changing policy makes for a rapidly changing metric learning objective. Indeed, we showed that conservative updates enable warm-starting of metric learning iterations with significantly better speedup-performance trade-offs. To validate our theoretical findings and investigate trade-offs, we implemented the theoretically-grounded API($\alpha$) algorithm, which mimics the actor-critic based applications of bisimulation-like metrics for continuous control \citep{zhang2021learning, kemertas, castro2021mico}. We provided an ablation analysis of algorithm parameters, and also showed empirically that decaying policy update sizes may strike better trade-offs between asymptotic performance, stability and rate of improvement. Further, in our controlled setting we showed that metric learning speedups are gained by a trade-off in the quality of the learned metric and space complexity of corresponding state abstractions. Whenever VFA capacity (as measured here by the number of allowed partitions) is sufficient, the policy performance for the task at hand remains unaffected by the use of weaker metrics; however, capacity limitations may result in performance degradation as was shown in Fig. \ref{fig:fig-nmi}. Furthermore, we presented evidence that the Sinkhorn distance may offer a better trade-off under more deterministic transitions. While we focused on theoretical analysis in this work and limited our empirical analysis to finite MDPs, future work may consider practical approaches to implementing bisimulation-based actor-critic algorithms with conservative policy updates, and sample-based approximation of the Sinkhorn distance between continuous distributions for continuous control.

\textbf{Acknowledgements.} We acknowledge the support of the Natural Sciences and Engineering Research Council of Canada (NSERC).

\bibliography{tmlr}

\bibliographystyle{tmlr}

\newpage
\onecolumn
\appendix
\section{Proofs and Additional Results}
\label{sec:proofs}

\wassersteinlemma*
\begin{proof}
Let ${\Omega(\zeta) = \{\omega \in \Omega ~|~ D_{\mathrm{KL}}(\omega ~||~ \mu_1 \otimes \mu_2) \leq \zeta^{-1} \}}$.
\begin{align*}
    & W_p^\zeta(d)(\mu_1, \mu_2) - W_p^\zeta(d^\prime)(\mu_1, \mu_2) \\
    &= \min_{\omega \in \Omega(\zeta)} \norm{d}_{p, \omega} - W_p^\zeta(d^\prime)(\mu_1, \mu_2) \\
    &= \min_{\omega \in \Omega(\zeta)} \norm{d-d^\prime+d^\prime}_{p, \omega} - W_p^\zeta(d^\prime)(\mu_1, \mu_2) \\
    & \leq \min_{\omega \in \Omega(\zeta)} \left( \norm{d-d^\prime}_{p, \omega} + \norm{d^\prime}_{p, \omega} \right) - W_p^\zeta(d^\prime)(\mu_1, \mu_2) \quad \tag*{(since triangle inequality holds for all $\omega \in \Omega(\zeta)$)} \\
    & \leq \min_{\omega \in \Omega(\zeta)} \left( \norm{d-d^\prime}_{\infty} + \norm{d^\prime}_{p, \omega} \right) - W_p^\zeta(d^\prime)(\mu_1, \mu_2) \quad \tag*{(since $\norm{\cdot}_{p, \omega} \leq \norm{\cdot}_\infty$)} \\
    &= \norm{d-d^\prime}_{\infty} +  \min_{\omega \in \Omega(\zeta)}  \norm{d^\prime}_{p, \omega} - W_p^\zeta(d^\prime)(\mu_1, \mu_2) \\
    &= \norm{d-d^\prime}_{\infty}. \tag*\qedhere
\end{align*}
\end{proof}

\begin{restatable}[Triangle inequality for $W_p^\zeta(d)$]{lemma}{trianglezeta}
\label{lem:wass-triangle-lemma}
Consider three probability measures $\mu_1, \mu_2, \mu_3 \in \gP_p(\gX)$. 
\begin{align}
    W_p^\zeta(d)(\mu_1, \mu_3) \leq  W_p^\zeta(d)(\mu_1, \mu_2) + W_p^\zeta(d)(\mu_2, \mu_3).
\end{align}
\end{restatable}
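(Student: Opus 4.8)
The plan is to reproduce the classical gluing-lemma proof of the $p$-Wasserstein triangle inequality and then check that the extra entropic (KL) constraint survives the gluing. First I would pick couplings $\omega_{12}\in\Omega(\zeta)$ of $(\mu_1,\mu_2)$ attaining $W_p^\zeta(d)(\mu_1,\mu_2)$ and $\omega_{23}\in\Omega(\zeta)$ of $(\mu_2,\mu_3)$ attaining $W_p^\zeta(d)(\mu_2,\mu_3)$; the minima are attained because the set of couplings with fixed marginals is weakly compact (recall $\gS$, hence $\gX$, is compact), the KL ball is weakly closed, and $\omega\mapsto\norm{d}_{p,\omega}$ is weakly lower semicontinuous (if one prefers to sidestep attainment, work with $\eta$-optimal couplings and let $\eta\to0$ at the end). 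Using the disintegration of each coupling along the shared marginal $\mu_2$, write $\omega_{12}(dx_1,dx_2)=\mu_2(dx_2)K_1(dx_1\mid x_2)$ and $\omega_{23}(dx_2,dx_3)=\mu_2(dx_2)K_3(dx_3\mid x_2)$, and define the glued law on $\gX^3$ by $\omega_{123}(dx_1,dx_2,dx_3)=\mu_2(dx_2)K_1(dx_1\mid x_2)K_3(dx_3\mid x_2)$. By construction its $(1,2)$- and $(2,3)$-marginals are $\omega_{12}$ and $\omega_{23}$, and $X_1$ and $X_3$ are conditionally independent given $X_2$, i.e.\ $X_1 - X_2 - X_3$ is a Markov chain.

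The step I expect to be the main obstacle is showing that the $(1,3)$-marginal $\omega_{13}$ of $\omega_{123}$ — which is automatically a coupling of $(\mu_1,\mu_3)$ — still lies in $\Omega(\zeta)$; this is the only place the argument departs from the unconstrained Wasserstein case. The clean way to see it is to recognize $D_{\mathrm{KL}}(\omega\,\|\,\nu_1\otimes\nu_2)$ as the mutual information $I$ of a pair of random variables with joint law $\omega$, so that the data processing inequality for the Markov chain $X_1 - X_2 - X_3$ gives
\begin{align*}
D_{\mathrm{KL}}(\omega_{13}\,\|\,\mu_1\otimes\mu_3)=I(X_1;X_3)\leq I(X_1;X_2)=D_{\mathrm{KL}}(\omega_{12}\,\|\,\mu_1\otimes\mu_2)\leq \zeta^{-1}.
\end{align*}
Hence $\omega_{13}\in\Omega(\zeta)$ and therefore $\norm{d}_{p,\omega_{13}}\geq W_p^\zeta(d)(\mu_1,\mu_3)$ (an analogous bound against $D_{\mathrm{KL}}(\omega_{23}\,\|\,\mu_2\otimes\mu_3)$ is available by symmetry, should one want the tighter estimate).

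Finally I would close with Minkowski's inequality. Since $d$ is a pseudometric, $d(x_1,x_3)\leq d(x_1,x_2)+d(x_2,x_3)$ pointwise, so the triangle inequality in $L^p(\omega_{123})$ together with the marginal identifications above yields
\begin{align*}
W_p^\zeta(d)(\mu_1,\mu_3)\leq\norm{d(x_1,x_3)}_{L^p(\omega_{123})}\leq\norm{d(x_1,x_2)}_{L^p(\omega_{123})}+\norm{d(x_2,x_3)}_{L^p(\omega_{123})}=\norm{d}_{p,\omega_{12}}+\norm{d}_{p,\omega_{23}},
\end{align*}
which equals $W_p^\zeta(d)(\mu_1,\mu_2)+W_p^\zeta(d)(\mu_2,\mu_3)$ by the choice of $\omega_{12}$ and $\omega_{23}$. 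The remaining technical points — existence of regular conditional distributions on the compact (Polish) space $\gX$ and the weak lower semicontinuity used for attainment — are standard, so I would only cite them rather than carry out the measure-theoretic details.
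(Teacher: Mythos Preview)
Your proposal is correct and follows essentially the same route as the paper: glue optimal couplings, verify the $(1,3)$-marginal stays in $\Omega(\zeta)$, then apply the pointwise triangle inequality for $d$ and Minkowski. The paper outsources the gluing step to Villani's Gluing Lemma and the entropic-feasibility step to Lemma~1 of \citet{cuturi2013sinkhorn}, whereas you carry out both explicitly (the disintegration construction and the data-processing inequality for the Markov chain $X_1\text{--}X_2\text{--}X_3$); your argument is precisely what underlies those cited results, so the proofs coincide.
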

\begin{proof}
For this proof, we use the Gluing Lemma (see Chapter 1 of \citet{villani2008optimal}) and Lemma 1 of \citet{cuturi2013sinkhorn}. Note that triangle inequality readily holds for $W_p(d)$ \citep{villani2008optimal} and $W_1^\zeta(d)$ \citep{cuturi2013sinkhorn}.

Let random variables $(X_1, X_2)$ be the optimal coupling of $(\mu_1, \mu_2)$ and $(Z_2, Z_3)$ the optimal coupling of $(\mu_2, \mu_3)$. By the Gluing Lemma, there exist random variables $(X_1^\prime, X_2^\prime, X_3^\prime)$ such that ${\omega^*_{12} = \mathrm{law}(X_1^\prime, X_2^\prime)=\mathrm{law}(X_1, X_2)}$ and ${\omega^*_{23} = \mathrm{law}(X_2^\prime, X_3^\prime)=\mathrm{law}(Z_2, Z_3)}$, where $\omega^*_{12}, \omega^*_{23} \in \Omega(\zeta)$. Furthermore, by Lemma 1 of \citet{cuturi2013sinkhorn}, $(X_1^\prime, X_3^\prime)$ is a coupling of $(\mu_1, \mu_3)$ such that $\omega_{13} = \mathrm{law}(X_1^\prime, X_3^\prime) \in \Omega(\zeta)$. That is, $\omega_{13}$ also satisfies the entropic constraint given by $\zeta$ and is therefore feasible.
\begin{align*}
    W_p^\zeta(d)(\mu_1, \mu_3) &\leq \Big(\mathbb{E}[d(X_1^\prime, X_3^\prime)^p]\Big)^{\frac{1}{p}} \tag*{(since the coupling $(X_1^\prime, X_3^\prime)$ is not necessarily optimal for $(\mu_1, \mu_3)$)} \\
    & \leq \Big(\mathbb{E}\left[\big(d(X_1^\prime, X_2^\prime) + d(X_2^\prime, X_3^\prime)\big)^p\right]\Big)^{\frac{1}{p}} \tag*{(by triangle inequality since $d$ is a pseudo-metric)} \\
    & \leq \Big(\mathbb{E}[d(X_1^\prime, X_2^\prime)^p]\Big)^{\frac{1}{p}} + \Big(\mathbb{E}[d(X_2^\prime, X_3^\prime)^p]\Big)^{\frac{1}{p}} \tag*{(by the Minkowski inequality)} \\
    &= W_p^\zeta(d)(\mu_1, \mu_2) + W_p^\zeta(d)(\mu_2, \mu_3),
\end{align*}
where the equality in the final step holds since the couplings $(X_1^\prime, X_2^\prime)$ and $(X_2^\prime, X_3^\prime)$ are optimal with respect to $(\mu_1, \mu_2)$ and $(\mu_2, \mu_3)$ respectively by construction.
\end{proof}

\begin{restatable}[Approximate metrics and approximate distributions]{corollary}{wassapproximationlemma}
\label{cor:wass-approx-lemma}
Consider distributions $\mu_1, \overline{\mu}_1, \mu_2, \overline{\mu}_2 \in \gP_p(\gX)$ and a pair of metrics $d, d^\prime$. Due to Lemma \ref{lem:p-wass-difference-bound} and Lemma \ref{lem:wass-triangle-lemma},
\begin{align}
    |W_p^\zeta(d)(\mu_1, \mu_2) - W_p^\zeta(d^\prime)(\overline{\mu}_1, \overline{\mu}_2)| \leq W_p^\zeta(d)(\mu_1, \overline{\mu}_1) + W_p^\zeta(d)(\mu_2, \overline{\mu}_2) + \norm{d - d^\prime}_\infty.
\end{align}
\end{restatable}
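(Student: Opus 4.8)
The plan is to chain together Lemma~\ref{lem:p-wass-difference-bound} (which controls the effect of perturbing the ground metric) with the triangle inequality of Lemma~\ref{lem:wass-triangle-lemma} (which controls the effect of perturbing the marginals), applying the two in an order that leaves only a single copy of $\norm{d-d^\prime}_\infty$. Before starting I would record one auxiliary fact: $W_p^\zeta(d)$ is symmetric in its pair of measure arguments, since $d$ is a pseudo-metric (hence symmetric) and the feasible set $\Omega(\zeta)$ is invariant under swapping the two coordinates (the product of marginals is mapped to itself and $D_{\mathrm{KL}}$ is preserved), so $W_p^\zeta(d)(\mu, \nu) = W_p^\zeta(d)(\nu, \mu)$ for all $\mu, \nu \in \gP_p(\gX)$.

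For the first one-sided bound, start from $W_p^\zeta(d)(\mu_1, \mu_2)$ and apply Lemma~\ref{lem:wass-triangle-lemma} twice to insert $\overline{\mu}_1$ and $\overline{\mu}_2$:
\[
W_p^\zeta(d)(\mu_1, \mu_2) \leq W_p^\zeta(d)(\mu_1, \overline{\mu}_1) + W_p^\zeta(d)(\overline{\mu}_1, \overline{\mu}_2) + W_p^\zeta(d)(\overline{\mu}_2, \mu_2),
\]
then bound the middle term using Lemma~\ref{lem:p-wass-difference-bound} by $W_p^\zeta(d^\prime)(\overline{\mu}_1, \overline{\mu}_2) + \norm{d - d^\prime}_\infty$. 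Rearranging and using the symmetry remark on $W_p^\zeta(d)(\overline{\mu}_2, \mu_2)$ gives $W_p^\zeta(d)(\mu_1, \mu_2) - W_p^\zeta(d^\prime)(\overline{\mu}_1, \overline{\mu}_2) \leq W_p^\zeta(d)(\mu_1, \overline{\mu}_1) + W_p^\zeta(d)(\mu_2, \overline{\mu}_2) + \norm{d - d^\prime}_\infty$. For the reverse bound, first apply Lemma~\ref{lem:p-wass-difference-bound} to replace $d^\prime$ by $d$, namely $W_p^\zeta(d^\prime)(\overline{\mu}_1, \overline{\mu}_2) \leq W_p^\zeta(d)(\overline{\mu}_1, \overline{\mu}_2) + \norm{d - d^\prime}_\infty$, and then apply Lemma~\ref{lem:wass-triangle-lemma} twice, this time inserting $\mu_1$ and $\mu_2$, so that $W_p^\zeta(d)(\overline{\mu}_1, \overline{\mu}_2) \leq W_p^\zeta(d)(\overline{\mu}_1, \mu_1) + W_p^\zeta(d)(\mu_1, \mu_2) + W_p^\zeta(d)(\mu_2, \overline{\mu}_2)$; rearranging and using symmetry as before yields the same expression. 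Taking the maximum of the two one-sided inequalities gives the claim.

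I do not expect a genuine obstacle here; the only point requiring care is the order of operations, in that the metric swap on the right-hand side must be applied to the innermost term $W_p^\zeta(d^\prime)(\overline{\mu}_1, \overline{\mu}_2)$ (respectively $W_p^\zeta(d)(\overline{\mu}_1, \overline{\mu}_2)$) and never to the endpoint terms, so that no spurious second copy of $\norm{d - d^\prime}_\infty$ is generated, together with the symmetry remark so that the endpoint terms appear in precisely the form $W_p^\zeta(d)(\mu_i, \overline{\mu}_i)$. Everything else is a two-line manipulation in the same style as the proof of Lemma~\ref{lem:p-wass-difference-bound}.
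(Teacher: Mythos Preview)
Your proposal is correct and follows essentially the same approach as the paper: both arguments add and subtract $W_p^\zeta(d)(\overline{\mu}_1,\overline{\mu}_2)$, use Lemma~\ref{lem:p-wass-difference-bound} once to control the metric swap on that term, and use Lemma~\ref{lem:wass-triangle-lemma} twice to control the marginal swap. The only cosmetic difference is that the paper works inside a single absolute value using the add--subtract trick, whereas you treat the two one-sided inequalities separately and make the symmetry of $W_p^\zeta(d)$ explicit; the content is identical.
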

\begin{proof}
\begin{align*}
&|W_p^\zeta(d)(\mu_1, \mu_2) - W_p^\zeta(d^\prime)(\overline{\mu}_1, \overline{\mu}_2)| \\
&= \Big|
W_p^\zeta(d)(\mu_1, \mu_2)  - W_p^\zeta(d)(\overline{\mu}_1, \overline{\mu}_2)  + W_p^\zeta(d)(\overline{\mu}_1, \overline{\mu}_2) - W_p^\zeta(d^\prime)(\overline{\mu}_1, \overline{\mu}_2) \Big| \\
&\leq \Big|
W_p^\zeta(d)(\mu_1, \mu_2) - W_p^\zeta(d)(\overline{\mu}_1, \overline{\mu}_2) \Big| + \Big| W_{p}^\zeta(d)(\overline{\mu}_1, \overline{\mu}_2) - W_{p}^\zeta(d^\prime)(\overline{\mu}_1, \overline{\mu}_2) \Big| \tag*{(triangle inequality)} \\
&\leq \Big| W_{p}^\zeta(d)(\mu_1, \mu_2) - W_{p}^\zeta(d)(\overline{\mu}_1, \overline{\mu}_2) \Big| + \norm{d - d^\prime}_\infty  \tag*{(by Lemma \ref{lem:p-wass-difference-bound})} \\
&\leq \Big| W_{p}^\zeta(d)(\mu_1, \overline{\mu}_1) + W_{p}^\zeta(d)(\overline{\mu}_1, \overline{\mu}_2) + W_{p}^\zeta(d)(\overline{\mu}_2, \mu_2) - W_{p}^\zeta(d)(\overline{\mu}_1, \overline{\mu}_2) \Big| + \norm{d - d^\prime}_\infty \tag*{(by Lemma \ref{lem:wass-triangle-lemma})}\\
&= W_{p}^\zeta(d)(\mu_1, \overline{\mu}_1) + W_{p}^\zeta(d)(\mu_2, \overline{\mu}_2) + \norm{d - d^\prime}_\infty.  \tag*\qedhere
\end{align*}
\end{proof}

\existencep*
\begin{proof}
Similarly to the proofs of Thm. 3.12 of \citet{Ferns2011Bisimulation} and Remark 1 of \citet{kemertas}, it suffices to show that above fixed-point updates are contraction mappings. Then, we invoke the Banach fixed-point theorem to show the existence of a unique metric. Intuitively, applying the operator $\gF$ on different metrics $d, d^\prime \in \mathfrak{met}(\gS)$ should bring the metrics closer under the supremum (uniform) norm, such that $\lim_{n \rightarrow \infty}\gF^{(n)}(d) = \lim_{n \rightarrow \infty}\gF^{(n)}(d^\prime)$, i.e., they converge to the same unique metric. Here, compactness of $\gS$ implies that $\mathfrak{met}(\gS)$ is complete such that the Banach fixed-point theorem can be applied \citep{Ferns2011Bisimulation}. Due to Lemma \ref{lem:p-wass-difference-bound},
\begin{align*}
    \gF_\pi(d)(\vs_i, \vs_j) - \gF_\pi(d^\prime)(\vs_i, \vs_j) &= c_T \left(W_p(d)(\gP_\pi(\vs_i), \gP_\pi(\vs_j)) -  W_p(d^\prime)(\gP_\pi(\vs_i), \gP_\pi(\vs_j)) \right) \\
    & \leq c_T\norm{d-d^\prime}_\infty, ~\forall(\vs_i, \vs_j) \in \gS \times \gS.
\end{align*}
Taking a supremum on the LHS over $\gS \times \gS$, we obtain ${\norm{\gF_\pi(d)-\gF_\pi(d^\prime)}_\infty \leq c_T \norm{d - d^\prime}_\infty}$, i.e., $\gF_\pi$ is a $c_T$-contraction with respect to the sup-norm. Then, $\gF_\pi$ has a unique fixed-point due to the Banach-fixed point theorem. The same result readily applies to $\gF$ as well.

We now prove (\ref{eq:vfa-vpi}) for $\gF_\pi$. First, note that by definition of the primal Sinkhorn distance in (\ref{eq:zeta-sinkhorn}), we have for any $p, d, \mu_1, \mu_2$ 
\begin{align*}
\zeta_1 \leq \zeta_2 \Rightarrow \Omega(\zeta_2) \subseteq \Omega(\zeta_1) \Rightarrow W_p^{\zeta_1}(d)(\mu_1, \mu_2) \leq W_p^{\zeta_2}(d)(\mu_1, \mu_2).
\end{align*}
Then, given fixed-point metrics $d_1$ and $d_2$ given by $(p, \zeta_1)$ and $(p, \zeta_2)$, we have ${\zeta_1 \leq \zeta_2 \Rightarrow d_1 \leq d_2}$. More generally, we have ${(p_1, \zeta_1) \preceq (p_2, \zeta_2) \Rightarrow d_1 \leq d_2}$ as noted in Remark \ref{rmk:p-wass-finer} since ${p_1 \leq p_2 \Rightarrow W_{p_1}(d)(\mu_1, \mu_2) \leq W_{p_2}(d)(\mu_1, \mu_2)}$ \citep{villani2008optimal}. But from \citet{castro2020scalable} and \citet{kemertas} we have for any $\pi$, ${c_R |V^\pi(\vs) - V^\pi(\vs^\prime)| \leq d_\pi^\sim(\vs, \vs^\prime)}$ for the special case when ${p=1}$ and $\zeta$ is sufficiently small so that the constraint is satisfied for all ${\omega \in \Omega}$ and  ${W_p^\zeta(d)(\mu_1, \mu_2) = W_p(d)(\mu_1, \mu_2)}$. Then, we have ${(p_1, \zeta_1) \preceq (p_2, \zeta_2) \Rightarrow c_R |V^\pi(\vs) - V^\pi(\vs^\prime)| \leq d_1 \leq d_2}$ so that all metrics defined by $(p, \zeta)$ provide an upper bound on ${c_R |V^\pi(\vs) - V^\pi(\vs^\prime)|}$. The result (\ref{eq:vfa-vpi}) follows immediately from Lipschitz continuity of $V_\pi$ with respect to $d_\pi^\sim$ (see proof of Lemma 8 by \citet{kemertas}), which is true of all metrics given by $(p, \zeta)$; hence (\ref{eq:vfa-vpi}) holds for all $(p, \zeta)$. The same logic applies to $\gF$ and $V^*$ due to \citet{Ferns2011Bisimulation}, so we skip the proof of (\ref{eq:vfa-vstar}) for brevity.
\end{proof}

\begin{restatable}[Total variation distance \citep{gibbs2002totalvariation}]{definition}{totalvariation}
\label{def:total-variation}
The total variation distance between a pair of distributions $\mu_1, \mu_2$ over a measurable space $\gX$ is given by the following:
\begin{align}
    D_{\mathrm{TV}}(\mu_1, \mu_2) &= \sup_{A \subset \gX} |\mu_1(A) - \mu_2(A)|\\
    &= \frac{1}{2} \max_{f \in \gF} \left|\int_{\gX}f(\vx)(\mu_1(\vx) - \mu_2(\vx))d\vx \right| \label{eq:tv-dual}
\end{align}
where $\gF = \{f: \gX \rightarrow \mathbb{R} ~|~ \norm{f}_\infty \leq 1 \}$. For discrete distributions, $D_{\mathrm{TV}}(\mu_1, \mu_2) = \frac{1}{2}\norm{\mu_1 - \mu_2}_1$.
\end{restatable}

\begin{restatable}[Total variation distance to a mixture distribution]{corollary}{totalvariationcor}
\label{cor:total-variation-mixture}
Given distributions $\mu_1, \mu_2$ over a measurable space $\gX$ and a scalar $\alpha \in [0, 1]$:
\begin{align}
    D_{\mathrm{TV}}(\mu_1, (1-\alpha)\mu_1 + \alpha \mu_2) &= \alpha D_{\mathrm{TV}}(\mu_1, \mu_2)
\end{align}
\end{restatable}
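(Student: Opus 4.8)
The plan is to reduce the claim to the elementary observation that the signed measure $\mu_1 - \big((1-\alpha)\mu_1 + \alpha\mu_2\big)$ equals $\alpha(\mu_1 - \mu_2)$, and then to push the nonnegative scalar $\alpha$ through the supremum defining $D_{\mathrm{TV}}$. Concretely, I would first set $\nu \coloneqq (1-\alpha)\mu_1 + \alpha\mu_2$ and note that for every measurable $A \subset \gX$, $\mu_1(A) - \nu(A) = \mu_1(A) - (1-\alpha)\mu_1(A) - \alpha\mu_2(A) = \alpha\big(\mu_1(A) - \mu_2(A)\big)$. This linearity identity is the only real content of the proof.

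Next, starting from the set-based form of Definition \ref{def:total-variation}, namely $D_{\mathrm{TV}}(\mu_1, \nu) = \sup_{A \subset \gX}|\mu_1(A) - \nu(A)|$, I would substitute the identity above to get $\sup_{A \subset \gX}\big|\alpha(\mu_1(A) - \mu_2(A))\big| = \alpha\,\sup_{A \subset \gX}\big|\mu_1(A) - \mu_2(A)\big| = \alpha\,D_{\mathrm{TV}}(\mu_1, \mu_2)$, where pulling $\alpha$ out of both the absolute value and the supremum is justified because $\alpha \in [0,1]$ is a nonnegative constant independent of $A$. Equivalently, one could run the identical computation through the dual form (\ref{eq:tv-dual}) over the witness class $\gF = \{f : \norm{f}_\infty \leq 1\}$, or through $\tfrac{1}{2}\norm{\mu_1 - \nu}_1$ in the discrete case; all three routes yield the same answer.

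I do not anticipate any genuine obstacle here: the argument is a one-line linearity computation. The only points worth a remark are that $\alpha \geq 0$ is what allows $|\alpha| = \alpha$, and that the degenerate cases $\alpha = 0$ (both sides vanish) and $\alpha = 1$ (the statement collapses to $D_{\mathrm{TV}}(\mu_1, \mu_2) = D_{\mathrm{TV}}(\mu_1, \mu_2)$) are subsumed by the same computation rather than requiring separate treatment.
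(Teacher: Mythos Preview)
Your argument is correct and matches the paper's own proof, which simply states that the result ``follows immediately from (\ref{eq:tv-dual}).'' You chose the set-based characterization of $D_{\mathrm{TV}}$ while the paper cites the dual form, but as you already note these routes are interchangeable and the content is the same one-line linearity computation.
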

\begin{proof}
Follows immediately from (\ref{eq:tv-dual}).
\end{proof}

\begin{restatable}[Total variation distance between policies vs. transition distributions]{lemma}{tvlemma}
\label{lem:tv-pi-P}
Consider a pair of policies $\pi, \pi^\prime$ and the policy-dependent transition distributions $\gP_\pi(\vs), \gP_{\pi^\prime}(\vs)$,
\begin{align}
    D_{\mathrm{TV}}(\gP_\pi(\vs), \gP_{\pi^\prime}(\vs)) \leq D_{\mathrm{TV}}(\pi(\vs), \pi^\prime(\vs)).
\end{align}
\end{restatable}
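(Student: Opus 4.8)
The plan is to view $\gP_\pi(\vs)$ as a $\pi(\vs)$-weighted average of the action-conditioned kernels $\gP(\vs, \va)$, and then exploit the dual (variational) characterization of total variation distance from Definition~\ref{def:total-variation}. The statement is really a data-processing inequality: passing both policies through the common Markov kernel $\gP(\vs, \cdot)$ cannot increase their TV distance. The crucial point is that one must work with the test-function form~(\ref{eq:tv-dual}) rather than the primal $L^1$ form, since the latter loses a factor of two.

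Concretely, fix $\vs \in \gS$ and recall that $\gP_\pi(\vs^\prime \mid \vs) = \int_\gA \pi(\va \mid \vs)\, \gP(\vs^\prime \mid \vs, \va)\, d\va$, and likewise for $\pi^\prime$. Let $f: \gS \rightarrow \mathbb{R}$ with $\norm{f}_\infty \leq 1$ and define $g(\va) = \int_\gS f(\vs^\prime)\, \gP(\vs^\prime \mid \vs, \va)\, d\vs^\prime$. Since $\gP(\cdot \mid \vs, \va)$ is a probability measure, $\norm{g}_\infty \leq \norm{f}_\infty \leq 1$. By Fubini's theorem (justified by boundedness of $f$ and the assumptions on the MDP),
\begin{align*}
\int_\gS f(\vs^\prime)\big(\gP_\pi(\vs^\prime \mid \vs) - \gP_{\pi^\prime}(\vs^\prime \mid \vs)\big)\, d\vs^\prime = \int_\gA \big(\pi(\va \mid \vs) - \pi^\prime(\va \mid \vs)\big)\, g(\va)\, d\va.
\end{align*}
The right-hand side is bounded in absolute value by $\sup_{\norm{h}_\infty \leq 1}\big|\int_\gA (\pi(\va\mid\vs) - \pi^\prime(\va\mid\vs))\, h(\va)\, d\va\big| = 2\, D_{\mathrm{TV}}(\pi(\vs), \pi^\prime(\vs))$ by~(\ref{eq:tv-dual}). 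Taking the supremum over all $f$ with $\norm{f}_\infty \leq 1$ on the left and applying~(\ref{eq:tv-dual}) again yields $2\, D_{\mathrm{TV}}(\gP_\pi(\vs), \gP_{\pi^\prime}(\vs)) \leq 2\, D_{\mathrm{TV}}(\pi(\vs), \pi^\prime(\vs))$; dividing by $2$ gives the claim.

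The only thing to be careful about is \emph{not} arguing via the $L^1$ form: the bound $\big|\int_\gA (\pi - \pi^\prime)\gP\, d\va\big| \leq \int_\gA |\pi - \pi^\prime|\, \gP\, d\va \leq \norm{\pi(\vs) - \pi^\prime(\vs)}_1$ would only give $D_{\mathrm{TV}}(\gP_\pi(\vs), \gP_{\pi^\prime}(\vs)) \leq 2\, D_{\mathrm{TV}}(\pi(\vs), \pi^\prime(\vs))$, which is too weak for the downstream use in Lemma~\ref{lem:distance-difference}. The dual form is exactly what makes the sup-norm contraction by the kernel tight, so that no constant is lost. Everything else (measurability of $g$, interchange of integrals) is routine under the compactness and boundedness conventions already in force.
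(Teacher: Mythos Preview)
Your proof is correct and follows essentially the same route as the paper: apply the dual form~(\ref{eq:tv-dual}), swap the $\gS$- and $\gA$-integrals via Fubini, observe that $g(\va)=\int_\gS f(\vs')\gP(\vs'\mid\vs,\va)\,d\vs'$ satisfies $\norm{g}_\infty\le 1$, and apply~(\ref{eq:tv-dual}) again on the action side. Your version is in fact slightly cleaner than the paper's, which inserts an unnecessary $\max_{\vs\in\gS}$ over the already-fixed state before passing to the supremum over $g\in\gG$; your added remark about the lost factor of two under the $L^1$ route is also apt.
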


\begin{proof}
Let $\gF = \{f: \gS \rightarrow \mathbb{R} ~|~ \norm{f}_\infty \leq 1 \}$ and $\gG = \{g: \gA \rightarrow \mathbb{R} ~|~ \norm{g}_\infty \leq 1 \}$. Using Definition \ref{def:total-variation} of the total variation distance,
\begingroup
\allowdisplaybreaks
\begin{align*}
    & D_{\mathrm{TV}}(\gP_\pi(\vs), \gP_{\pi^\prime}(\vs)) \\
    &= \frac{1}{2} \max_{f \in \gF} \left|\int_{\gS}f(\vs^\prime)(\gP_\pi(\vs^\prime|\vs) - \gP_\pi(\vs^\prime|\vs))d\vs^\prime \right| \\
    &= \frac{1}{2} \max_{f \in \gF} \left|\int_{\gS}f(\vs^\prime)\int_{\gA}\gP(\vs^\prime|\vs, \va)\Big(\pi(\va|\vs) - \pi^\prime(\va|\vs)\Big)d\va d\vs^\prime \right| \\
    &= \frac{1}{2} \max_{f \in \gF} \bigg|\int_{\gA} \Big(\pi(\va|\vs) - \pi^\prime(\va|\vs)\Big) \int_{\gS}f(\vs^\prime)\gP(\vs^\prime|\vs, \va)d\vs^\prime d\va \bigg| \tag*{(by Fubini's theorem)} \\
    & = \frac{1}{2} \bigg|\int_{\gA} \Big(\pi(\va|\vs) - \pi^\prime(\va|\vs)\Big) \underbrace{\int_{\gS}f^*(\vs^\prime)\gP(\vs^\prime|\vs, \va)d\vs^\prime}_{=\tilde{f}(\vs, \va)} d\va \bigg| \\
    & \leq \frac{1}{2} \bigg|\int_{\gA} \Big(\pi(\va|\vs) - \pi^\prime(\va|\vs)\Big) \underbrace{\max_{\vs \in \gS}\tilde{f}(\vs, \va)}_{=\tilde{g}(\va)} d\va \bigg| \\
    & \leq \frac{1}{2} \max_{g \in \gG} \bigg|\int_{\gA} \Big(\pi(\va|\vs) - \pi^\prime(\va|\vs)\Big) g(\va) d\va \bigg| \\
    &= D_{\mathrm{TV}}(\pi(\vs), \pi^\prime(\vs)),
\end{align*}
\endgroup
where we used the fact that $\norm{\tilde{f}}_\infty \leq 1$ and $\norm{\tilde{g}}_\infty \leq 1$.
\end{proof}

\begin{restatable}[Total variation distance between policies vs. immediate reward difference]{lemma}{tvlemma2}
\label{lem:tv-pi-R}
Consider a pair of policies $\pi, \pi^\prime$ and the policy-dependent reward functions $R_\pi, R_{\pi^\prime}$,
\begin{align}
    |R_\pi(\vs) - R_{\pi^\prime}(\vs)| \leq D_{\mathrm{TV}}(\pi(\vs), \pi^\prime(\vs)).
\end{align}
\end{restatable}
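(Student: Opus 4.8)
The plan is to expand $R_\pi(\vs) - R_{\pi'}(\vs)$ as an integral of $R(\vs, \cdot)$ against the signed measure $\pi(\vs) - \pi'(\vs)$ and then invoke the dual characterization of total variation distance from Definition \ref{def:total-variation}. Since $R_\pi(\vs) = \int_\gA R(\vs, \va)\,\pi(\va|\vs)\, d\va$, we have
\[
    R_\pi(\vs) - R_{\pi'}(\vs) = \int_\gA R(\vs, \va)\big(\pi(\va|\vs) - \pi'(\va|\vs)\big)\, d\va .
\]
The one subtlety is the constant. Applying (\ref{eq:tv-dual}) directly with test function $f = R(\vs, \cdot)$ only yields the twofold-weaker bound $|R_\pi(\vs) - R_{\pi'}(\vs)| \le 2\, D_{\mathrm{TV}}(\pi(\vs), \pi'(\vs))$, since (\ref{eq:tv-dual}) carries a factor $\tfrac12$ while we only know $\norm{R(\vs, \cdot)}_\infty \le 1$. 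To recover the claimed constant I would exploit that $R$ maps into $[0,1]$ rather than merely $[-1,1]$: because $\int_\gA \big(\pi(\va|\vs) - \pi'(\va|\vs)\big)\, d\va = 0$, subtracting $\tfrac12$ from $R$ leaves the integral unchanged,
\[
    R_\pi(\vs) - R_{\pi'}(\vs) = \int_\gA \big(R(\vs, \va) - \tfrac12\big)\big(\pi(\va|\vs) - \pi'(\va|\vs)\big)\, d\va ,
\]
and now $\norm{R(\vs, \cdot) - \tfrac12}_\infty \le \tfrac12$.

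Writing $R(\vs, \cdot) - \tfrac12 = \tfrac12 g$ with $g \in \gG = \{g : \gA \to \R \mid \norm{g}_\infty \le 1\}$, the proof then closes by
\[
    |R_\pi(\vs) - R_{\pi'}(\vs)| \le \tfrac12 \max_{g \in \gG} \left| \int_\gA g(\va)\big(\pi(\va|\vs) - \pi'(\va|\vs)\big)\, d\va \right| = D_{\mathrm{TV}}(\pi(\vs), \pi'(\vs)),
\]
the final equality being exactly (\ref{eq:tv-dual}). An equivalent route avoiding the centering trick is a Hahn-type decomposition: split $\gA = A^+ \cup A^-$ with $A^+ = \{\va : \pi(\va|\vs) \ge \pi'(\va|\vs)\}$, note that $\int_{A^+}(\pi - \pi')\, d\va = \int_{A^-}(\pi' - \pi)\, d\va = D_{\mathrm{TV}}(\pi(\vs), \pi'(\vs))$, and bound the $A^+$ contribution above using $R \le 1$ together with the $A^-$ contribution below using $R \ge 0$ (and symmetrically for the lower bound).

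This lemma is essentially a one-liner and parallels Lemma \ref{lem:tv-pi-P}, but is strictly simpler: there is no inner transition kernel, so no appeal to Fubini's theorem is needed. The only thing requiring care is the constant, and that is precisely where the assumption $R(\gS \times \gA) \subseteq [0, 1]$ — rather than a generic bounded reward range — is used; I anticipate no real obstacle.
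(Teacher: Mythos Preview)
Your proposal is correct and takes essentially the same approach as the paper: expand the reward difference as an integral against $\pi - \pi'$, exploit that $\pi$ and $\pi'$ both integrate to one to center $R$ (the paper writes $2R-1$ where you write $R - \tfrac12 = \tfrac12 g$, which is algebraically the same), and then invoke the dual form (\ref{eq:tv-dual}) with a test function of sup-norm at most $1$. Your Hahn-decomposition alternative is not used in the paper but is a valid equivalent route.
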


\begin{proof}
\begin{align*}
    |R_\pi(\vs) - R_{\pi^\prime}(\vs)| &= \left| \int_{\gA}{\left(\pi(\va|\vs) - \pi^\prime(\va|\vs) \right) R(\vs, \va)d\va} \right| \\
    & = \frac{1}{2} \left| \int_{\gA}{\left(\pi(\va|\vs) - \pi^\prime(\va|\vs) \right) 2R(\vs, \va)d\va} \right| \\
    & = \frac{1}{2} \left| \int_{\gA}{\left(\pi(\va|\vs) - \pi^\prime(\va|\vs) \right) (2R(\vs, \va)-1)d\va} \right| \tag*{(since $\pi$ and $\pi^\prime$ both integrate to 1)} \\
    & \leq D_{\mathrm{TV}}\Big(\pi(\vs), \pi^\prime(\vs)\Big),
\end{align*}
where the last inequality is due to Definition \ref{def:total-variation} and the fact that $\norm{2R-1}_\infty \leq 1$ given $R \in [0, 1]$.
\end{proof}

\begin{restatable}[Wasserstein distance of transition distributions under different policies]{lemma}{Ppilemma}
\label{lem:P-pi-wasserstein-bound}
Given a pair of policies $\pi, \pi^\prime$ and the policy-dependent transition distributions $\gP_\pi(\vs), \gP_{\pi^\prime}(\vs)$,
\begin{align}
    W_p(d_\pi)(\gP_\pi(\vs), \gP_{\pi^\prime}(\vs)) \leq \frac{c_R}{1-c_T} D_{\mathrm{TV}}(\pi(\vs), \pi^\prime(\vs))^{\frac{1}{p}}, \quad \forall \vs \in \gS.
\end{align}
\end{restatable}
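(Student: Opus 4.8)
The plan is to combine three ingredients: a uniform bound on the size of the $\pi$-bisimulation metric, a coupling-based upper bound on $W_p(d_\pi)$ in terms of total variation distance, and Lemma~\ref{lem:tv-pi-P} to pass from total variation between transition distributions to total variation between policies.

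First I would bound $\norm{d_\pi}_\infty$. Since $d_\pi$ is the fixed point of $\gF_\pi$ (with $\lambda=0$, so the recursive term is a genuine $p$-Wasserstein distance) and $R_\pi$ takes values in $[0,1]$, for every $(\vs_i,\vs_j)$ we have $d_\pi(\vs_i,\vs_j) = c_R|R_\pi(\vs_i)-R_\pi(\vs_j)| + c_T W_p(d_\pi)(\gP_\pi(\vs_i),\gP_\pi(\vs_j)) \le c_R + c_T\norm{d_\pi}_\infty$, where we used that $W_p(d)(\mu,\nu)\le\norm{d}_\infty$ for any cost $d$ (immediate from $\mathbb{E}_\omega[d^p]\le\norm{d}_\infty^p$ for every coupling $\omega$, followed by taking an infimum and a $p$-th root). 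Taking the supremum over $(\vs_i,\vs_j)$ and rearranging (recall $c_T<1$) gives $\norm{d_\pi}_\infty \le c_R/(1-c_T)$.

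Next, for arbitrary probability measures $\mu_1,\mu_2$ on $\gS$ set $\delta = D_{\mathrm{TV}}(\mu_1,\mu_2)$ and take the maximal coupling $\omega$ of $(\mu_1,\mu_2)$, which satisfies $\omega(\{(x,y):x\ne y\}) = \delta$; it is built by writing $\mu_i = (1-\delta)\nu + \delta\nu_i$ with $\nu$ the normalized common part $\mu_1\wedge\mu_2$, then gluing $\nu$ on the diagonal with the independent coupling $\nu_1\otimes\nu_2$ of the residuals. Since $d_\pi$ is a pseudo-metric, $d_\pi(x,x)=0$, so the integrand $d_\pi(X,Y)^p$ vanishes on $\{X=Y\}$ and hence $\mathbb{E}_\omega[d_\pi(X,Y)^p] \le \norm{d_\pi}_\infty^p\,\omega(\{X\ne Y\}) = \delta\norm{d_\pi}_\infty^p$. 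Because $W_p(d_\pi)(\mu_1,\mu_2)$ is an infimum over couplings, this particular coupling yields $W_p(d_\pi)(\mu_1,\mu_2) \le \delta^{1/p}\norm{d_\pi}_\infty \le \frac{c_R}{1-c_T}\,D_{\mathrm{TV}}(\mu_1,\mu_2)^{1/p}$.

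Finally I would specialize to $\mu_1 = \gP_\pi(\vs)$ and $\mu_2 = \gP_{\pi^\prime}(\vs)$ and invoke Lemma~\ref{lem:tv-pi-P}, namely $D_{\mathrm{TV}}(\gP_\pi(\vs),\gP_{\pi^\prime}(\vs)) \le D_{\mathrm{TV}}(\pi(\vs),\pi^\prime(\vs))$, together with monotonicity of $t\mapsto t^{1/p}$ on $[0,1]$, to obtain the claimed bound for every $\vs\in\gS$. The only delicate point is the existence of the maximal coupling on a possibly non-finite compact $\gS$; this is classical (Lebesgue/Hahn decomposition to define $\mu_1\wedge\mu_2$, plus the gluing lemma already used elsewhere in the appendix), so the remainder is routine.
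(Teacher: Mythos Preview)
Your proof is correct and follows essentially the same three-step skeleton as the paper: bound $\norm{d_\pi}_\infty\le c_R/(1-c_T)$, convert the bounded cost into a total-variation estimate for $W_p$, and finish with Lemma~\ref{lem:tv-pi-P}. The only difference is in the middle step. The paper rescales $d_\pi$ to lie in $[0,1]$, compares it pointwise to the indicator cost $\1[\vs\neq\vs^\prime]$, and then invokes the identity $W_p(\1[\cdot\neq\cdot])(\mu_1,\mu_2)=D_{\mathrm{TV}}(\mu_1,\mu_2)^{1/p}$ (derived from $\1^p=\1$ together with the classical $W_1(\1)=D_{\mathrm{TV}}$ from \citet{villani2008optimal}). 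You instead construct the maximal coupling directly and bound $\mathbb{E}_\omega[d_\pi^p]$ on the off-diagonal. The two arguments are equivalent in content; yours is slightly more self-contained (no appeal to the $W_1(\1)=D_{\mathrm{TV}}$ identity), while the paper's version makes the monotonicity of $W_p$ in the ground cost explicit.
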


\begin{proof}
First, recall from (6.11) of \citep{villani2008optimal} that the 1-Wasserstein distance under the indicator function $\1[\vx \neq \vx^\prime]$ is equal to the total variation distance. Then,
\begin{align*}
W_1(\1[\vx \neq \vx^\prime])(\mu_1, \mu_2) &= \min_{\omega \in \Omega} \norm{\1[\vx \neq \vx^\prime]}_{1, \omega} = D_{\mathrm{TV}}(\mu_1, \mu_2) \\
& = \min_{\omega \in \Omega} \norm{\1[\vx \neq \vx^\prime]^p}_{1, \omega}, ~\forall p \in [1, \infty) \tag*{(since $\1[\vx \neq \vx^\prime]=\1[\vx \neq \vx^\prime]^p$)} \\
&= \min_{\omega \in \Omega} \norm{\1[\vx \neq \vx^\prime]}_{p, \omega}^p, ~\forall p \in [1, \infty) \\
&= W_p^p(\1[\vx \neq \vx^\prime])(\mu_1, \mu_2), ~\forall p \in [1, \infty),
\end{align*}
which implies for all $p \in [1, \infty)$,
\begin{align}
\label{eq:wass-to-dtv}
    W_p(\1[\vx \neq \vx^\prime])(\mu_1, \mu_2) = D_{\mathrm{TV}}(\mu_1, \mu_2)^{\frac{1}{p}}.
\end{align}

Next, note by Lemma 1 of \citet{kemertas}, all bisimulation metrics are bounded above by $c_R / (1-c_T)$ for $R \in [0, 1]$.
\begin{align*}
    &W_{p}(d_{\pi})(\gP_\pi(\vs), \gP_{\pi^\prime}(\vs)) \\
    &= \frac{c_R}{1-c_T} W_{p}(\frac{d_{\pi}(1-c_T)}{c_R})(\gP_\pi(\vs), \gP_{\pi^\prime}(\vs)) \\
    & \leq \frac{c_R}{1-c_T} W_{p}(\1[\vs \neq \vs^\prime])(\gP_\pi(\vs), \gP_{\pi^\prime}(\vs)) \\
    & = \frac{c_R}{1-c_T} D_{\mathrm{TV}}(\gP_\pi(\vs), \gP_{\pi^\prime}(\vs))^{\frac{1}{p}} \tag*{(due to (\ref{eq:wass-to-dtv}))} \\
    & \leq \frac{c_R}{1-c_T} D_{\mathrm{TV}}(\pi(\vs), \pi^\prime(\vs))^{\frac{1}{p}} \tag*{(by Lemma \ref{lem:tv-pi-P})\qedhere}
\end{align*}
\end{proof}

\diffpolicybisim*
\begin{proof}
\begin{align*}
&\Big| d_\pi(\vs_i, \vs_j) - d_{\pi^\prime}(\vs_i, \vs_j) \Big| \\
&= \left| c_R\Big(|R_\pi(\vs_i) - R_\pi(\vs_j)| - |R_{\pi^\prime}(\vs_i) - R_{\pi^\prime}(\vs_j)|\right) \\
&~~+ c_T \Big(
W_{p}(d_{\pi})(\gP_\pi(\vs_i), \gP_\pi(\vs_j)) - W_{p}(d_{\pi^\prime})(\gP_{\pi^\prime}(\vs_i), \gP_{\pi^\prime}(\vs_j)) \Big) \Big| \\
& \leq c_R \underbrace{\Big| |R_\pi(\vs_i) - R_\pi(\vs_j)| - |R_{\pi^\prime}(\vs_i) - R_{\pi^\prime}(\vs_j)| \Big|}_{\circled{1}} \\
&~~+ c_T \underbrace{\Big|
W_{p}(d_{\pi})(\gP_\pi(\vs_i), \gP_\pi(\vs_j)) - W_{p}(d_{\pi^\prime})(\gP_{\pi^\prime}(\vs_i), \gP_{\pi^\prime}(\vs_j)) \Big|}_{\circled{2}} 
\end{align*}
\begin{align*}
    \circled{1} &\leq |R_\pi(\vs_i) - R_{\pi^\prime}(\vs_i)| + |R_\pi(\vs_j) - R_{\pi^\prime}(\vs_j)| \\
    & \leq D_{\mathrm{TV}}\Big(\pi(\vs_i), \pi^\prime(\vs_i)\Big) + D_{\mathrm{TV}}\Big(\pi(\vs_j), \pi^\prime(\vs_j)\Big) \tag*{(by Lemma \ref{lem:tv-pi-R})} \\
    & \leq 2D_{\mathrm{TV}}^{\infty}(\pi, \pi^\prime).
\end{align*}

\begin{align*}
    \circled{2} &\leq W_{p}(d_{\pi})(\gP_\pi(\vs_i), \gP_{\pi^\prime}(\vs_i)) + W_{p}(d_{\pi})( \gP_\pi(\vs_j)), \gP_{\pi^\prime}(\vs_j)) + \norm{d_\pi - d_{\pi^\prime}}_\infty \tag*{(due to Corollary \ref{cor:wass-approx-lemma})} \\ 
&\leq \frac{c_R}{1-c_T} \bigg(D_{\mathrm{TV}}\Big(\pi(\vs_i), \pi^\prime(\vs_i)\Big)^{\frac{1}{p}} + D_{\mathrm{TV}}\Big(\pi(\vs_j), \pi^\prime(\vs_j)\Big)^{\frac{1}{p}} \bigg) + c_T\norm{d_\pi - d_{\pi^\prime}}_\infty \tag*{(due to Lemma \ref{lem:P-pi-wasserstein-bound})} \\
& \leq \frac{2c_R}{1-c_T}D_{\mathrm{TV}}^\infty(\pi, \pi^\prime)^{\frac{1}{p}} + c_T\norm{d_\pi - d_{\pi^\prime}}_\infty
\end{align*}

Combining \circled{1} and \circled{2}, and rearranging:
\begin{align*}
    &\Big| d_\pi(\vs_i, \vs_j) - d_{\pi^\prime}(\vs_i, \vs_j) \Big| \\
    &\leq 2c_R D_{\mathrm{TV}}^\infty(\pi, \pi^\prime) + \frac{2c_T c_R}{1-c_T} D_{\mathrm{TV}}^\infty(\pi, \pi^\prime)^{\frac{1}{p}} + c_T\norm{d_\pi - d_{\pi^\prime}}_\infty \\
    &\leq 2c_R D_{\mathrm{TV}}^\infty(\pi, \pi^\prime)^{\frac{1}{p}} + \frac{2c_T c_R}{1-c_T} D_{\mathrm{TV}}^\infty(\pi, \pi^\prime)^{\frac{1}{p}} + c_T\norm{d_\pi - d_{\pi^\prime}}_\infty \tag*{$\dagger$ (since $D_{\mathrm{TV}} \in [0, 1]$)} \\
    &\leq \frac{2c_R}{1-c_T} D_{\mathrm{TV}}^\infty(\pi, \pi^\prime)^{\frac{1}{p}} + c_T\norm{d_\pi - d_{\pi^\prime}}_\infty
\end{align*}
Taking a supremum over $\gS \times \gS$ on the LHS and rearranging,
\begin{align*}
    \norm{d_\pi - d_{\pi^\prime}}_\infty \leq \frac{2c_R}{(1-c_T)^2} D_{\mathrm{TV}}^\infty(\pi, \pi^\prime)^{\frac{1}{p}}. \tag*\qedhere
\end{align*}
Note that step $\dagger$ serves only to simplify the final expression, but yields a loose bound for $p > 1$. Omitting $\dagger$ one obtains a better bound:
\begin{align*}
    \norm{d_\pi - d_{\pi^\prime}}_\infty \leq \frac{2c_R}{(1-c_T)}\Big( D_{\mathrm{TV}}^\infty(\pi, \pi^\prime) + \frac{c_T}{1-c_T}D_{\mathrm{TV}}^\infty(\pi, \pi^\prime)^{\frac{1}{p}}  \Big). 
\end{align*}
As $c_T \rightarrow 1$ the second term dominates so that the loose bound tightens. Since $c_T$ is typically set to $\gamma$ and is close to 1 in practice, we use the simpler bound for convenience.
\end{proof}

\noindent From Thm. \ref{thm:p-wass} and the proof of Lemma 8 of \citet{kemertas}, we have the following bound on approximation error for $V^\pi$ given an approximation of the $\pi$-bisimulation metric.
\begin{restatable}[Approximating $V^\pi$ under metric learning error \citep{kemertas}]{lemma}{metriclearningerrorVFA}
\label{lem:approximating-Vpi-w-error}
Let $\widehat{d}_\pi$ be an approximation of $d_\pi^\sim$ such that $\norm{d_\pi^\sim - \widehat{d}_\pi}_\infty \leq \gE$. If state abstraction function $\Phi$ is computed such that ${\Phi(\vs_i) = \Phi(\vs_j) \Rightarrow \widehat{d}_\pi(\vs_i, \vs_j) \leq 2\epsilon}$, the following holds:
\begin{align}
    \norm{V^\pi - \widetilde{V}^\pi_\Phi}_\infty \leq \frac{2\epsilon + \gE}{c_R(1-\gamma)}.
\end{align}
\end{restatable}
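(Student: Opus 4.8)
The plan is to reduce the claim to the value-function-approximation guarantee (\ref{eq:vfa-vpi}) already proved in Theorem \ref{thm:p-wass}, the only new ingredient being a bound that converts a partition-diameter guarantee under the \emph{approximate} metric $\widehat{d}_\pi$ into one under the true $\pi$-bisimulation metric $d_\pi^\sim$. Throughout I assume $c_T \geq \gamma$ as in Theorem \ref{thm:p-wass}, so that $c_R|V^\pi(\vs_i) - V^\pi(\vs_j)| \leq d_\pi^\sim(\vs_i, \vs_j)$ for all $\vs_i, \vs_j \in \gS$ (cf. the proof of Lemma 8 of \citet{kemertas}).

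First I would fix any pair $\vs_i, \vs_j \in \gS$ with $\Phi(\vs_i) = \Phi(\vs_j)$. The hypothesis on $\Phi$ gives $\widehat{d}_\pi(\vs_i, \vs_j) \leq 2\epsilon$, while $|d_\pi^\sim(\vs_i, \vs_j) - \widehat{d}_\pi(\vs_i, \vs_j)| \leq \norm{d_\pi^\sim - \widehat{d}_\pi}_\infty \leq \gE$; adding the two yields $d_\pi^\sim(\vs_i, \vs_j) \leq 2\epsilon + \gE$. Hence $\Phi$ satisfies the abstraction condition (\ref{eq:bisimulation-based-abstraction}) with respect to the true metric $d = d_\pi^\sim$ and threshold $2\epsilon'$, where $\epsilon' := \epsilon + \gE/2$; equivalently, $\Phi$ is an $\epsilon'$-aggregation of $\gS$ under $d_\pi^\sim$.

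It then suffices to apply (\ref{eq:vfa-vpi}) with $\epsilon$ replaced by $\epsilon'$. Because that bound holds for the fixed point $d_\pi^\sim$ of $\gF_\pi$ defined via any $(p, \zeta)$ (Theorem \ref{thm:p-wass}), and its proof uses $\Phi$ only through the diameter guarantee (\ref{eq:bisimulation-based-abstraction}) together with the Lipschitz estimate above, we obtain
\[
\norm{V^\pi - \widetilde{V}^\pi_\Phi}_\infty \leq \frac{2\epsilon'}{c_R(1-\gamma)} = \frac{2\epsilon + \gE}{c_R(1-\gamma)}.
\]
If one prefers a self-contained route, the same conclusion follows by re-running the standard aggregation argument directly: within any partition $|V^\pi(\vs_i) - V^\pi(\vs_j)| \leq (2\epsilon + \gE)/c_R$ by Lipschitz continuity, and a contraction argument for the Bellman operator of the aggregated MDP $\langle \widetilde{\gS}, \gA, \widetilde{\gP}_\pi, \widetilde{R}_\pi, \gamma \rangle$ contributes the remaining factor $1/(1-\gamma)$.

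The main obstacle here is negligible and purely a matter of bookkeeping: one must confirm that the proof of (\ref{eq:vfa-vpi}) never exploits how $\Phi$ was constructed beyond the diameter bound (\ref{eq:bisimulation-based-abstraction}), so that supplying a $\Phi$ derived from $\widehat{d}_\pi$ together with the enlarged radius $\epsilon'$ is admissible. No inequality beyond the triangle-type step in the second paragraph is required.
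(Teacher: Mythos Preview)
Your proposal is correct and matches the paper's approach: the paper does not give a self-contained proof but simply states that the bound follows ``from Thm.~\ref{thm:p-wass} and the proof of Lemma 8 of \citet{kemertas}'', which is precisely your reduction---convert the $2\epsilon$ diameter guarantee under $\widehat{d}_\pi$ into a $2\epsilon + \gE$ guarantee under $d_\pi^\sim$ via the sup-norm error, then invoke (\ref{eq:vfa-vpi}) with the enlarged radius.
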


\noindent The following extends Thm. 4 of \citet{kemertas} to $(p, \zeta)$-Sinkhorn distances.
\begin{restatable}[Decomposition of error sources in VFA for $(p, \zeta)$-Sinkhorn bisimulation metrics]{lemma}{pwassDecomposition}
\label{thm:p-Wass-decomp}
Consider a bounded approximate reward function $\widehat{R}$ and dynamics model $\widehat{\gP}$ supported on a closed subset of $\gS$. For any $p \geq 1$ and $\zeta \geq 0$, there exists a unique metric $d_{\pi, \widehat{\gP}}$ such that
\begin{align*}
    d_{\pi, \widehat{\gP}}(\vs_i, \vs_j) = c_R|\widehat{R}_\pi(\vs_i) - \widehat{R}_\pi(\vs_i)| + c_T W_p^\zeta(d_{\pi, \widehat{\gP}})( \widehat{\gP}_\pi(\vs_i), \widehat{\gP}_\pi(\vs_j)).
\end{align*}
Furthermore, aggregation via a learned approximation $\widehat{d}_{\pi}$ of $d_{\pi, \widehat{\gP}}$ yields the following bound for any $p \geq 1$: 
\begin{equation}
\norm{V^\pi - \widetilde{V}^\pi_\Phi}_\infty
\leq 
\frac{1}{c_R(1 - \gamma)} \left( 
    2 \,\epsilon + 
    \gE_m + 
    \frac{2c_R}{1 - c_T}\gE_R + 
    \frac{2c_T}{1 - c_T}\gE_{\mathcal{P}} \right).
\end{equation}
where 
$ \gE_m := \norm{\widehat{d}_{\pi} - d_{\pi, \widehat{\gP}}}_\infty $ 
    is the metric learning error,
$ \gE_R := \norm{\widehat{R}_\pi - {R}_\pi}_\infty $ 
    is the reward approximation error,
and 
    $ \gE_{\gP} := \sup_{\vs \in \gS} 
        W_p^\zeta(d_\pi)( {\gP}_\pi(\vs), 
             \widehat{\gP}_\pi(\vs) 
            ) $
    is the state transition model error.
\end{restatable}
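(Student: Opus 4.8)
The plan is to reduce the statement to Lemma~\ref{lem:approximating-Vpi-w-error} by controlling how far the learned metric $\widehat{d}_\pi$ sits from the \emph{true} $(p,\zeta)$-bisimulation metric $d_\pi^\sim$ (the fixed point of $\gF_\pi$ built from the exact $R,\gP$). First I would establish existence and uniqueness of $d_{\pi,\widehat{\gP}}$: since $\widehat{R}$ is bounded and each $\widehat{\gP}_\pi(\vs)$ is a probability measure supported on a closed, hence compact, subset of $\gS$ (so it has bounded moments of every order), the map $d \mapsto c_R|\widehat{R}_\pi(\vs_i)-\widehat{R}_\pi(\vs_j)| + c_T W_p^\zeta(d)(\widehat{\gP}_\pi(\vs_i),\widehat{\gP}_\pi(\vs_j))$ sends $\mathfrak{met}(\gS)$ to itself and, by Lemma~\ref{lem:p-wass-difference-bound}, is a $c_T$-contraction in $\norm{\cdot}_\infty$; the Banach fixed-point theorem on the complete space $\mathfrak{met}(\gS)$ then applies verbatim as in the proof of Theorem~\ref{thm:p-wass}.

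Next I would bound $\Delta := \norm{d_\pi^\sim - d_{\pi,\widehat{\gP}}}_\infty$ by subtracting the two fixed-point identities at an arbitrary pair $(\vs_i,\vs_j)$ and splitting the difference into a reward part and a transport part. The reward part is handled by the reverse triangle inequality, $\bigl|\,|R_\pi(\vs_i)-R_\pi(\vs_j)| - |\widehat{R}_\pi(\vs_i)-\widehat{R}_\pi(\vs_j)|\,\bigr| \le |R_\pi(\vs_i)-\widehat{R}_\pi(\vs_i)| + |R_\pi(\vs_j)-\widehat{R}_\pi(\vs_j)| \le 2\gE_R$. For the transport part I would invoke Corollary~\ref{cor:wass-approx-lemma} with the true metric $d_\pi^\sim$ in the role of its first metric and $d_{\pi,\widehat{\gP}}$ as its second, so that the two marginal-transport terms it produces are exactly $W_p^\zeta(d_\pi^\sim)(\gP_\pi(\vs_i),\widehat{\gP}_\pi(\vs_i))$ and $W_p^\zeta(d_\pi^\sim)(\gP_\pi(\vs_j),\widehat{\gP}_\pi(\vs_j))$, each at most $\gE_\gP$ by definition, together with a single copy of $\norm{d_\pi^\sim - d_{\pi,\widehat{\gP}}}_\infty = \Delta$. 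Collecting terms, taking a supremum over $(\vs_i,\vs_j)$, and rearranging gives $(1-c_T)\Delta \le 2c_R\gE_R + 2c_T\gE_\gP$, i.e. $\Delta \le \frac{2c_R}{1-c_T}\gE_R + \frac{2c_T}{1-c_T}\gE_\gP$.

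To finish, the triangle inequality for $\norm{\cdot}_\infty$ gives $\norm{d_\pi^\sim - \widehat{d}_\pi}_\infty \le \Delta + \gE_m$, and since the abstraction is constructed so that $\Phi(\vs_i)=\Phi(\vs_j)\Rightarrow\widehat{d}_\pi(\vs_i,\vs_j)\le 2\epsilon$, Lemma~\ref{lem:approximating-Vpi-w-error} (with $c_T\ge\gamma$, as throughout) applied with $\gE = \gE_m + \Delta$ yields $\norm{V^\pi - \widetilde{V}^\pi_\Phi}_\infty \le (2\epsilon + \gE_m + \Delta)/(c_R(1-\gamma))$; substituting the bound on $\Delta$ produces exactly the claimed inequality. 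I expect the only real obstacle to be the bookkeeping in the perturbation step — in particular, orienting Corollary~\ref{cor:wass-approx-lemma} so that $d_\pi^\sim$ (not $d_{\pi,\widehat{\gP}}$) sits in the first slot, which is what keeps only one $\Delta$ on the right and leaves the positive coefficient $1-c_T$; the opposite orientation would spawn three copies of $\Delta$ and an unusable factor $1-3c_T$. Everything else is a direct application of the already-established lemmas.
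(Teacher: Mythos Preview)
Your proposal is correct and matches the paper's proof essentially step for step: both establish existence of $d_{\pi,\widehat{\gP}}$ via the $c_T$-contraction and Banach, split $\norm{d_\pi^\sim-\widehat d_\pi}_\infty$ by the triangle inequality through $d_{\pi,\widehat{\gP}}$, bound $\norm{d_\pi^\sim-d_{\pi,\widehat{\gP}}}_\infty$ using the reverse triangle inequality on the reward term and Corollary~\ref{cor:wass-approx-lemma} (oriented with $d_\pi^\sim$ in the first slot) on the transport term, rearrange to extract the $1/(1-c_T)$ factor, and finish with Lemma~\ref{lem:approximating-Vpi-w-error}. The only cosmetic difference is that the paper cites an external existence result for $d_{\pi,\widehat{\gP}}$ rather than rerunning the Banach argument explicitly.
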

\begin{proof}
Here, we use the following shorthand notation for the $\pi$-bisimulation distance between a pair of states: 
\begin{align*}
    d_\pi(\vs_i, \vs_j) = c_R |R_\pi^i - R_\pi^j| + c_T W_p^\zeta(d_\pi)(\gP_\pi^i, \gP_\pi^j).
\end{align*}
First, note that the existence of $d_{\pi, \widehat{\gP}}$ follows from Thm. 3 of \citet{kemertas} and Lemma \ref{lem:p-wass-difference-bound}.
Next, we apply the triangle inequality to the error on the true metric $d_\pi$:
\begin{align*}
&\norm{d_\pi - \widehat{d}_\pi}_\infty \nonumber \\
&\leq \norm{d_\pi - d_{\pi, \widehat{\gP}}}_\infty + \norm{\widehat{d}_\pi - d_{\pi, \widehat{\gP}}}_\infty \tag{triangle inequality} \\
&= \underbrace{\norm{d_\pi - d_{\pi, \widehat{\gP}}}_\infty}_{\circled{1}} + \gE_m  \tag{by definition of $\gE_m$}
\end{align*}
Expanding \circled{1}:
\begin{align*}
&\norm{d_\pi - d_{\pi, \widehat{\gP}}}_\infty\\
&= \sup_{i, j} \left\{ \left| c_R \Big(|R_\pi^i - R_\pi^j| - |\widehat{R}_\pi^i - \widehat{R}_\pi^j|\Big) + c_T\Big(W_p^\zeta(d_\pi)(\gP_\pi^i, \gP_\pi^j) - W_p^\zeta(d_{\pi, \widehat{\gP}})(\widehat{\gP}_\pi^i, \widehat{\gP}_\pi^j)\Big) \right | \right\} \\
& \leq \sup_{i, j} \left\{ \left| c_R \Big(|R_\pi^i - \widehat{R}_\pi^i| + |R_\pi^j - \widehat{R}_\pi^j|\Big) + c_T\Big(W_p^\zeta(d_\pi)(\gP_\pi^i, \gP_\pi^j) - W_p^\zeta(d_{\pi, \widehat{\gP}})(\widehat{\gP}_\pi^i, \widehat{\gP}_\pi^j)\Big) \right | \right\} \\ 
& \leq 2c_R\gE_R + c_T \sup_{i, j} \left| \Big(W_p^\zeta(d_\pi)(\gP_\pi^i, \gP_\pi^j) - W_p^\zeta(d_{\pi, \widehat{\gP}})(\widehat{\gP}_\pi^i, \widehat{\gP}_\pi^j)\Big) \right |  \tag*{(by definition of $\gE_R$)} \\
& \leq 2c_R\gE_R + c_T \sup_{i, j} \Big(W_p^\zeta(d_\pi)(\gP_\pi^i, \widehat{\gP}_\pi^i) + W_p^\zeta(d_\pi)(\gP_\pi^j, \widehat{\gP}_\pi^j) + \norm{d_\pi - d_{\pi, \widehat{\gP}}}_\infty \Big) \tag*{(by Corollary \ref{cor:wass-approx-lemma})} \\
& \leq 2c_R\gE_R + 2c_T \gE_\gP + c_T \norm{d_\pi - d_{\pi, \widehat{\gP}}}_\infty.  \tag*{(by definition of $\gE_\gP$)}
\end{align*}
Rearranging,
\begin{align*}
    \norm{d_\pi - d_{\pi, \widehat{\gP}}}_\infty \leq \frac{2c_R}{1 - c_T}\gE_R + 
    \frac{2c_T}{1 - c_T}\gE_{\mathcal{P}}.
\end{align*}
Plugging \circled{1} back in,
\begin{align*}
    \norm{d_\pi - \widehat{d}_\pi}_\infty \leq \gE_m + \frac{2c_R}{1 - c_T}\gE_R + 
    \frac{2c_T}{1 - c_T}\gE_{\mathcal{P}}.
\end{align*}
The result follows from Lemma \ref{lem:approximating-Vpi-w-error}.
\end{proof}

Now, we rephrase API bounds from \citet{bertsekas2018abstract} before proving Thm. \ref{thm:api-bisim}.
\begin{restatable}[Propositions 2.4.3 and 2.4.5 of \citep{bertsekas2018abstract}]{lemma}{bertsekasPI}
\label{lem:PI-bound-Bertsekas}
Consider an API algorithm that generates policies $\{\pi_k\}_{k \in \mathbb{N}}$ and functions $\{V_k\}_{k \in \mathbb{N}}$ in $\gB(\gS)$ with policy evaluation error $\norm{V_k - V^{\pi_k}}_\infty \leq \delta_{\mathrm{PE}}$ and approximate greedy updates with error $\norm{T_{\pi_{k+1}}V_k - TV_k}_\infty \leq \delta_{\mathrm{GI}}$. If the sequence converges to a policy $\overline{\pi}$,
\begin{align*}
    \norm{V^{\overline{\pi}} - V^*}_\infty \leq \frac{\delta_{\mathrm{GI}} + 2\gamma \delta_{\mathrm{PE}}}{1-\gamma}.
\end{align*}
Otherwise, the sequence $\{\pi_{k}\}_{k \in \mathbb{N}}$ has the limiting bound,
\begin{align*}
    \limsup_{k 
    \rightarrow \infty}\norm{V^{\pi_k} - V^*}_\infty \leq \frac{\delta_{\mathrm{GI}} + 2\gamma \delta_{\mathrm{PE}}}{(1-\gamma)^2}.
\end{align*}
\end{restatable}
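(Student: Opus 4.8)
The plan is to reduce both claims to a single one-step estimate relating $e_{k+1} := \norm{V^* - V^{\pi_{k+1}}}_\infty$ to $e_k := \norm{V^* - V^{\pi_k}}_\infty$. Throughout I write $\delta := \delta_{\mathrm{GI}} + 2\gamma \delta_{\mathrm{PE}}$ and use the standing facts that each $T_\pi$ and $T$ is monotone and a $\gamma$-contraction in $\norm{\cdot}_\infty$ with fixed points $V^\pi$ and $V^*$, that $V^* \geq V^\pi$ pointwise for every $\pi$, and that $e_k \leq 2/(1-\gamma)$ since $R \in [0,1]$. \textbf{Step 1 (error transfer):} from $\norm{V_k - V^{\pi_k}}_\infty \leq \delta_{\mathrm{PE}}$ and non-expansiveness of $T$ and $T_{\pi_{k+1}}$ one gets $\norm{TV_k - TV^{\pi_k}}_\infty \leq \gamma\delta_{\mathrm{PE}}$ and $\norm{T_{\pi_{k+1}}V_k - T_{\pi_{k+1}}V^{\pi_k}}_\infty \leq \gamma\delta_{\mathrm{PE}}$; combining these with $\norm{T_{\pi_{k+1}}V_k - TV_k}_\infty \leq \delta_{\mathrm{GI}}$ via the triangle inequality yields $\norm{T_{\pi_{k+1}}V^{\pi_k} - TV^{\pi_k}}_\infty \leq \delta$, hence in particular $T_{\pi_{k+1}}V^{\pi_k} \geq TV^{\pi_k} - \delta\vone$. \textbf{Step 2 (one step of the improved policy is nearly as good as $TV^{\pi_k}$):} since $TV^{\pi_k} \geq T_{\pi_k}V^{\pi_k} = V^{\pi_k}$, monotonicity of $T_{\pi_{k+1}}$ and Step 1 give $T_{\pi_{k+1}}(TV^{\pi_k}) \geq T_{\pi_{k+1}}V^{\pi_k} \geq TV^{\pi_k} - \delta\vone$; iterating $T_{\pi_{k+1}}$ starting from $TV^{\pi_k}$, using monotonicity and $T_{\pi_{k+1}}(W - c\vone) = T_{\pi_{k+1}}W - \gamma c\vone$, a routine induction gives $T_{\pi_{k+1}}^{m}(TV^{\pi_k}) \geq TV^{\pi_k} - \tfrac{1-\gamma^m}{1-\gamma}\delta\vone$, and letting $m \to \infty$ (so $T_{\pi_{k+1}}^{m}(TV^{\pi_k}) \to V^{\pi_{k+1}}$) yields
\begin{align*}
V^{\pi_{k+1}} \;\geq\; TV^{\pi_k} - \tfrac{\delta}{1-\gamma}\vone .
\end{align*}

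\textbf{Step 3 (recursion and the oscillating bound).} Since $V^* = TV^*$, the contraction property gives $TV^{\pi_k} \geq V^* - \gamma e_k\vone$, so Step 2 yields $V^{\pi_{k+1}} \geq V^* - \big(\gamma e_k + \tfrac{\delta}{1-\gamma}\big)\vone$; combined with $V^{\pi_{k+1}} \leq V^*$ this reads $e_{k+1} \leq \gamma e_k + \tfrac{\delta}{1-\gamma}$. Taking $\limsup_{k \to \infty}$ (finite, by boundedness of $e_k$) and solving the resulting scalar inequality gives $\limsup_{k \to \infty} e_k \leq \tfrac{\delta}{(1-\gamma)^2} = \tfrac{\delta_{\mathrm{GI}} + 2\gamma\delta_{\mathrm{PE}}}{(1-\gamma)^2}$. \textbf{Step 4 (convergent case).} If $\pi_k \to \overline{\pi}$, then $V^{\pi_k} \to V^{\overline{\pi}}$, and passing to the limit in the Step 1 estimate $\norm{T_{\pi_{k+1}}V^{\pi_k} - TV^{\pi_k}}_\infty \leq \delta$ (using continuity of $\pi \mapsto T_\pi$ and of $T$; for finite MDPs the tail of $\{\pi_k\}$ is simply constant) gives $\norm{T_{\overline{\pi}}V^{\overline{\pi}} - TV^{\overline{\pi}}}_\infty \leq \delta$. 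As $T_{\overline{\pi}}V^{\overline{\pi}} = V^{\overline{\pi}}$, this is $\norm{TV^{\overline{\pi}} - V^{\overline{\pi}}}_\infty \leq \delta$, and the a-posteriori bound for the $\gamma$-contraction $T$, namely $\norm{V - V^*}_\infty \leq (1-\gamma)^{-1}\norm{TV - V}_\infty$, yields $\norm{V^{\overline{\pi}} - V^*}_\infty \leq \tfrac{\delta}{1-\gamma} = \tfrac{\delta_{\mathrm{GI}} + 2\gamma\delta_{\mathrm{PE}}}{1-\gamma}$.

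The step I expect to be the crux is Step 2 carried out the right way: one must iterate $T_{\pi_{k+1}}$ starting from $TV^{\pi_k}$ rather than from $V^{\pi_k}$, so that the limit lands at $V^{\pi_{k+1}} \geq TV^{\pi_k} - \tfrac{\delta}{1-\gamma}\vone$ instead of the weaker $V^{\pi_{k+1}} \geq V^{\pi_k} - \tfrac{\delta}{1-\gamma}\vone$; chaining the former with the bare contraction estimate $TV^{\pi_k} \geq V^* - \gamma e_k\vone$ is exactly what produces a contraction coefficient $\gamma$ in the recursion (rather than $\tfrac{\gamma}{1-\gamma}$, which a naive one-shot bound gives and which fails to close for $\gamma \geq \tfrac12$), and hence the sharp $(1-\gamma)^{-2}$ factor. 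The only other delicate point is the passage to the limit in Step 4, which relies on continuity of $\pi \mapsto V^\pi$ and $V \mapsto TV$ — immediate for finite $\gS$ and available under the paper's compactness/regularity assumptions in general.
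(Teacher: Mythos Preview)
Your proof is correct. The paper itself does not prove this lemma at all: it simply restates Propositions 2.4.3 and 2.4.5 of \citet{bertsekas2018abstract} as a black box, so there is no ``paper's own proof'' to compare against. That said, your argument is precisely the standard Bertsekas derivation, and in fact the one-step recursion you establish in Step~3, namely $\norm{V^{\pi_{k+1}} - V^*}_\infty \leq \gamma\norm{V^{\pi_k} - V^*}_\infty + \tfrac{\delta}{1-\gamma}$, is exactly what the paper invokes (as Proposition~2.4.4 of \citet{bertsekas2018abstract}, see (\ref{eq:bertsekas244})) when proving the closely related Lemma~\ref{lem:PI-bound-limsup}. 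Your Steps~1--2 supply the missing details behind that citation, and your self-commentary about iterating $T_{\pi_{k+1}}$ from $TV^{\pi_k}$ rather than $V^{\pi_k}$ correctly identifies why one obtains the sharp contraction factor~$\gamma$.
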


\apibisim*
\begin{proof}
First, we observe that,
\begin{align*}
    & \norm{\widehat{d}_{\pi_k} - d_{\pi_k}}_\infty \\
    &=\norm{\gF_{\pi_k}^{(n)}(d_0) - d_{\pi_k}}_\infty \\ &\leq \frac{c_T^n}{1-c_T}\norm{\gF_{\pi_k}(d_0) - d_0}_\infty \tag*{(by the Banach fixed-point theorem and Thm. \ref{thm:p-wass})} \\
    &= c_n \norm{\gF_{\pi_k}(d_0)}_\infty \tag*{(since $d_0 = \vzero$)} \\
    &= c_n \sup_{i, j}{|R_\pi(\vs_i) - R_\pi(\vs_j)|} \tag*{(since $d_0 = \vzero \Rightarrow W_p(d_0)=0$)} \\
    & \leq c_n, \forall k \in \mathbb{N}. \tag*{(since $R \in [0, 1]$)}
\end{align*}
Then, by Lemma \ref{lem:approximating-Vpi-w-error},
\begin{align*}
    \norm{V^{\pi_k} - \widetilde{V}^{\pi_k}_{\Phi_k}}_\infty \leq \frac{2\epsilon + c_n}{1 - \gamma}, \forall k \in \mathbb{N}.
\end{align*}
The result follows from Lemma \ref{lem:PI-bound-Bertsekas} with $\delta_{\mathrm{GI}}=\delta$ and $\delta_{\mathrm{PE}} = \frac{2\epsilon + c_n}{1 - \gamma}$.
\end{proof}

To prove Thm. \ref{thm:api-alpha-bisim}, we write an analogue of Lemma \ref{lem:PI-bound-Bertsekas} that does not assume a fixed bound on PE error, but a sequence of policy evaluation errors $\delta_{\mathrm{PE},k}$ and greedy improvement errors $\delta_{\mathrm{GI},k}$ that have finite limiting bounds. The following is a slight variation of Lemma \ref{lem:PI-bound-Bertsekas}, which is stronger as it considers asymptotic bounds on said errors rather than the maximum error over all $k$.

\begin{restatable}[A stronger API bound \citep{bertsekas2018abstract}]{lemma}{APIlimsup}
\label{lem:PI-bound-limsup}
Consider an API algorithm that generates policies $\{\pi_k\}_{k \in \mathbb{N}}$ and functions $\{V_k\}_{k \in \mathbb{N}}$ in $\gB(\gS)$ with policy evaluation error $\norm{V_k - V^{\pi_k}}_\infty \leq \delta_{\mathrm{PE},k}$ and approximate greedy updates with error $\norm{T_{\pi_{k+1}}V_k - TV_k}_\infty \leq \delta_{\mathrm{GI},k}$. 
The sequence $\{\pi_{k}\}_{k \in \mathbb{N}}$ has the limiting bound,
\begin{align*}
    \limsup_{k 
    \rightarrow \infty}\norm{V^{\pi_k} - V^*}_\infty \leq \frac{\limsup_{k \rightarrow \infty}\delta_{\mathrm{GI},k} + 2\gamma \delta_{\mathrm{PE},k}}{(1-\gamma)^2}.
\end{align*}
\end{restatable}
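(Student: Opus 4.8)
Write $g_k := \norm{V^{\pi_k} - V^*}_\infty$ and $\beta_k := \delta_{\mathrm{GI},k} + 2\gamma\,\delta_{\mathrm{PE},k}$. The plan is to establish the per-step inequality
\begin{align*}
g_{k+1} \;\leq\; \gamma\, g_k + \frac{\beta_k}{1-\gamma} \qquad \text{for all } k,
\end{align*}
and then iterate it. This per-step bound is exactly the estimate that drives the proof of Proposition 2.4.3 of \citet{bertsekas2018abstract} (restated in Lemma \ref{lem:PI-bound-Bertsekas}); since it involves only the errors incurred at step $k$, the extension from uniformly bounded errors to the step-dependent errors $\delta_{\mathrm{PE},k},\delta_{\mathrm{GI},k}$ needs no new idea — one simply refrains from taking a supremum over $k$ and carries $\beta_k$ through.

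\textbf{The per-step bound.} I would re-derive it from only monotonicity of $T_\pi,T$, the shift identity $T_\pi(V+c\vone)=T_\pi V + \gamma c\vone$, the $\gamma$-contraction of $T_\pi$ and $T$ in $\norm{\cdot}_\infty$, and $V^{\pi}\le V^*$, in three moves. \emph{(i)} A near-monotone-improvement estimate $T_{\pi_{k+1}}V^{\pi_k}\ge V^{\pi_k}-\beta_k\vone$, by chaining the policy-evaluation bound (both directions), the greedy error $T_{\pi_{k+1}}V_k\ge TV_k-\delta_{\mathrm{GI},k}\vone$, the inequality $TV_k\ge T_{\pi_k}V_k$, and $T_{\pi_k}V^{\pi_k}=V^{\pi_k}$; applying $T_{\pi_{k+1}}$ repeatedly and passing to the limit then gives $V^{\pi_{k+1}}\ge V^{\pi_k}-\tfrac{\beta_k}{1-\gamma}\vone$. \emph{(ii)} The estimate $V^*-T_{\pi_{k+1}}V^{\pi_k}\le(\gamma g_k+\beta_k)\vone$, by writing it as $(V^*-TV^{\pi_k})+(TV^{\pi_k}-T_{\pi_{k+1}}V^{\pi_k})$ and bounding the first term by $\gamma g_k\vone$ (contraction, $V^*=TV^*$) and the second by $\beta_k\vone$ (same ingredients as in (i)). \emph{(iii)} Combine: by (i), $V^{\pi_{k+1}}-T_{\pi_{k+1}}V^{\pi_k}=\gamma\gP_{\pi_{k+1}}\!\big(V^{\pi_{k+1}}-V^{\pi_k}\big)\ge-\tfrac{\gamma\beta_k}{1-\gamma}\vone$, so $V^*-V^{\pi_{k+1}}\le(V^*-T_{\pi_{k+1}}V^{\pi_k})+\tfrac{\gamma\beta_k}{1-\gamma}\vone\le\big(\gamma g_k+\tfrac{\beta_k}{1-\gamma}\big)\vone$, which is the claimed recursion.

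\textbf{Iterating to the limit.} Fix $\eta>0$ and choose $K$ with $\beta_k\le\overline\beta+\eta$ for all $k\ge K$, where $\overline\beta:=\limsup_{j\to\infty}\beta_j$. Unrolling, $g_{K+m}\le\gamma^m g_K+(\overline\beta+\eta)\sum_{i=0}^{m-1}\gamma^i\le\gamma^m g_K+\tfrac{\overline\beta+\eta}{(1-\gamma)^2}$. Because $R\in[0,1]$, every value function is bounded by $1/(1-\gamma)$, so $g_K<\infty$ and $\gamma^m g_K\to 0$; hence $\limsup_k g_k\le\tfrac{\overline\beta+\eta}{(1-\gamma)^2}$, and letting $\eta\downarrow 0$ gives $\limsup_k g_k\le\overline\beta/(1-\gamma)^2$. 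Since $\limsup_j(\delta_{\mathrm{GI},j}+2\gamma\delta_{\mathrm{PE},j})\le\limsup_j\delta_{\mathrm{GI},j}+2\gamma\limsup_j\delta_{\mathrm{PE},j}$, this is at least as strong as the stated bound.

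\textbf{Main obstacle.} The only genuinely delicate point is obtaining the sharp coefficient $\gamma$ in move (iii): iterating $T_{\pi_{k+1}}$ directly from $V^{\pi_k}$ only yields $g_{k+1}\le(\gamma g_k+\beta_k)/(1-\gamma)$, which fails to contract once $\gamma\ge\tfrac{1}{2}$. Routing through the near-monotone-improvement bound of move (i) to lower-bound $V^{\pi_{k+1}}-T_{\pi_{k+1}}V^{\pi_k}$ is precisely what restores the factor $\gamma$. The rest is bookkeeping — several intermediate inequalities are one-sided (pointwise) and must not be symmetrized prematurely, and the transient term $\gamma^m g_K$ must be dispatched via the boundedness of $\{g_k\}$.
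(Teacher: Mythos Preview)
Your proposal is correct and follows essentially the same route as the paper: establish the per-step recursion $\norm{V^{\pi_{k+1}}-V^*}_\infty \le \gamma\norm{V^{\pi_k}-V^*}_\infty + (\delta_{\mathrm{GI},k}+2\gamma\delta_{\mathrm{PE},k})/(1-\gamma)$ and then pass to the limit superior. The paper simply cites this recursion as Proposition 2.4.4 of \citet{bertsekas2018abstract} and writes ``the result follows by simply taking a $\limsup_{k\to\infty}$ on both sides,'' whereas you re-derive the recursion from monotonicity and the shift identity and then carry out the $\limsup$ passage more carefully via the $\eta$-argument and the boundedness of $g_K$; these extra details are fine but not a different approach.
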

\begin{proof}
From Prop. 2.4.4 of \citep{bertsekas2018abstract}, given ${\norm{V_k - V^{\pi_k}}_\infty \leq \delta_{\mathrm{PE},k}}$ and ${\norm{T_{\pi_{k+1}} V_k - TV_k}_\infty \leq \delta_{\mathrm{GI},k}}$,
\begin{align}
\label{eq:bertsekas244}
    \norm{V^{\pi_{k+1}} - V^*}_\infty \leq \gamma \norm{V^{\pi_{k}} - V^*}_\infty + \frac{\delta_{\mathrm{GI},k} + 2\gamma \delta_{\mathrm{PE},k}}{1-\gamma}.
\end{align}
The result follows by simply taking a $\limsup_{k \rightarrow \infty}$ on both sides.
\end{proof}

\mixturepolicybound*
\begin{proof}
Let $e = \delta_{\mathrm{GI}} + 2\gamma\delta_{\mathrm{PE}}$.
First, we note that $T_{\pi^\prime}V = \alpha T_{\pi_g}V + (1 - \alpha) T_{\pi}V$ for all $V \in \gB(\gS)$, and prove the following: 
\begin{align}
\label{eq:A13-alpha-policy-diff}
    \sup_{\vs \in \gS}\{V^\pi(\vs) - V^{\pi^\prime}(\vs)\} \leq \frac{\alpha e}{1-\gamma}.
\end{align}
\begingroup
\allowdisplaybreaks
\begin{align*}
    &V^\pi - V^{\pi^\prime} \\
    &= T_{\pi}V^\pi - T_{\pi^\prime}V^{\pi^\prime} \\
    &= T_{\pi}V^\pi - \alpha T_{\pi_g}V^{\pi^\prime} - (1-\alpha) T_{\pi}V^{\pi^\prime} \\
    &= \alpha (T_{\pi}V^\pi - T_{\pi_g}V^{\pi^\prime}) + (1-\alpha) (T_{\pi}V^\pi - T_{\pi}V^{\pi^\prime}) \\
    & \leq \alpha (T_{\pi}V^\pi - T_{\pi_g}V^{\pi^\prime}) + (1-\alpha) \gamma \sup_{\vs \in \gS}\{V^\pi(\vs) - V^{\pi^\prime}(\vs)\} \\
    &= \alpha (T_{\pi}V^\pi - T_\pi V + T_\pi V - T_{\pi_g}V^{\pi^\prime}) + (1-\alpha) \gamma \sup_{\vs \in \gS}\{V^\pi(\vs) - V^{\pi^\prime}(\vs)\} \\
    &\leq \alpha (\gamma\delta_{\mathrm{PE}} + T_\pi V - T_{\pi_g}V^{\pi^\prime}) + (1-\alpha) \gamma \sup_{\vs \in \gS}\{V^\pi(\vs) - V^{\pi^\prime}(\vs)\} \\
    &\leq \alpha (\gamma\delta_{\mathrm{PE}} + TV - T_{\pi_g}V^{\pi^\prime}) + (1-\alpha) \gamma \sup_{\vs \in \gS}\{V^\pi(\vs) - V^{\pi^\prime}(\vs)\} \tag*{(since $TV \geq T_\pi V$ for all $\pi$.)} \\
    & = \alpha (\gamma\delta_{\mathrm{PE}} + TV - T_{\pi_g}V + T_{\pi_g}V - T_{\pi_g}V^{\pi^\prime}) + (1-\alpha) \gamma \sup_{\vs \in \gS}\{V^\pi(\vs) - V^{\pi^\prime}(\vs)\} \\
    & \leq \alpha (\gamma\delta_{\mathrm{PE}} + \delta_{\mathrm{GI}} + T_{\pi_g}V - T_{\pi_g}V^{\pi^\prime}) + (1-\alpha) \gamma \sup_{\vs \in \gS}\{V^\pi(\vs) - V^{\pi^\prime}(\vs)\} \\
    & = \alpha (\gamma\delta_{\mathrm{PE}} + \delta_{\mathrm{GI}} + T_{\pi_g}V - T_{\pi_g}V^\pi + T_{\pi_g}V^\pi - T_{\pi_g}V^{\pi^\prime}) + (1-\alpha) \gamma \sup_{\vs \in \gS}\{V^\pi(\vs) - V^{\pi^\prime}(\vs)\} \\
    & \leq \alpha (2\gamma\delta_{\mathrm{PE}} + \delta_{\mathrm{GI}} + T_{\pi_g}V^\pi - T_{\pi_g}V^{\pi^\prime}) + (1-\alpha) \gamma \sup_{\vs \in \gS}\{V^\pi(\vs) - V^{\pi^\prime}(\vs)\} \\
    & = \alpha (e + T_{\pi_g}V^\pi - T_{\pi_g}V^{\pi^\prime}) + (1-\alpha) \gamma \sup_{\vs \in \gS}\{V^\pi(\vs) - V^{\pi^\prime}(\vs)\} \\
    & \leq \alpha (e + \gamma \sup_{\vs \in \gS}\{V^\pi(\vs) - V^{\pi^\prime}(\vs)\}) + (1-\alpha) \gamma \sup_{\vs \in \gS}\{V^\pi(\vs) - V^{\pi^\prime}(\vs)\} \\
    & = \alpha e + \gamma \sup_{\vs \in \gS}\{V^\pi(\vs) - V^{\pi^\prime}(\vs)\}.
\end{align*}
\endgroup
By taking a supremum on the LHS and rearranging, we obtain (\ref{eq:A13-alpha-policy-diff}).
Now, we prove (\ref{eq:conservative-progress}).
\begin{align*}
    V^* - V^{\pi^\prime} &= TV^* - T_{\pi^\prime}V^{\pi^\prime} \\
    & = TV^* - \alpha T_{\pi_g} V^{\pi^\prime} - (1-\alpha) T_{\pi} V^{\pi^\prime} \\ 
    & = \alpha \underbrace{(TV^* - T_{\pi_g} V^{\pi^\prime})}_{\circled{1}} + (1-\alpha) \underbrace{(TV^* - T_{\pi} V^{\pi^\prime})}_{\circled{2}}.
\end{align*}
\begingroup
\allowdisplaybreaks
\begin{align*}
\circled{1} &= TV^* - T_{\pi_g}V^\pi + T_{\pi_g}V^\pi - T_{\pi_g} V^{\pi^\prime} \\
&\leq TV^* - T_{\pi_g}V^\pi + \gamma \sup_{\vs \in \gS}\{V^\pi(\vs) - V^{\pi^\prime}(\vs)\} \\
& \leq TV^* - T_{\pi_g}V^\pi + \frac{\gamma \alpha e}{1-\gamma} \tag*{(due to  (\ref{eq:A13-alpha-policy-diff}))}\\
&= TV^* - T_{\pi_g}V + T_{\pi_g}V - T_{\pi_g}V^\pi + \frac{\gamma \alpha e}{1-\gamma} \\
& \leq TV^* - T_{\pi_g}V + \gamma \delta_{\mathrm{PE}} + \frac{\gamma \alpha e}{1-\gamma} \\
& = TV^* - TV^\pi + TV^\pi - T_{\pi_g}V + \gamma \delta_{\mathrm{PE}} + \frac{\gamma \alpha e}{1-\gamma} \\
& \leq \gamma\norm{V^* - V^\pi}_\infty + TV^\pi - T_{\pi_g}V + \gamma \delta_{\mathrm{PE}} + \frac{\gamma \alpha e}{1-\gamma} \\
& =  \gamma\norm{V^* - V^\pi}_\infty + TV^\pi - TV + TV - T_{\pi_g}V + \gamma \delta_{\mathrm{PE}} + \frac{\gamma \alpha e}{1-\gamma} \\
& \leq \gamma\norm{V^* - V^\pi}_\infty + 2\gamma \delta_{\mathrm{PE}} + \delta_{\mathrm{GI}} + \frac{\gamma \alpha e}{1-\gamma} \\
& = \gamma\norm{V^* - V^\pi}_\infty + e + \frac{\gamma \alpha e}{1-\gamma}.
\end{align*}
\endgroup
\begin{align*}
    \circled{2} &= V^* - V^\pi + V^\pi - T_\pi V^{\pi^\prime} \\
    &\leq \norm{V^* - V^\pi}_\infty + V^\pi - T_\pi V^{\pi^\prime} \\
    &= \norm{V^* - V^\pi}_\infty + T_\pi V^\pi - T_\pi V^{\pi^\prime} \\
    &\leq \norm{V^* - V^\pi}_\infty + \gamma\sup_{\vs \in \gS}\{V^\pi(\vs) - V^{\pi^\prime}(\vs)\} \\
    &\leq \norm{V^* - V^\pi}_\infty + \frac{\gamma \alpha e}{1-\gamma} \tag*{(due to  (\ref{eq:A13-alpha-policy-diff}))}.
\end{align*}
Combining the upper bounds of \circled{1} and \circled{2}, 
\begin{align*}
    V^* - V^{\pi^\prime} &\leq \alpha \Big( \gamma\norm{V^* - V^\pi}_\infty + e + \frac{\gamma \alpha e}{1-\gamma} \Big) + (1-\alpha) \Big( \norm{V^* - V^\pi}_\infty + \frac{\gamma \alpha e}{1-\gamma} \Big) \\
    &= (1-\alpha + \alpha\gamma)\norm{V^* - V^\pi}_\infty + \alpha \left( e + \frac{\gamma e}{1-\gamma} \right) \\
    &= (1-\alpha + \alpha\gamma)\norm{V^* - V^\pi}_\infty + \frac{\alpha e}{1-\gamma},
\end{align*}
which is equivalent to (\ref{eq:conservative-progress}). Setting $\pi^\prime=\pi_{k+1}$ and $\pi=\pi_k$, and taking a $\limsup$ on both sides of (\ref{eq:conservative-progress}):
\begin{align*}
    &\limsup_{k \rightarrow \infty}\norm{V^{\pi_{k+1}} - V^*}_\infty \leq \limsup_{k \rightarrow \infty} (1 - \alpha  + \alpha \gamma) \norm{V^{\pi_k} - V^*}_\infty + \alpha \frac{\limsup_{k \rightarrow \infty}\delta_{\mathrm{GI, k}} + 2\gamma \delta_{\mathrm{PE, k}}}{1-\gamma} \\
    \Rightarrow \quad & \alpha(1-\gamma)\limsup_{k \rightarrow \infty}\norm{V^{\pi_{k}} - V^*}_\infty \leq \alpha \frac{\limsup_{k \rightarrow \infty}\delta_{\mathrm{GI, k}} + 2\gamma \delta_{\mathrm{PE, k}}}{1-\gamma} \\
    \Rightarrow \quad & \limsup_{k \rightarrow \infty}\norm{V^{\pi_{k}} - V^*}_\infty \leq \frac{\limsup_{k \rightarrow \infty} \delta_{\mathrm{GI, k}} + 2\gamma \delta_{\mathrm{PE, k}}}{(1-\gamma)^2}. \qedhere
\end{align*}
\end{proof}

\apialphabisim*
\begin{proof}
First, note that by Corollary \ref{cor:total-variation-mixture}, we have $D^\infty_{\mathrm{TV}}(\pi_{k+1}, \pi_{k}) \leq \alpha$. 
Now, we define the sequence of metric learning errors $\{\gE_k\}_{k \in \mathbb{N}}$,
\begin{align*}
    \gE_k &= \norm{\widehat{d}_{\pi_k} - d_{\pi_k}}_\infty \\
    &= \norm{\gF_{\pi_k}^{(n)}(\widehat{d}_{\pi_{k-1}}) - d_{\pi_k}}_\infty \\
    & \leq c_n \norm{\gF_{\pi_k}(\widehat{d}_{\pi_{k-1}}) - \widehat{d}_{\pi_{k-1}}}_\infty \tag*{(by the Banach fixed-point theorem)} \\
    & \leq c_n \left( \norm{\gF_{\pi_k}(\widehat{d}_{\pi_{k-1}}) - d_{\pi_{k-1}}}_\infty + \norm{\widehat{d}_{\pi_{k-1}} - d_{\pi_{k-1}} }_\infty \right) \\
    & = c_n \Big( \underbrace{\norm{\gF_{\pi_k}(\widehat{d}_{\pi_{k-1}}) - d_{\pi_{k-1}}}_\infty}_{\circled{1}} + \gE_{k-1} \Big) \\
\end{align*}
Using shorthand notation $d_\pi = c_R |R_\pi^i - R_\pi^j| + c_T W_p(d_\pi)(\gP_\pi^i, \gP_\pi^j)$ for the $\pi$-bisimulation distance between a pair of states $(\vs_i, \vs_j)$, and noting $c_R=1$ by given,
\begin{align*}
        \circled{1} &= \sup_{i, j} \left| |R_{\pi_{k}}^i - R_{\pi_{k}}^j| - |R_{\pi_{k-1}}^i {-}~ R_{\pi_{k-1}}^j| + c_T \left( W_p(\widehat{d}_{\pi_{k-1}})(\gP_{\pi_k}^i, \gP_{\pi_k}^j) -  W_p(d_{\pi_{k-1}})(\gP_{\pi_{k-1}}^i, \gP_{\pi_{k-1}}^j) \right) \right| \\
        & \leq \sup_{i, j} \left| |R_{\pi_{k}}^i - R_{\pi_{k-1}}^i| + |R_{\pi_{k}}^j {-}~ R_{\pi_{k-1}}^j| + c_T \left( W_p(\widehat{d}_{\pi_{k-1}})(\gP_{\pi_k}^i, \gP_{\pi_k}^j) -  W_p(d_{\pi_{k-1}})(\gP_{\pi_{k-1}}^i, \gP_{\pi_{k-1}}^j) \right) \right|  \\
        & \leq 2D_{\mathrm{TV}}^\infty(\pi_{k}, \pi_{k-1}) + c_T \sup_{i, j} \left| W_p(\widehat{d}_{\pi_{k-1}})(\gP_{\pi_k}^i, \gP_{\pi_k}^j) -  W_p(d_{\pi_{k-1}})(\gP_{\pi_{k-1}}^i, \gP_{\pi_{k-1}}^j) \right|  \tag*{(by Lemma \ref{lem:tv-pi-R})} \\
        & \leq 2\alpha + c_T \sup_{i, j} \left| W_p(\widehat{d}_{\pi_{k-1}})(\gP_{\pi_k}^i, \gP_{\pi_k}^j) -  W_p(d_{\pi_{k-1}})(\gP_{\pi_{k-1}}^i, \gP_{\pi_{k-1}}^j) \right|  \tag*{(since $D_{\mathrm{TV}}^\infty(\pi_{k}, \pi_{k-1}) \leq \alpha$)} \\
        & \leq 2\alpha + c_T \left( W_p(d_{\pi_{k-1}})(\gP_{\pi_{k-1}}^i, \gP_{\pi_{k}}^i) + W_p(d_{\pi_{k-1}})(\gP_{\pi_{k-1}}^j, \gP_{\pi_{k}}^j) + \norm{\widehat{d}_{\pi_{k-1}} - d_{\pi_{k-1}} }_\infty \right) \tag*{(by Cor. \ref{cor:wass-approx-lemma})} \\
        &\leq 2\alpha + \frac{2c_T}{1-c_T}D_{\mathrm{TV}}^\infty(\pi_{k}, \pi_{k-1})^{\frac{1}{p}} + c_T \norm{\widehat{d}_{\pi_{k-1}} - d_{\pi_{k-1}} }_\infty \tag*{(by Lemma \ref{lem:P-pi-wasserstein-bound})} \\
        &\leq 2\alpha^{\frac{1}{p}} + \frac{2c_T\alpha^{\frac{1}{p}}}{1-c_T} + c_T \norm{\widehat{d}_{\pi_{k-1}} - d_{\pi_{k-1}} }_\infty \tag*{(since $\alpha \in [0, 1]$ and $D_{\mathrm{TV}}^\infty(\pi_{k}, \pi_{k-1}) \leq \alpha$)} \\
        & \leq \frac{2\alpha^{\frac{1}{p}}}{1-c_T} + c_T \gE_{k-1}
\end{align*}
Plugging \circled{1} back in,
\begin{align*}
    \gE_k &\leq c_n \left(\frac{2\alpha^{\frac{1}{p}}}{1-c_T} + (1+c_T)\gE_{k-1} \right) \\
    & = \frac{2\alpha^{\frac{1}{p}} c_n}{1-c_T} + \overline{c}_n \gE_{k-1} 
\end{align*}
If $\overline{c}_n=(1+c_T)c_n < 1$, i.e., $n > \log(\frac{1-c_T}{1+c_T})/\log(c_T)$, we take a limit superior to obtain:
\begin{align*}
    \limsup_{k \rightarrow \infty}\gE_k &\leq \frac{2\alpha^{\frac{1}{p}} c_n}{(1-\overline{c}_n)(1-c_T)} \\
    &= \overline{\alpha}c_n,
\end{align*}
where we have defined $\overline{\alpha}=\frac{2\alpha^{\frac{1}{p}}}{(1-\overline{c}_n)(1-c_T)}$. By Lemma \ref{lem:approximating-Vpi-w-error}, we have,
\begin{align*}
    \limsup_{k \rightarrow \infty} \norm{V^{\pi_k} - \widetilde{V}^{\pi_k}_{\Phi_k}}_\infty \leq \frac{2\epsilon + \overline{\alpha} c_n}{1 - \gamma}.
\end{align*}
Then, (\ref{eq:oscillating-Vstar-alpha}) holds due to (\ref{eq:api-alpha-bound}) of Lemma \ref{lem:mixturebound} with $\delta_{\mathrm{GI, k}} \leq \delta,~ \forall k$ and $\delta_{\mathrm{PE},k} = \norm{V^{\pi_k} - \widetilde{V}^{\pi_k}_{\Phi_k}}_\infty$. 
\end{proof}

\section{Background: State Aggregation Methods}
\label{sec:extended-background}
The idea of reducing a large-scale dynamic programming problem into a smaller one via abstractions (or partitions) has a rich history going back many decades \citep{fox1973discretizing, whitt1978approximations, mendelssohn1982iterative, bertsekas1989, singh1994aggregation, dean1997model, deanmodelreduction1997}. Broadly, state aggregation approaches in RL can be grouped into two categories: pre-specified and adaptive. Often, pre-specified approaches either compute an aggregation based on transition probabilities and reward functions as in bisimulation \citep{givan2003equivalence} or assume a priori knowledge about the environment (e.g., the optimal value function). For instance, given some known function ${f: \gS \times \gA \rightarrow \mathbb{R}}$, instead of (\ref{eq:bisimulation-based-abstraction}) we may write:
\begin{align}
    \Phi(\vs_i)=\Phi(\vs_j) \Rightarrow |f(\vs_i, \va) - f(\vs_j, \va)| \leq \eta, \forall \va \in \gA,
\end{align}
which results in $\eta$-abstractions of \citet{abel2016near}. When $f = Q^*$ and $\eta = 0$, the resulting abstraction is called a ``$Q^*$-irrelevance abstraction'' under the unifying framework of \citet{li2006towards}. \citet{mccallum1996reinforcement} introduced the Utile Distinction Test (in the context of partially observable MDPs), which aggregates states that have the same optimal action and the same state-action value for said action. \citet{Hostetler_Fern_Dietterich_2014} investigated a similar, more general abstraction in the context of Monte Carlo Tree Search to reduce the stochastic branching factor of large MDPs. \citet{jong2005state} devised an abstraction discovery approach based on statistical hypothesis testing and policy relevance, illustrated its utility in knowledge transfer to different domains and discussed its connections to hierarchical RL (i.e., temporal abstraction). \citet{duan2019state} developed a soft aggregation algorithm based on the spectral decomposition of a simulation-based empirical transition matrix. \citet{bertsekas2018feature} recently surveyed feature-based aggregation methods (for an early example, see \citet{tsitsiklis1996feature}) and discussed their use in API. \citet{van2006performance} analyzed approximate value iteration (AVI) under state aggregation. In general, whenever VFA is done via piece-wise constant function approximators, state aggregation is implicit among states that are assigned the same value.

More closely related to this work, adaptive approaches simultaneously improve a policy and learn an efficient abstraction that changes as the algorithm runs. A notable early example is the work of \citet{bertsekas1989}, which adaptively aggregates states to minimize the variation in Bellman residuals per partition. \citet{baras2000learning} developed a simulation-based actor-critic approach that alternates between frequent updates to a linear approximation of the value function and infrequent updates to abstractions based on a clustering in the range of estimated values. \citet{ortner2013adaptive} developed an online aggregation algorithm based on the UCRL algorithm of \citet{auer2008near} and provided a regret analysis. The approach of \citet{sinclair2019adaptive} was similarly motivated, but focused on $Q$-learning. \citet{chen2021adaptive} recently proposed an aggregation-based AVI algorithm that combines infrequent Bellman updates over the ground space with frequent updates over a reduced space that is computed via value-based aggregation. Differently, our approach does not require any value iteration updates in the ground space, which may be continuous.

\section{Sharpness of the Sinkhorn Distance as a Wasserstein Distance Upper Bound}
\label{sec:sinkhorn-upper-bound-entropy}
The Sinkhorn distance forms an upper bound on the Wasserstein distance due to an entropy-constraint imposed on the cost minimization problem. Guided by information-theoretic intuition, we perform empirical tests to investigate the quality of this upper bound as a function of the underlying marginal distributions $(\mu_1, \mu_2)$. First, we highlight the intuition with the following lemma. 

\begin{restatable}[A condition for equality of $W_p$ and $W_p^\zeta$]{lemma}{sinkhornsandwich}
\label{lem:sinkhorn-sandwich}
Let $\gH(\mu)$ denote the Shannon entropy of a random variable with law $\mu$. Under the same conventions as (\ref{eq:zeta-sinkhorn}),
\begin{align}
\label{eq:zeta-wass-error-entropy}
    \zeta^{-1} \geq \min(\gH(\mu_1), \gH(\mu_2)) \Rightarrow W_p^\zeta(d)(\mu_1, \mu_2) = W_p(d)(\mu_1, \mu_2).
\end{align}
\end{restatable}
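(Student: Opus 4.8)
The plan is to show that, under the stated hypothesis, the entropic feasibility set $\Omega(\zeta)$ from (\ref{eq:zeta-sinkhorn}) coincides with the full set of couplings $\Omega$, so that the constrained minimization defining $W_p^\zeta(d)(\mu_1,\mu_2)$ collapses to the unconstrained one defining $W_p(d)(\mu_1,\mu_2)$. The inclusion $\Omega(\zeta) \subseteq \Omega$ always holds, which immediately gives $W_p^\zeta(d)(\mu_1,\mu_2) \geq W_p(d)(\mu_1,\mu_2)$; hence it suffices to prove the reverse inclusion $\Omega \subseteq \Omega(\zeta)$, i.e. that \emph{every} coupling $\omega$ of $\mu_1,\mu_2$ satisfies $D_{\mathrm{KL}}(\omega \,\|\, \mu_1 \otimes \mu_2) \leq \zeta^{-1}$.

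First I would observe that, for a coupling $\omega$ with coordinate random variables $(X,Y)$ (so $X\sim\mu_1$, $Y\sim\mu_2$), the quantity $D_{\mathrm{KL}}(\omega \,\|\, \mu_1 \otimes \mu_2)$ is by definition the mutual information $I(X;Y)$. Then I would invoke the standard decompositions $I(X;Y) = \gH(\mu_1) - \gH(X\mid Y) = \gH(\mu_2) - \gH(Y\mid X)$ together with non-negativity of conditional Shannon entropy to conclude $I(X;Y) \leq \min(\gH(\mu_1), \gH(\mu_2))$. Combining this with the hypothesis $\zeta^{-1} \geq \min(\gH(\mu_1),\gH(\mu_2))$ yields $D_{\mathrm{KL}}(\omega \,\|\, \mu_1 \otimes \mu_2) \leq \zeta^{-1}$ for all $\omega\in\Omega$, hence $\Omega(\zeta)=\Omega$. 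Equality of the two optimal values follows at once, either by appealing to attainment of the minima over these coupling sets or by phrasing the argument at the level of infima to avoid attainment questions.

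The only delicate point — more a matter of pinning down the setting than a genuine obstacle — is that non-negativity of conditional entropy is a property of Shannon (discrete) entropy, whereas differential entropy can be negative. I would therefore make explicit that the lemma is read in the finite/discrete regime for $\gX$, consistent with the use of $\gH$ in the definition of the dual Sinkhorn distance (\ref{eq:p-sinkhorn}). Beyond that, no real computation is required: the content is just the information-theoretic inequality $I(X;Y)\le\min(\gH(X),\gH(Y))$ together with the definition of $\Omega(\zeta)$.
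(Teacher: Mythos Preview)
Your proposal is correct and essentially the same as the paper's argument: both show $\Omega(\zeta)=\Omega$ under the hypothesis by bounding $D_{\mathrm{KL}}(\omega\,\|\,\mu_1\otimes\mu_2)$ above by $\min(\gH(\mu_1),\gH(\mu_2))$ for every coupling $\omega$. The only cosmetic difference is that you phrase this directly as the mutual-information bound $I(X;Y)\le\min(\gH(\mu_1),\gH(\mu_2))$ via non-negativity of conditional entropy, whereas the paper routes through the equivalent joint-entropy inequality $\gH(\omega)\ge\max(\gH(\mu_1),\gH(\mu_2))$ and then unpacks $D_{\mathrm{KL}}(\omega\,\|\,\mu_1\otimes\mu_2)=\gH(\mu_1)+\gH(\mu_2)-\gH(\omega)$; your caveat about the discrete setting is also appropriate and matches the paper's implicit assumption.
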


\begin{proof}
Consider a joint distribution $\omega$ with marginals $\mu_1$ and $\mu_2$. Recall the following information-theoretic inequalities \citep{cover1999elements}:
\begin{align}
    \gH(\omega) \leq  \gH(\mu_1) + \gH(\mu_2) \\
    \gH(\omega) \geq \gH(\mu_1)\label{eq:joint-ent1}.
\end{align}
By symmetry of (\ref{eq:joint-ent1}),
\begin{align}
    \max(\gH(\mu_1), \gH(\mu_2)) \leq & ~\gH(\omega) \leq \gH(\mu_1) + \gH(\mu_2) \\
    \Rightarrow \max(\gH(\mu_1), \gH(\mu_2)) \leq & ~\gH(\omega) \leq \max(\gH(\mu_1), \gH(\mu_2)) + \min(\gH(\mu_1), \gH(\mu_2))\label{eq:entropy-sandwich}.
\end{align}
That is, the range of allowed entropy values for a joint distribution $\omega$ with marginals $\mu_1$ and $\mu_2$ is determined by the minimum entropy of the two distributions. For example, suppose $\mu_1(x_i)=1$ for some $x_i$ in a finite space and $0$ elsewhere, so that ${\gH(\mu_1)=0}$. The feasible set of transport plans between $\mu_1$ and $\mu_2$ collapses to a single joint distribution that moves all the probability mass at $x_i$ to match the distribution of $\mu_2$. Indeed, in this case we have ${\gH(\omega)= \gH(\mu_1) + \gH(\mu_2)=\gH(\mu_2)}$ with equality for both the lower and upper bounds shown in (\ref{eq:entropy-sandwich}).

Now, consider the relative entropy-constrained set of joints $\Omega(\zeta)$ from (\ref{eq:zeta-sinkhorn}):
\begin{gather*}
    D_{\mathrm{KL}}(\omega ~||~ \mu_1 \otimes \mu_2) \leq \zeta^{-1} \\
    \Rightarrow -\gH(\omega) + \gH(\mu_1) + \gH(\mu_2) \leq \zeta^{-1} \\
    \Rightarrow \gH(\omega) \geq \gH(\mu_1) + \gH(\mu_2) - \zeta^{-1} \\
    \Rightarrow \gH(\omega) \geq \max(\gH(\mu_1), \gH(\mu_2)) + \min(\gH(\mu_1), \gH(\mu_2)) - \zeta^{-1}.
\end{gather*}
Given the information-theoretic lower bound in (\ref{eq:entropy-sandwich}) that readily applies to all ${\omega \in \Omega}$, we conclude that whenever $\zeta^{-1} \geq \min(\gH(\mu_1), \gH(\mu_2))$ we have $\Omega = \Omega(\zeta)$, which implies equality between $W_p$ and $W_p^\zeta$. 
\end{proof}

As discussed in the proof above, the range of allowed entropy values $\gH(\omega)$ shrinks with smaller minimum entropy $\min(\gH(\mu_1), \gH(\mu_2))$. Consequently, as ${\min(\gH(\mu_1), \gH(\mu_2)) \rightarrow 0}$ we have $\Omega(\zeta) \rightarrow \Omega$ for all $\zeta$ due to (\ref{eq:zeta-wass-error-entropy}). We suspect that the converse may be true; that with increasing $\min(\gH(\mu_1), \gH(\mu_2))$ the feasible set $\Omega(\zeta)$ might be a smaller subset of $\Omega$, which would imply that the quality of the Sinkhorn distance as a Wasserstein distance upper bound degrades. We perform empirical tests to compare the $W_1^\lambda$ distance to a stronger upper bound on the 1-Wasserstein distance computed via a smaller $\lambda^\prime=0.02 < \lambda$. We randomly sample probability vectors $\mu_1, \mu_2$ from the $31$-simplex and ensure $\gH(\mu_1) \leq \gH(\mu_2)$ where $\gH(\mu_1)$ takes values within $0.01$ of those shown on the $x$-axis of Fig. \ref{fig:fig-sinkhorn-entropy}. Similarly, $\mu_2$ is sampled to evenly cover the range of allowed values $\gH(\mu_2) \in [\gH(\mu_1), 5)$. To construct distance matrices, we sample $32$ points uniformly on $m_{\gX} \in \{2, 8, 32\}$ dimensional spheres with radii $0.5$ and take pairwise Euclidean distances. As seen in the bottom-left corner of all plots, we have strict equality $W_1^\lambda=W_1^{\lambda^\prime}$ for all settings with ${\gH(\mu_1)=0}$ as predicted by (\ref{eq:zeta-wass-error-entropy}). Furthermore, we observe a monotonic relationship between $\gH(\mu_1)$ and the expected quality of weaker metrics $W_1^\lambda$ across all settings of $\lambda$ and $m_{\gX}$, the latter of which controls the distribution of distance values.\footnote{The distribution of pairwise distances for uniformly sampled points on a unit $n$-sphere approximately follows $\gN(\sqrt{2}, \frac{1}{2n})$ \citep{wu2017sampling}. In our case, the distribution becomes increasingly concentrated around $\sqrt{2}/2$ with increasing $m_{\gX}$.}

\begin{figure*}[t]
\centering
\begin{minipage}{\szz\textwidth}
  \centering
    \includegraphics[width=1.\linewidth]{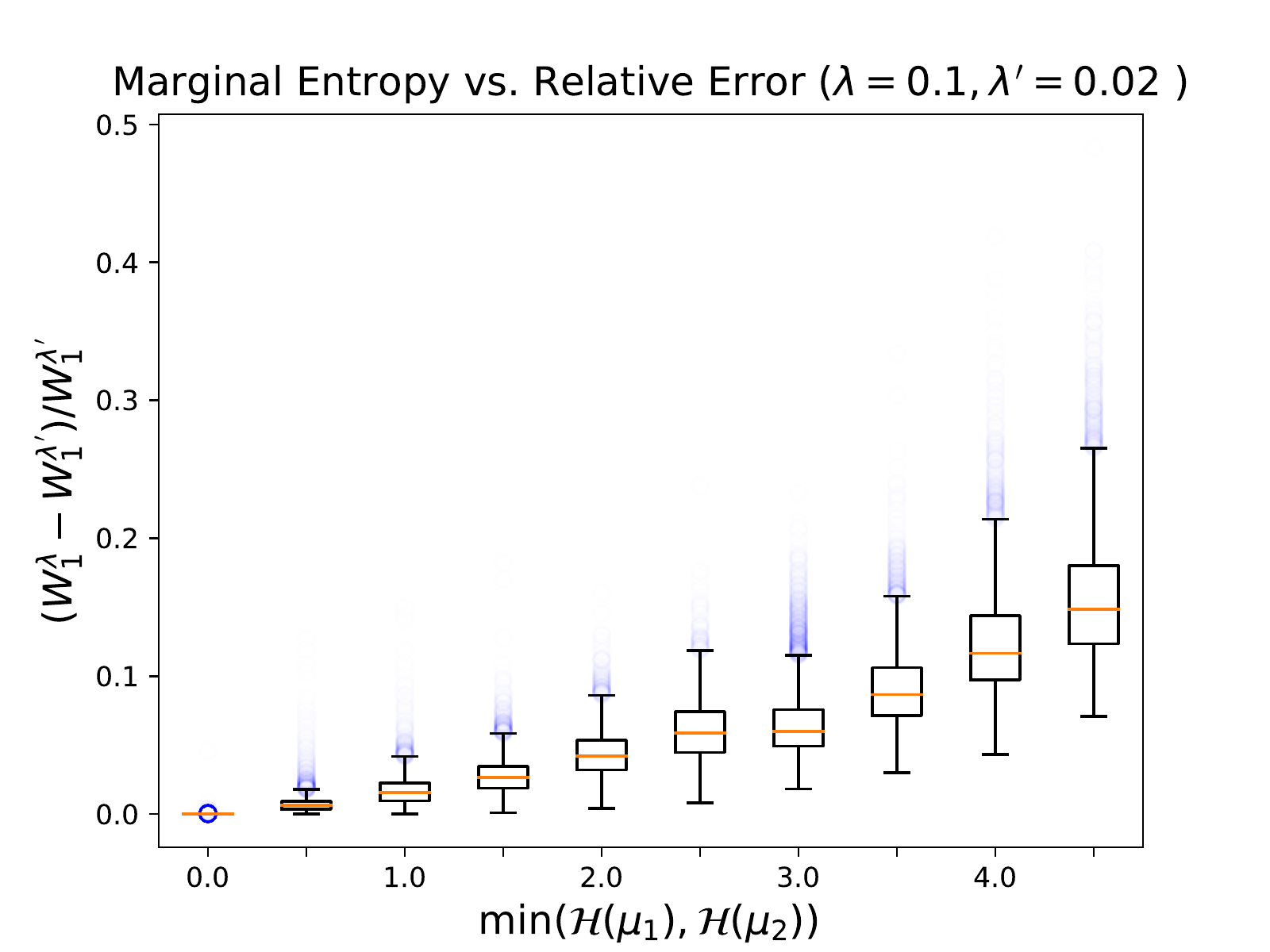}
\end{minipage}
\hspace{-10pt} %
\begin{minipage}{\szz\textwidth}
  \centering
    \includegraphics[width=1.\linewidth]{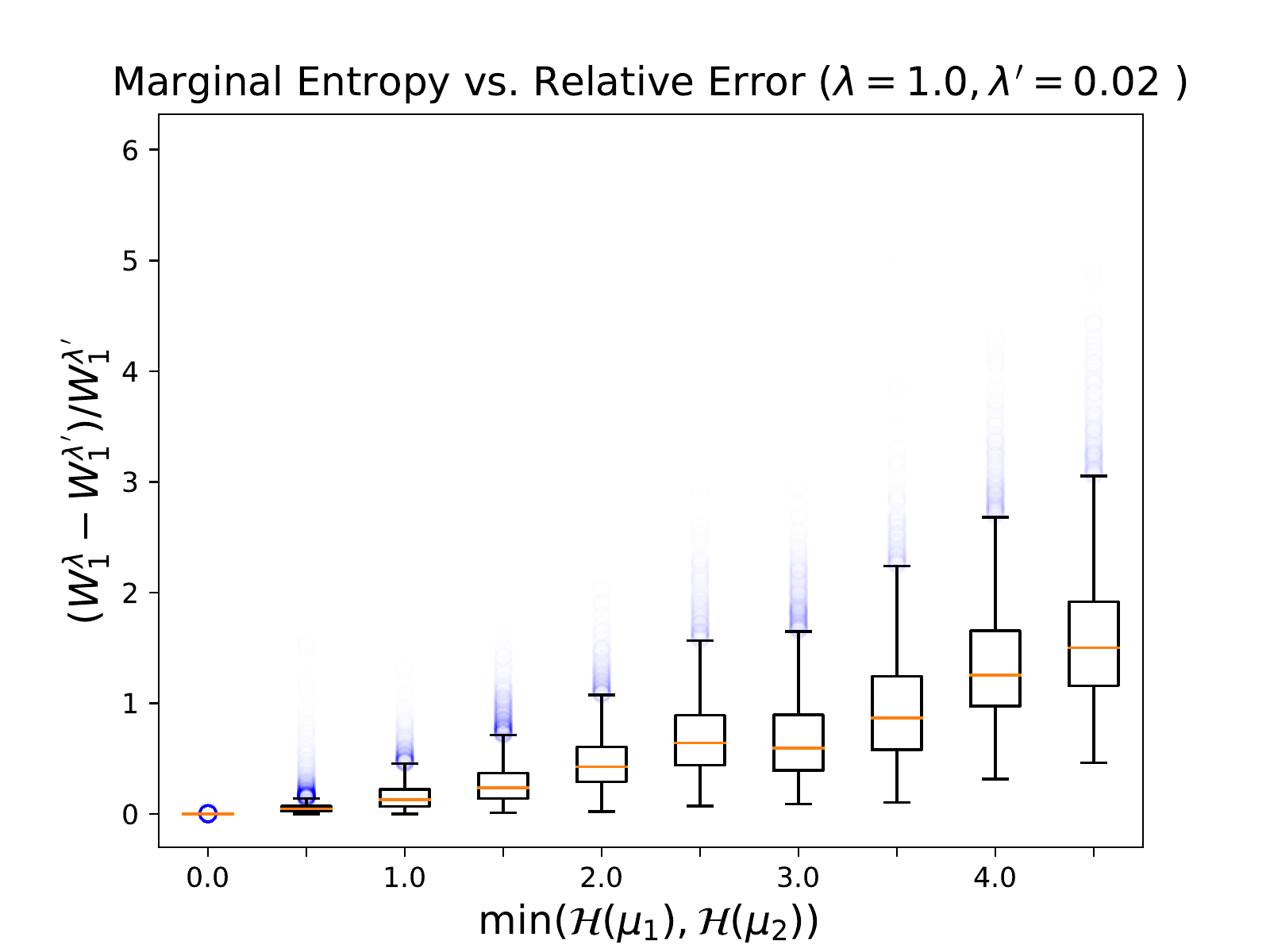}
\end{minipage}
\hspace{-10pt} %
\begin{minipage}{\szz\textwidth}
  \centering
    \includegraphics[width=1.\linewidth]{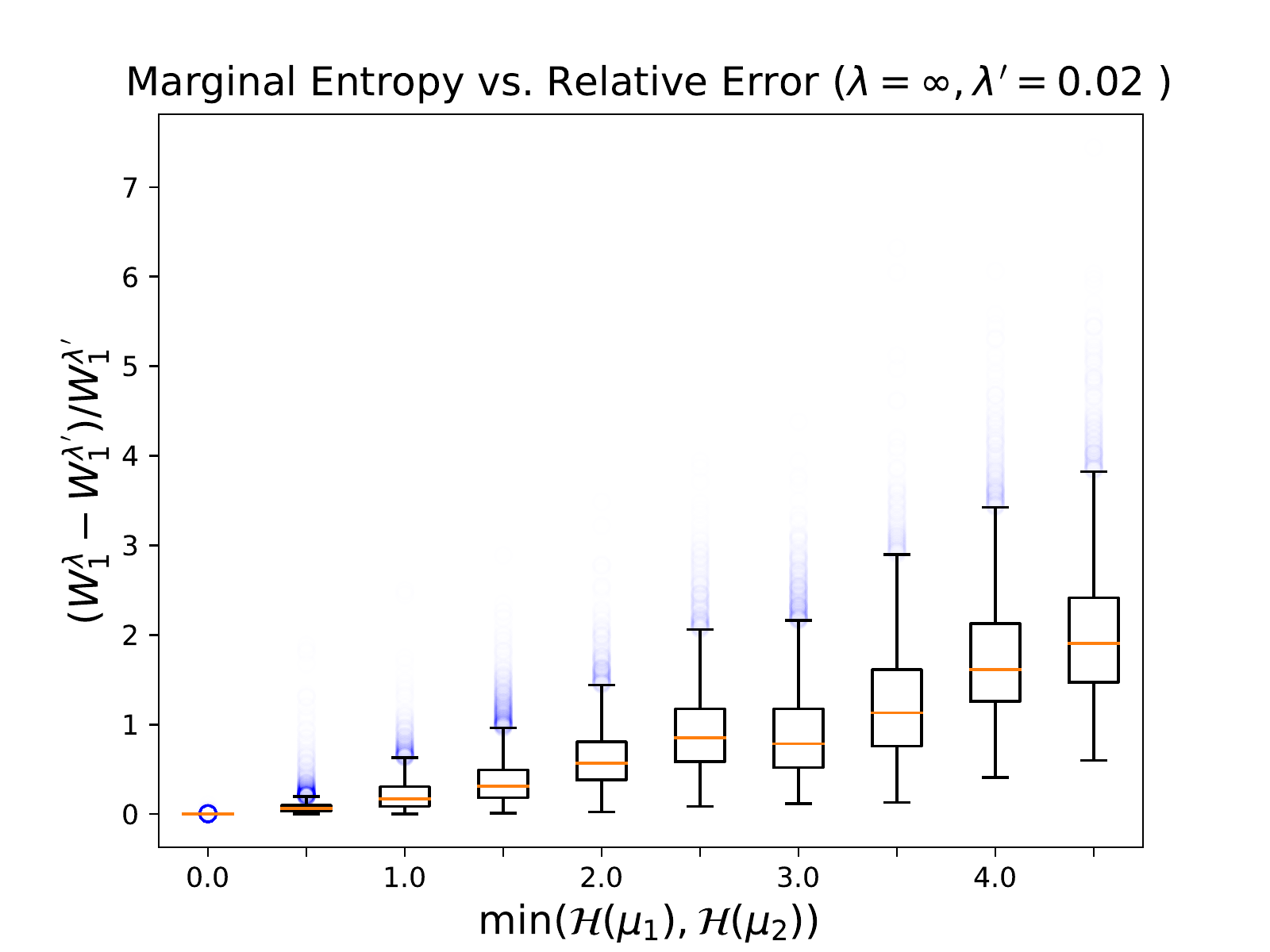}
\end{minipage}
\begin{minipage}{\szz\textwidth}
  \centering
    \includegraphics[width=1.\linewidth]{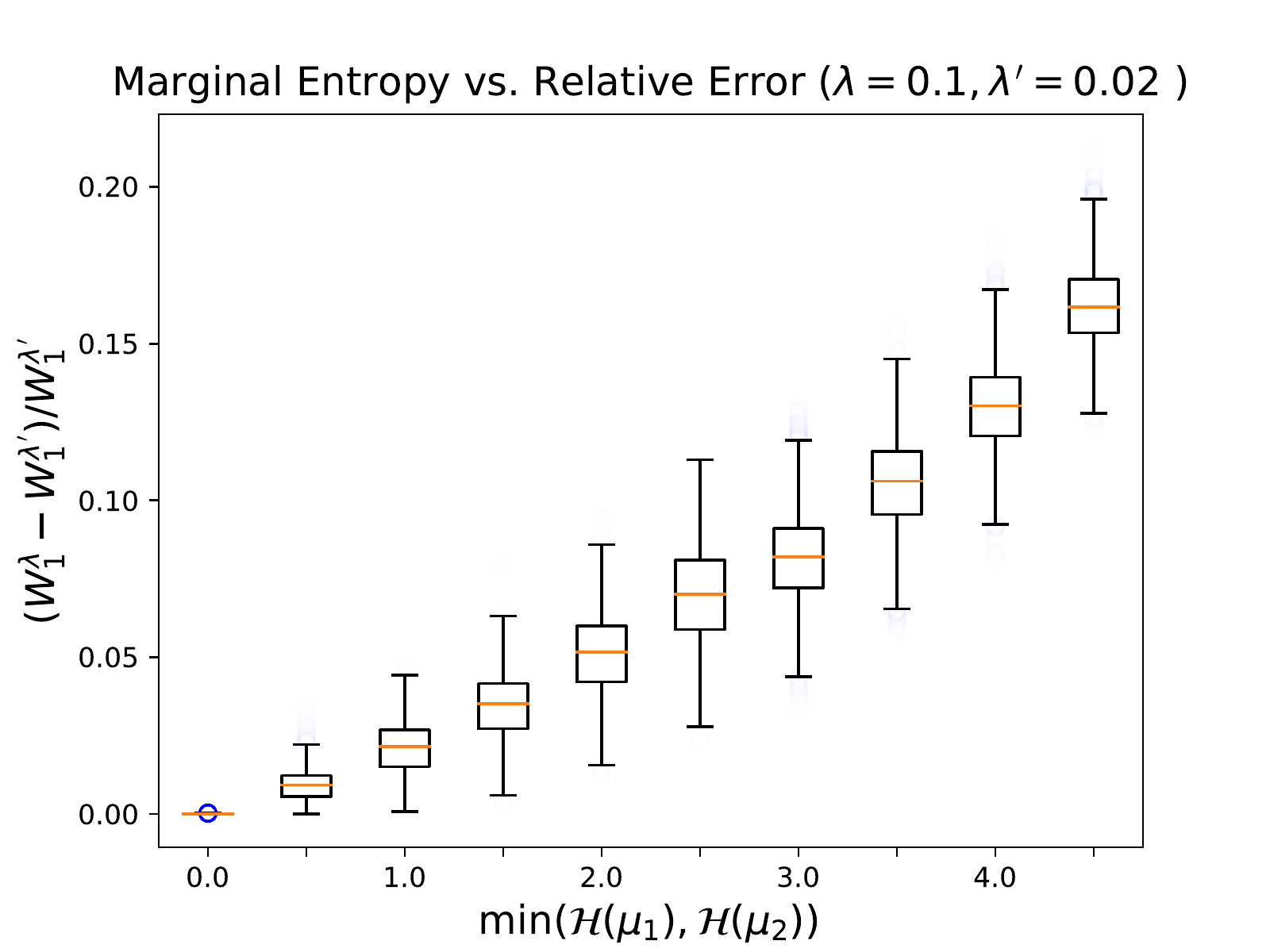}
\end{minipage}
\hspace{-10pt} %
\begin{minipage}{\szz\textwidth}
  \centering
    \includegraphics[width=1.\linewidth]{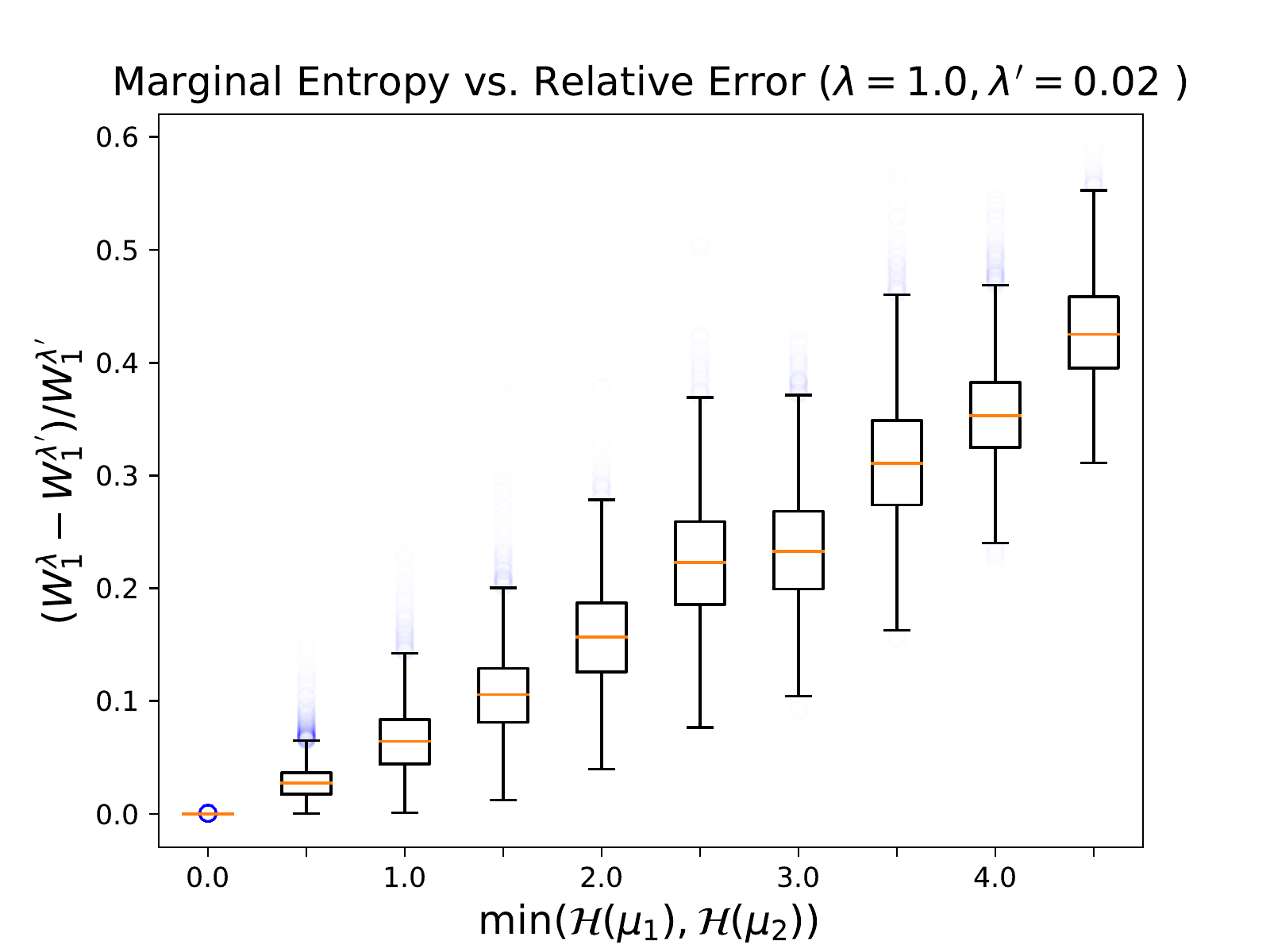}
\end{minipage}
\hspace{-10pt} %
\begin{minipage}{\szz\textwidth}
  \centering
    \includegraphics[width=1.\linewidth]{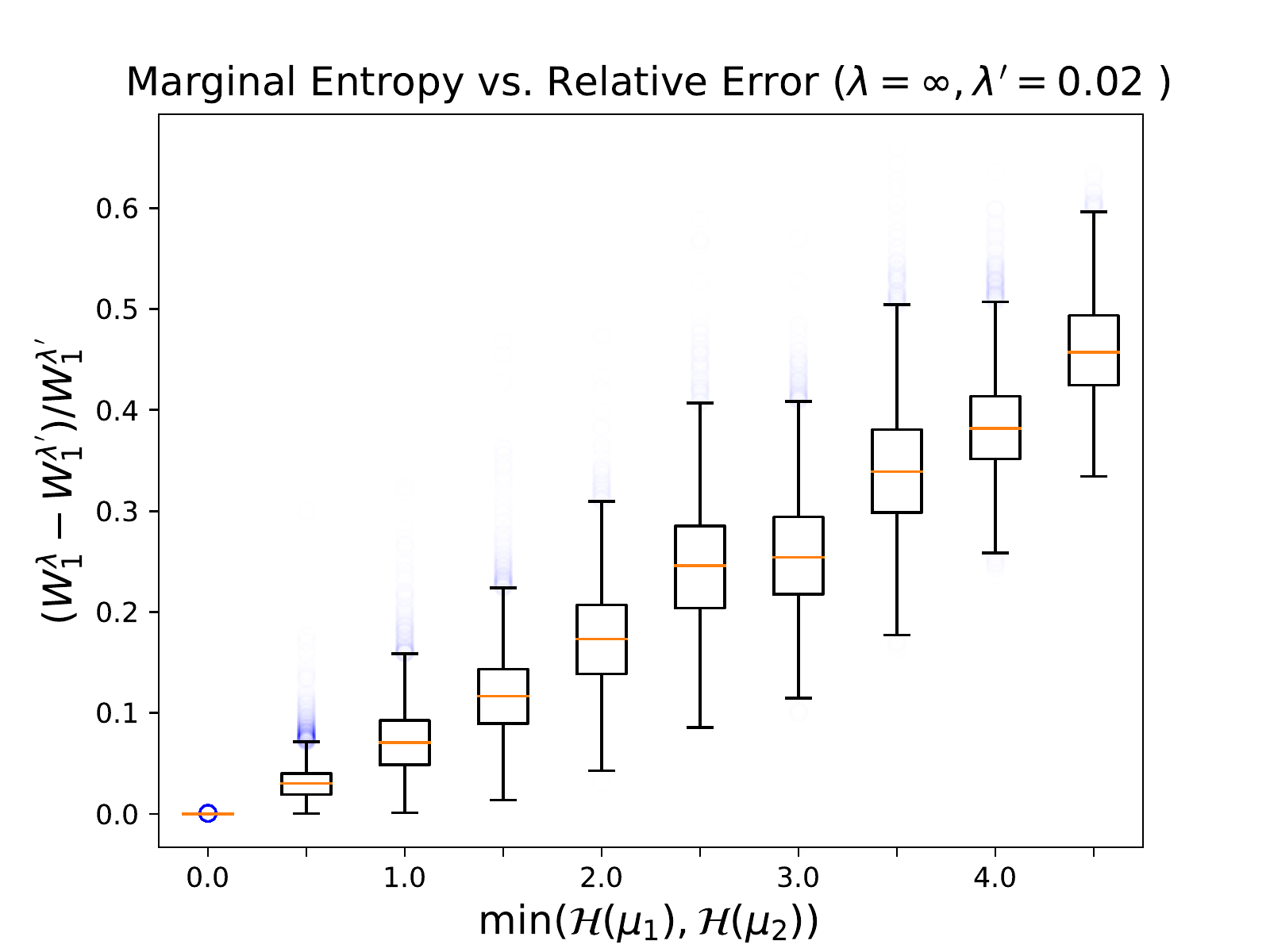}
\end{minipage}
\begin{minipage}{\szz\textwidth}
  \centering
    \includegraphics[width=1.\linewidth]{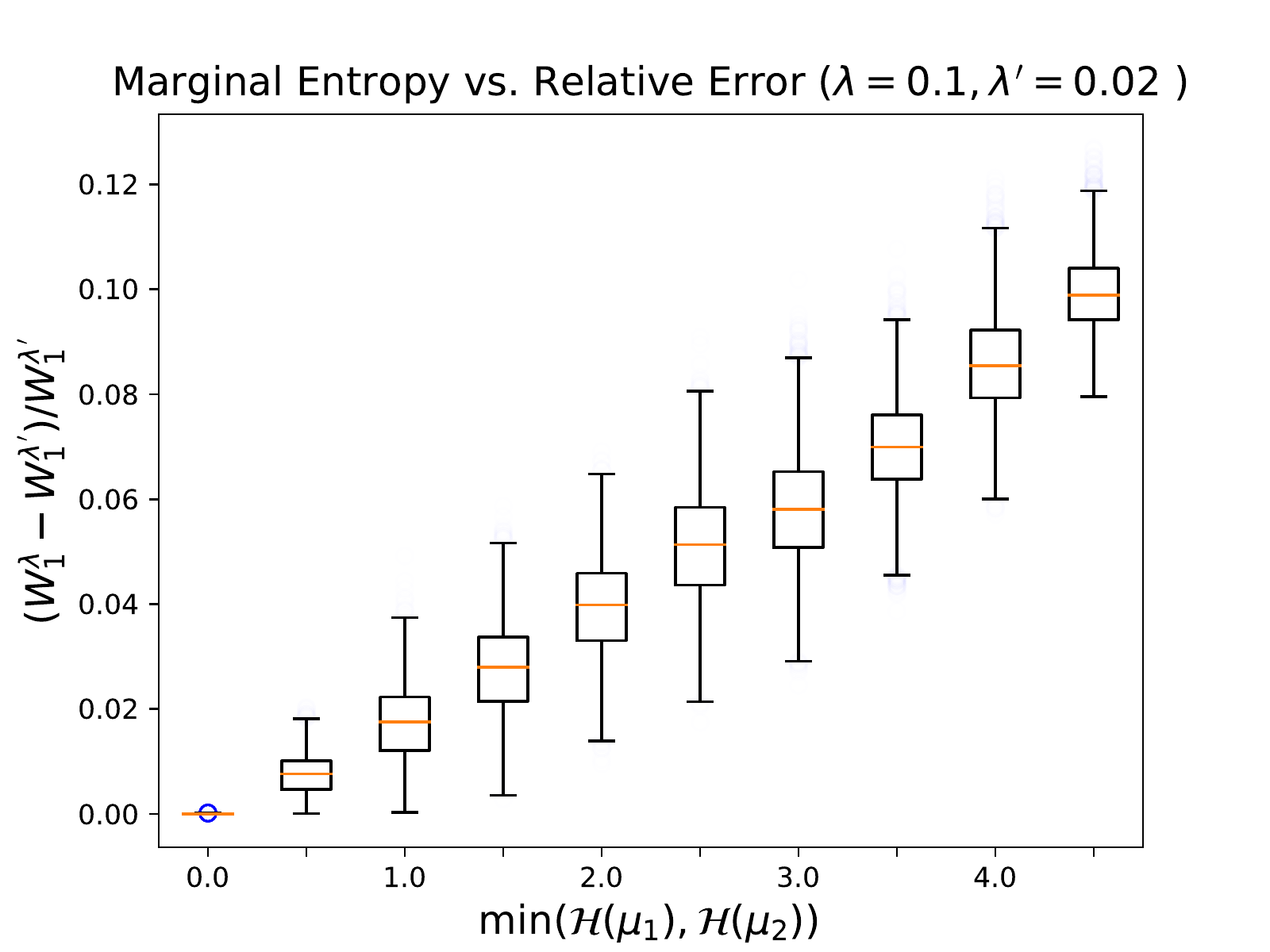}
\end{minipage}
\hspace{-10pt} %
\begin{minipage}{\szz\textwidth}
  \centering
    \includegraphics[width=1.\linewidth]{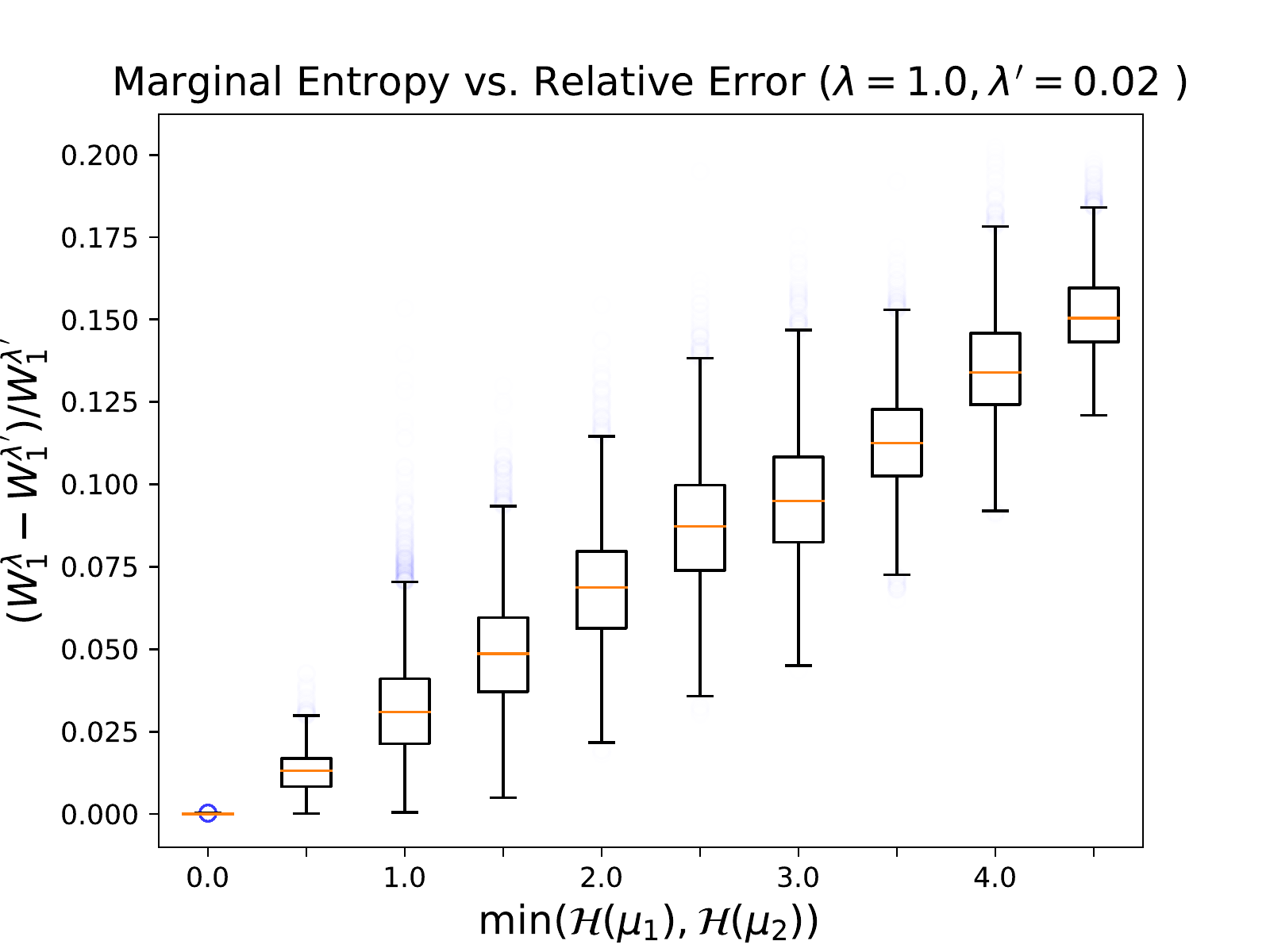}
\end{minipage}
\hspace{-10pt} %
\begin{minipage}{\szz\textwidth}
  \centering
    \includegraphics[width=1.\linewidth]{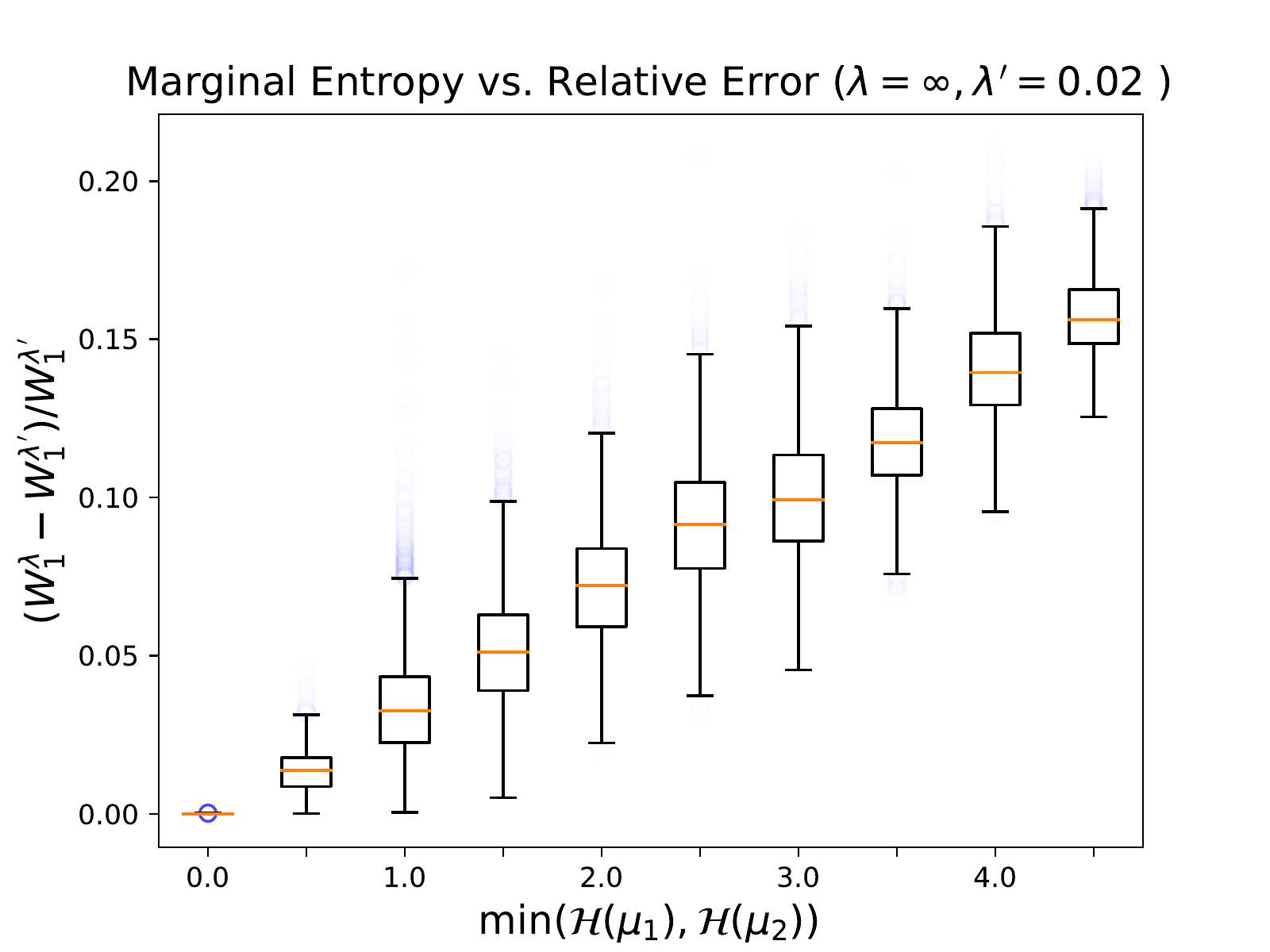}
\end{minipage}
\vspace{-10pt}
\caption{Measurements of relative error against minimum marginal entropy with varying $\lambda \in \{0.1, 1, \infty \}$ \textbf{(left-to-right)} and ambient space dimensionality $m_{\gX} \in \{ 2, 8, 32\}$ \textbf{(top-to-bottom)}. Each individual box corresponds to 1000 pairs of $(\mu_1, \mu_2)$ with 20 and 50 samples of probability vectors $\mu_1$ and $\mu_2$.}
\vspace{-2pt}
\label{fig:fig-sinkhorn-entropy}
\end{figure*}

\begin{algorithm}
\caption{$\epsilon$-aggregation for finite $\gS$}\label{alg:aggregation}
\begin{algorithmic}
\Require Distance matrix $D \in \mathbb{R}^{|\gS|^2}_+$, aggregation threshold $\epsilon \geq 0$
\State $\texttt{num\_neighbours} \gets \Sigma_j\1[D_{ij} \leq \epsilon]$
\State $\texttt{partitions} \gets [~]$
\State $\texttt{assigned} \gets \mathrm{zeros}(|\gS|)$
\While{$\sum_i \texttt{assigned}[i] < |\gS|$}
    \State $m \gets \argmax_i (\texttt{num\_neighbours})$
    \Comment{Greedily select medioid index.}
    \State $\texttt{assigned}[m] \gets 1$
    \State $\texttt{members} \gets \mathrm{argwhere}_j(D_{mj} \leq \epsilon ~\mathrm{and}~ \mathrm{not}~ \texttt{assigned}[j])$
    \State $\texttt{partition} \gets [m] + \texttt{members}$
    \Comment{List concatenation.}
    \State $\texttt{assigned}[\texttt{members}] \gets 1$
    \Comment{All new members of the partition marked assigned.}
    \State $\texttt{num\_neighbours}[\texttt{partition}] \gets - \infty$
    \Comment{Assigned elements should not be selected as medioids.}
    \State $\texttt{partitions} \gets  \texttt{partitions} + [\texttt{partition}]$
    \Comment{Append partition to partitions.}
\EndWhile \\
\State $|\widetilde{\gS}| \gets \mathrm{length}(\texttt{partitions})$
\State $\Phi \gets \mathrm{zeros}(|\gS| \times |\widetilde{\gS}|)$
\For{$j$ in $[1, \ldots, |\widetilde{\gS}|]$}
    \For{$i$ in $\texttt{partitions}[j]$}
    \State $\Phi_{ij} \gets 1$
    \EndFor
\EndFor \\
\Return $\Phi \in \{0, 1\}^{|\gS| \times |\widetilde{\gS}|}$
\end{algorithmic}
\end{algorithm}

\section{Implementation Details}
\subsection{Warm-starting Sinkhorn distance computation}
\label{sec:warm-starting-sinkhorn}
Recall from \citet{cuturi2013sinkhorn} that by Sinkhorn's Theorem \citep{sinkhorn1967diagonal} the unique optimal transport plan $\omega^*$ for the entropy-regularized problem for a finite space with $\ell$ elements can be written in matrix form as $\mathrm{diag}(\vu)K\mathrm{diag}(\vv)$, where $\vu, \vv \in \mathbb{R}^\ell_+$ and $K_{ij} = e^{-\lambda d(x_i, x_j)}$. Then, one iteratively updates vectors $\vu$ and $\vv$ in alternation so as to satisfy the marginal constraints on row and column sums. The standard implementation of this algorithm in the Python Optimal Transport package \citep{flamary2021pot} initializes $\vu$ and $\vv$ to be $\1_\ell/\ell$. In our case, the sequence of Sinkhorn problems being solved follow a structure: (i) for each of $n$ fixed-point updates $\gF_\pi(\widehat{d}_\pi)$ we compute $W_p^\zeta(\widehat{d}_\pi)(\gP_\pi(\vs_i), \gP_\pi(\vs_j))$ for all state pairs $(\vs_i, \vs_j)$, and (ii) after $n$ updates, we update the policy $\pi$. Since $\gF_\pi$ is contractive, the consecutive metrics approach one another as we apply the mapping $\gF_\pi$. Thus, the solutions shall also approach one another considering Lemma \ref{lem:p-wass-difference-bound}. A similar observation can be made about small policy updates, which likely change the fixed-point metric only slightly (e.g., see Lemma \ref{lem:distance-difference}). Inspired by \citet{ferns2006methods}, we exploit this structure by saving the final vectors $\vu_{ij}$ and $\vv_{ij}$ for all state pairs $(\vs_i, \vs_j)$ and initializing each run of the Sinkhorn-Knopp algorithm with the most recently saved $(\vu_{ij}, \vv_{ij})$ for the corresponding pair of states. This results in up to an order of magnitude improvement in wall-clock time as shown in Fig \ref{fig:fig-runtime}.

\subsection{An algorithm for hard aggregation for finite $\gS$}
\label{sec:aggregation}
In Algorithm \ref{alg:aggregation}, we provide the simple greedy algorithm that we used for partitioning a finite space $\gS$ given a pairwise distance matrix and a threshold $\epsilon$. The algorithm counts for each state the number $\epsilon$-close states and greedily assigns partition medioids based on this simple heuristic. Each partition contains a medioid and its $\epsilon$-neighbours which have not been previously assigned to another partition. While the algorithm itself is not necessarily optimal, we showed empirically in Figs. \ref{fig:ablating-alpha} and \ref{fig:ablating-p-lambda} that with a good enough metric it recovers the ground-truth partitions and is therefore sufficient for our purposes.

\section{Bisimulation Distance vs. Absolute Value Difference}
\label{sec:abs-value-difference}
In Figs. \ref{fig:ablating-alpha} and \ref{fig:ablating-p-lambda}, we measured $\max_{\vs, \vs^\prime}{\Big|\widehat{d}_\pi(\vs, \vs^\prime) - |V^\pi(\vs) - V^\pi(\vs^\prime)| \Big|}$ over time-steps $k$ as a proxy for VFA capabilities of the running bisimulation metric. While the true $\pi$-bisimulation metric $d^\sim_\pi$ provably satisfies ${\Delta V^\pi(\vs, \vs^\prime) = |V^\pi(\vs) - V^\pi(\vs^\prime)| \leq d^\sim_\pi(\vs, \vs^\prime)}$, we only approximate it with $n$ fixed-point updates after each policy update. As such, $\widehat{d}_\pi$ may under-estimate $\Delta V^\pi(\vs, \vs^\prime)$ especially early in training since we initialize $d_0= \vzero$. In Fig. \ref{fig:appendix-bisim-vs-avd}, we show box plots of $\widehat{d}_\pi(\vs, \vs^\prime) - \Delta V^\pi(\vs, \vs^\prime)$ over time to investigate this behavior. In particular, we run the API($\alpha_k$) algorithm on the first MDP with ${\lambda \in \{0.25, 2.0, \infty \}}$, ${n \in \{1, 5\}, \epsilon=0.1}$ and ${\alpha_k = \max(0.01, k^{-0.8})}$. We only run the algorithm for 200 steps here since the metric stabilizes by then. We observe that for $n=1$, the approximate metric $\widehat{d}_\pi$ starts to over-estimate $\Delta V^\pi(\vs, \vs^\prime)$ within the first 30 steps as the metric approaches the fixed-point metric $d^\sim_\pi$. In contrast, when ${n=5}$ a single API step is sufficient for $\widehat{d}_\pi$ to exceed $\Delta V^\pi(\vs, \vs^\prime)$. As expected, we find a weaker upper bound on $\Delta V^\pi(\vs, \vs^\prime)$ with increasing $\lambda$.

\begin{figure*}
\centering
\begin{minipage}{0.33\textwidth}
  \centering
    \includegraphics[width=1.\linewidth]{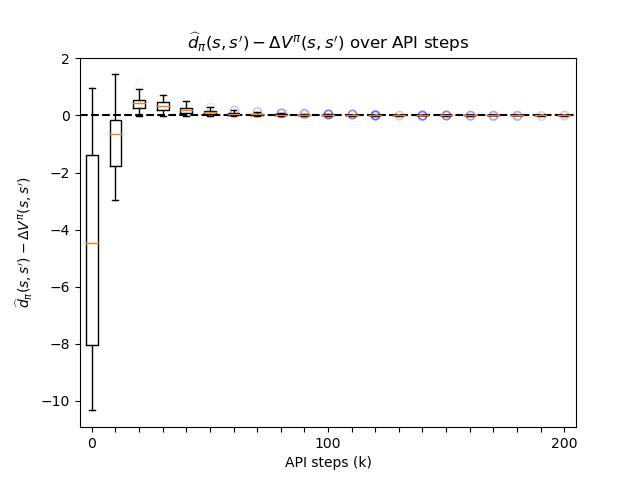}
\end{minipage}
\hspace{-10pt} %
\begin{minipage}{0.33\textwidth}
  \centering
    \includegraphics[width=1.\linewidth]{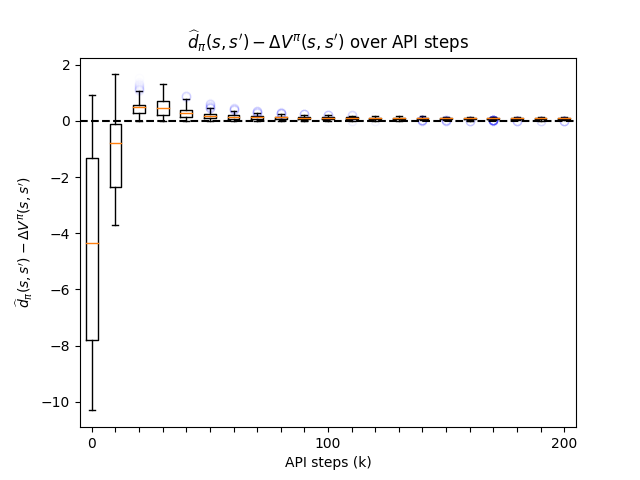}
\end{minipage}
\hspace{-10pt} %
\begin{minipage}{0.33\textwidth}
  \centering
    \includegraphics[width=1.\linewidth]{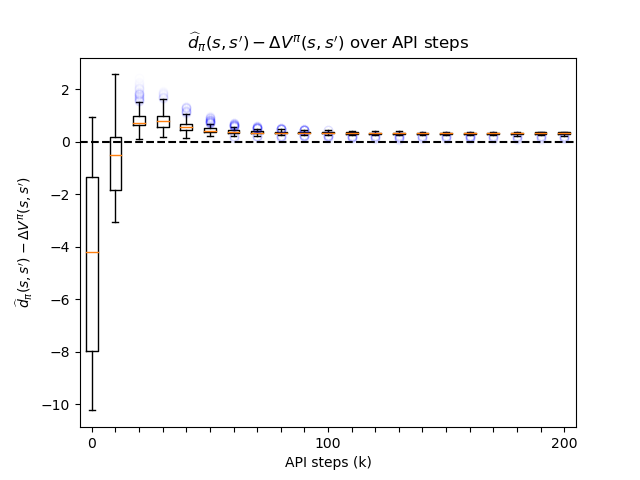}
\end{minipage}
\begin{minipage}{0.33\textwidth}
  \centering
    \includegraphics[width=1.\linewidth]{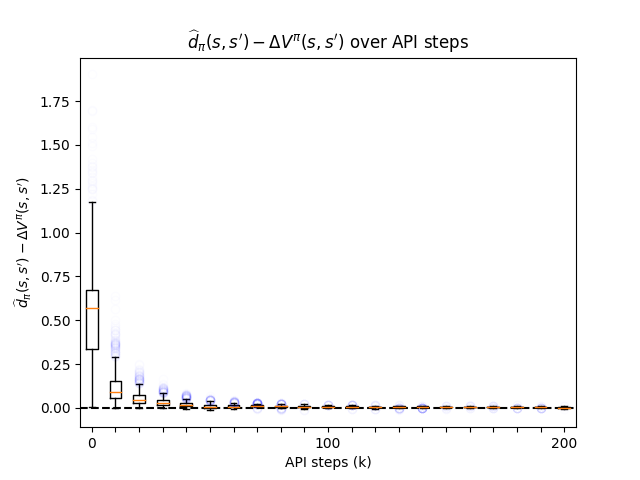}
\end{minipage}
\hspace{-10pt} %
\begin{minipage}{0.33\textwidth}
  \centering
    \includegraphics[width=1.\linewidth]{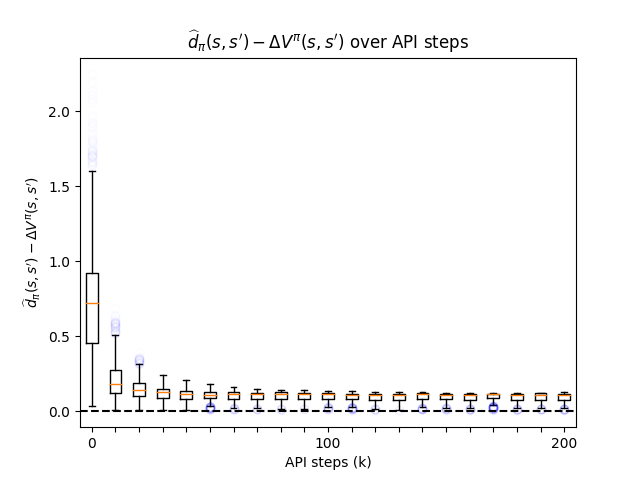}
\end{minipage}
\hspace{-10pt} %
\begin{minipage}{0.33\textwidth}
  \centering
    \includegraphics[width=1.\linewidth]{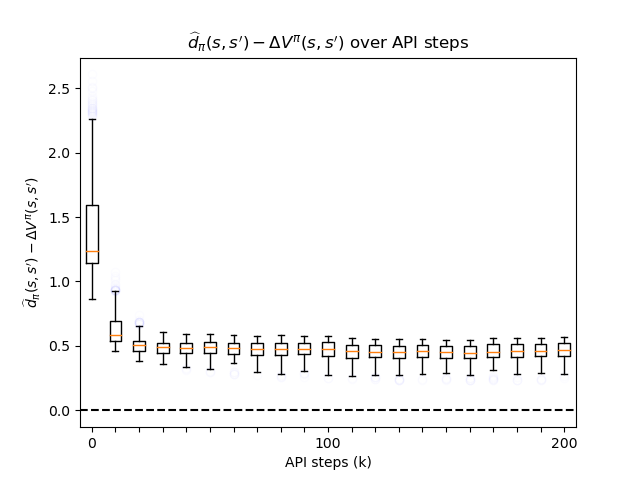}
\end{minipage}
\vspace{-2pt}
\caption{Approximate bisimulation distance vs. the absolute value difference over time for ${n=1}$ \textbf{(Top)} and ${n=5}$ \textbf{(Bottom)} with varying $\lambda \in \{0.25, 2.0, \infty \}$ (\textbf{left-to-right}).} 
\vspace{-2pt}
\label{fig:appendix-bisim-vs-avd}
\end{figure*}  
\section{An Additional Class of MDPs}
\label{sec:additional-experiments}
In this section, we repeat some of the main experiments in Sec. \ref{sec:experiments} for a new class of randomly generated MDPs. As before, we have $m=20$ equivalence classes (ECs) and $200$ states. Each equivalence class $B_i$ is assigned an optimal action ${a_j \in \gA}$ where ${|\gA|=10}$ and $j = i \mathbin{\%} 10$. If the agent takes the optimal action in $B_i$, the MDP transitions to $B_{i+1}$ with probability $1-p_i$, where $p_i \sim \gU(0, 0.25)$, except in $B_m$ the agent stays in $B_m$ with probability $1-p_m$. With probability $p_i$, the agent transitions to a randomly selected EC other than $B_i$ and $B_{i+1}$. Taking any of the non-optimal $|\gA|-1$ actions transition the agent back to $B_1$ from any $B_i$ with probability 1. As before, transition probabilities from a state $\vs_i$ to a given EC are sampled uniformly from the $(|\gS|/m - 1)$-simplex. The agent collects a unit reward only when it takes the optimal action in $B_m$. 

\begin{figure*}
\centering
\begin{minipage}{0.32\textwidth}
  \centering
    \includegraphics[width=1.\linewidth]{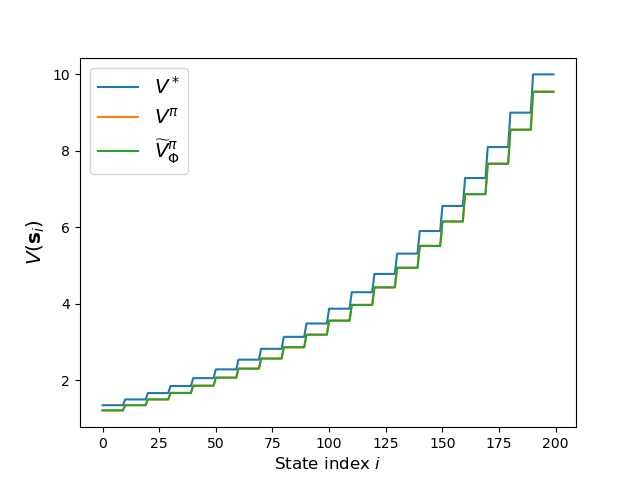}
\end{minipage}
\hspace{-10pt} %
\begin{minipage}{0.32\textwidth}
  \centering
    \includegraphics[width=1.\linewidth]{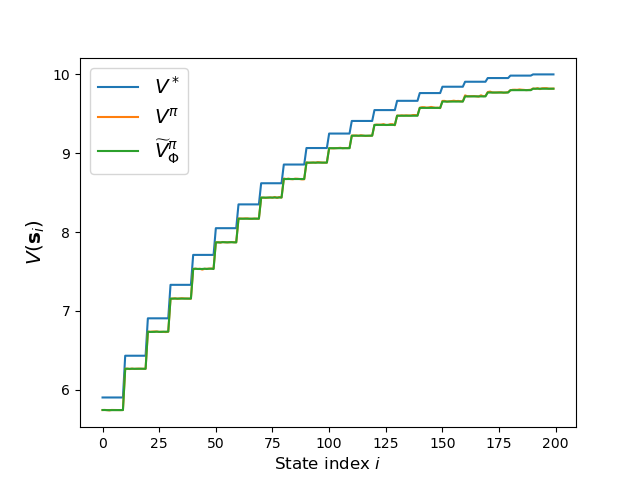}
\end{minipage}
\hspace{-10pt} %
\begin{minipage}{0.32\textwidth}
  \centering
    \includegraphics[width=1.\linewidth]{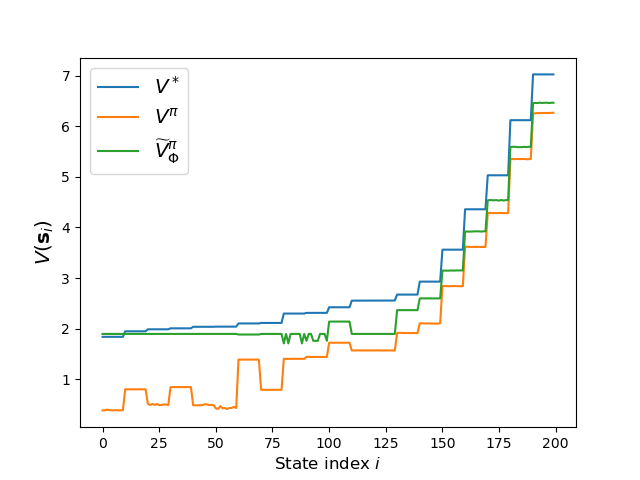}
\end{minipage}
\vspace{-2pt}
\caption{Value functions for the first and second MDPs from Sec. \ref{sec:experiments} (\textbf{Left, Middle}) and a randomly generated MDP from Appendix \ref{sec:additional-experiments} (\textbf{Right}) after 1000 steps of training with $\lambda=\infty$ and 30-medioids partitioning.}
\vspace{-2pt}
\label{fig:vstar-vpi}
\end{figure*}

\begin{figure*}
\centering
\begin{minipage}{0.3\textwidth}
  \centering
    \includegraphics[width=1.\linewidth]{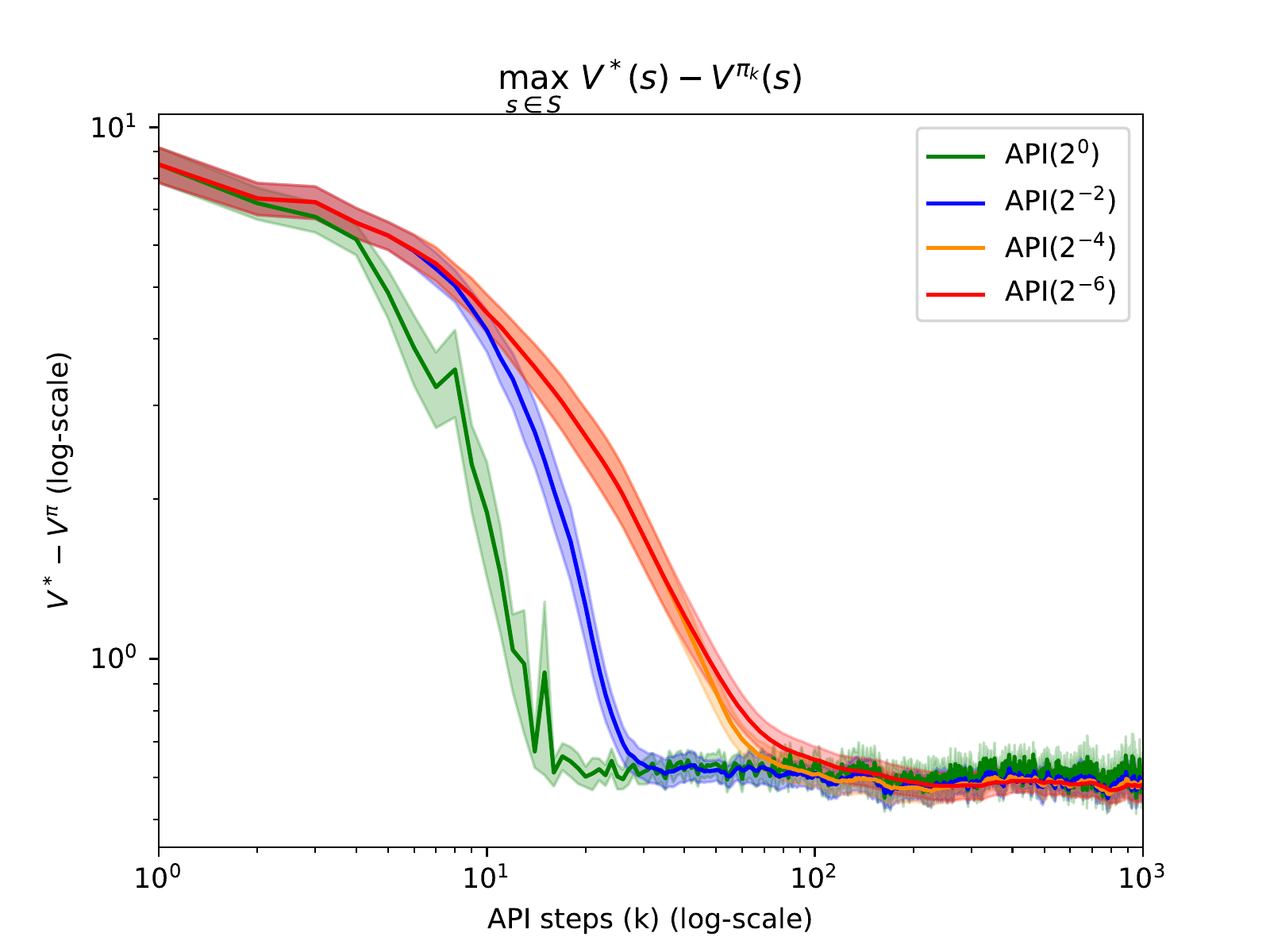}
\end{minipage}
\hspace{-10pt} %
\begin{minipage}{0.3\textwidth}
  \centering
    \includegraphics[width=1.\linewidth]{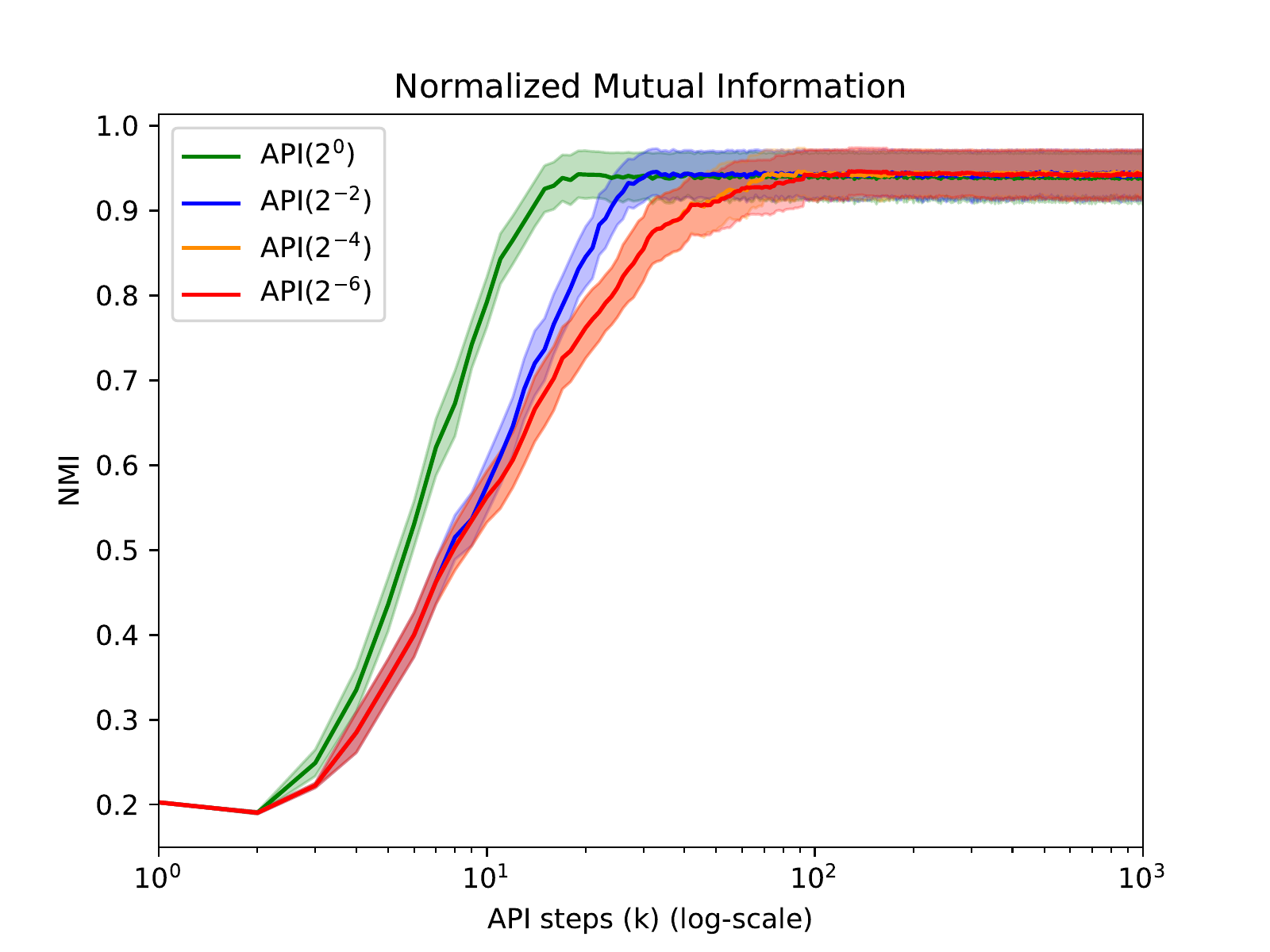}
\end{minipage}
\hspace{-10pt} %
\begin{minipage}{0.3\textwidth}
  \centering
    \includegraphics[width=1.\linewidth]{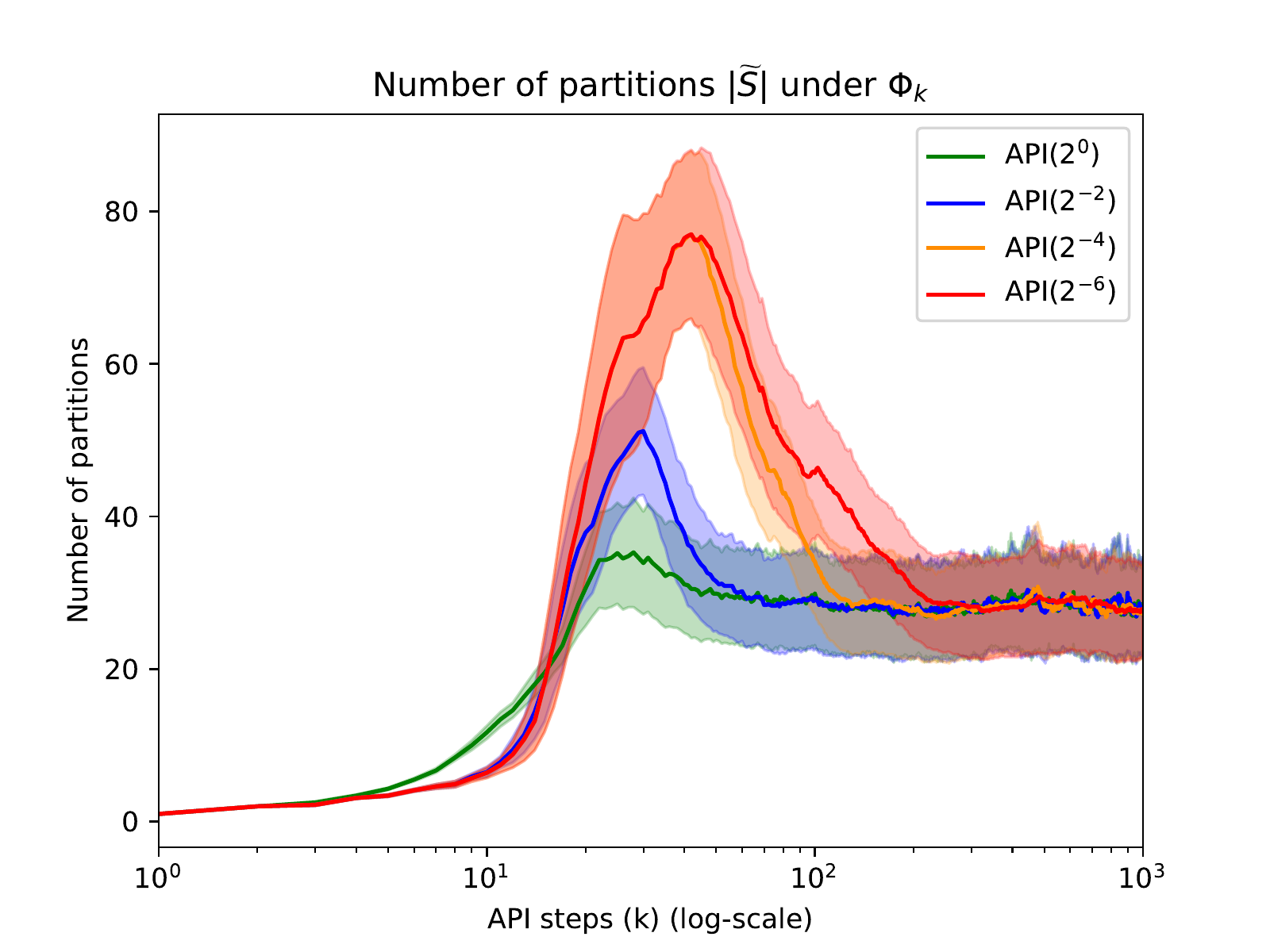}
\end{minipage}
\begin{minipage}{0.3\textwidth}
  \centering
    \includegraphics[width=1.\linewidth]{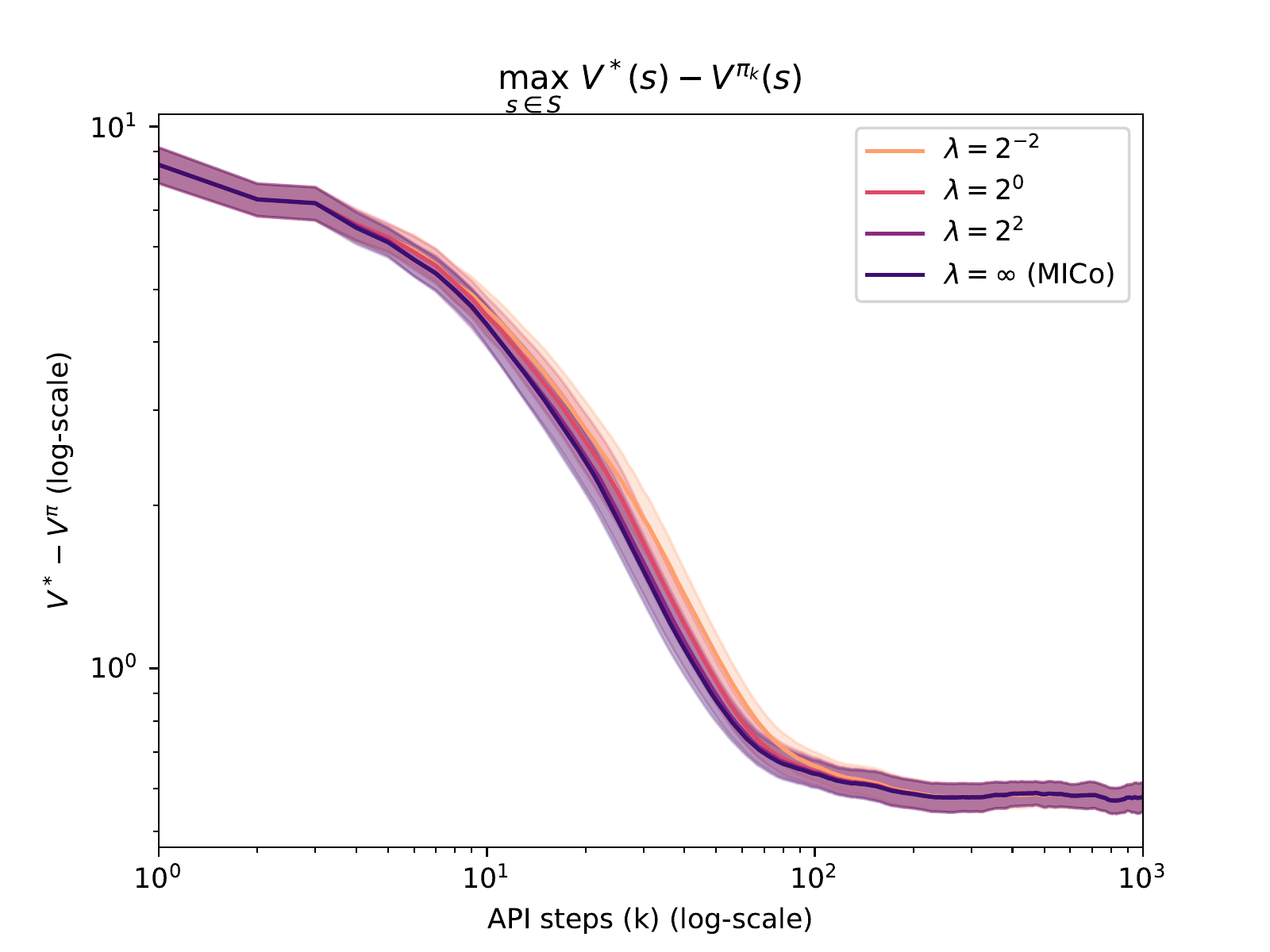}
\end{minipage}
\hspace{-10pt} %
\begin{minipage}{0.3\textwidth}
  \centering
    \includegraphics[width=1.\linewidth]{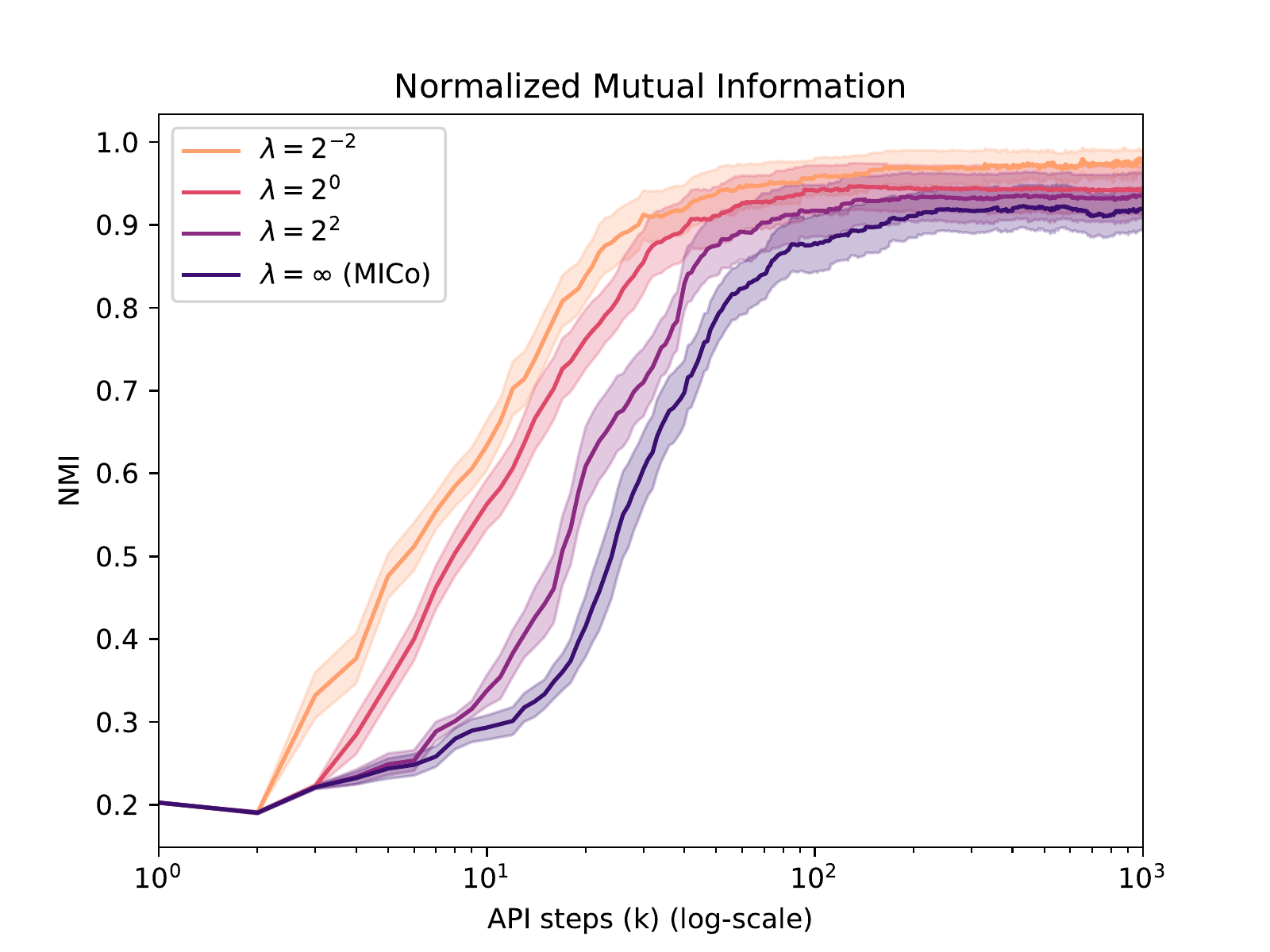}
\end{minipage}
\hspace{-10pt} %
\begin{minipage}{0.3\textwidth}
  \centering
    \includegraphics[width=1.\linewidth]{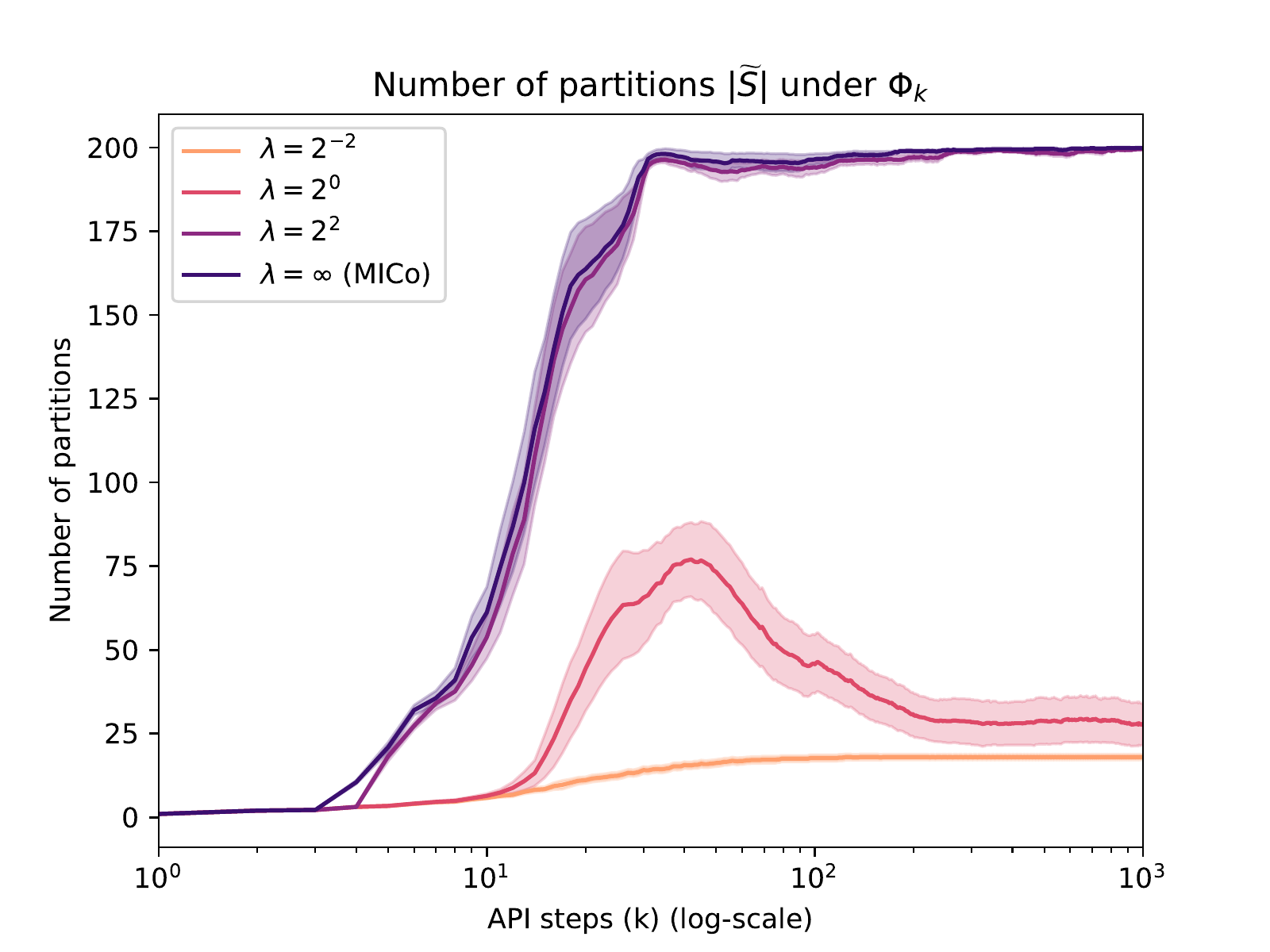}
\end{minipage}
\vspace{-2pt}
\caption{Ablations of $\alpha$ \textbf{(Top)} and $\lambda$ \textbf{(Bottom)}  for the third class of MDPs introduced in Appendix \ref{sec:additional-experiments}.}
\vspace{-2pt}
\label{fig:appendix-d-ablations}
\end{figure*}
\begin{figure*}[!ht]
\centering
\begin{minipage}{0.32\textwidth}
  \centering
    \includegraphics[width=1.\linewidth]{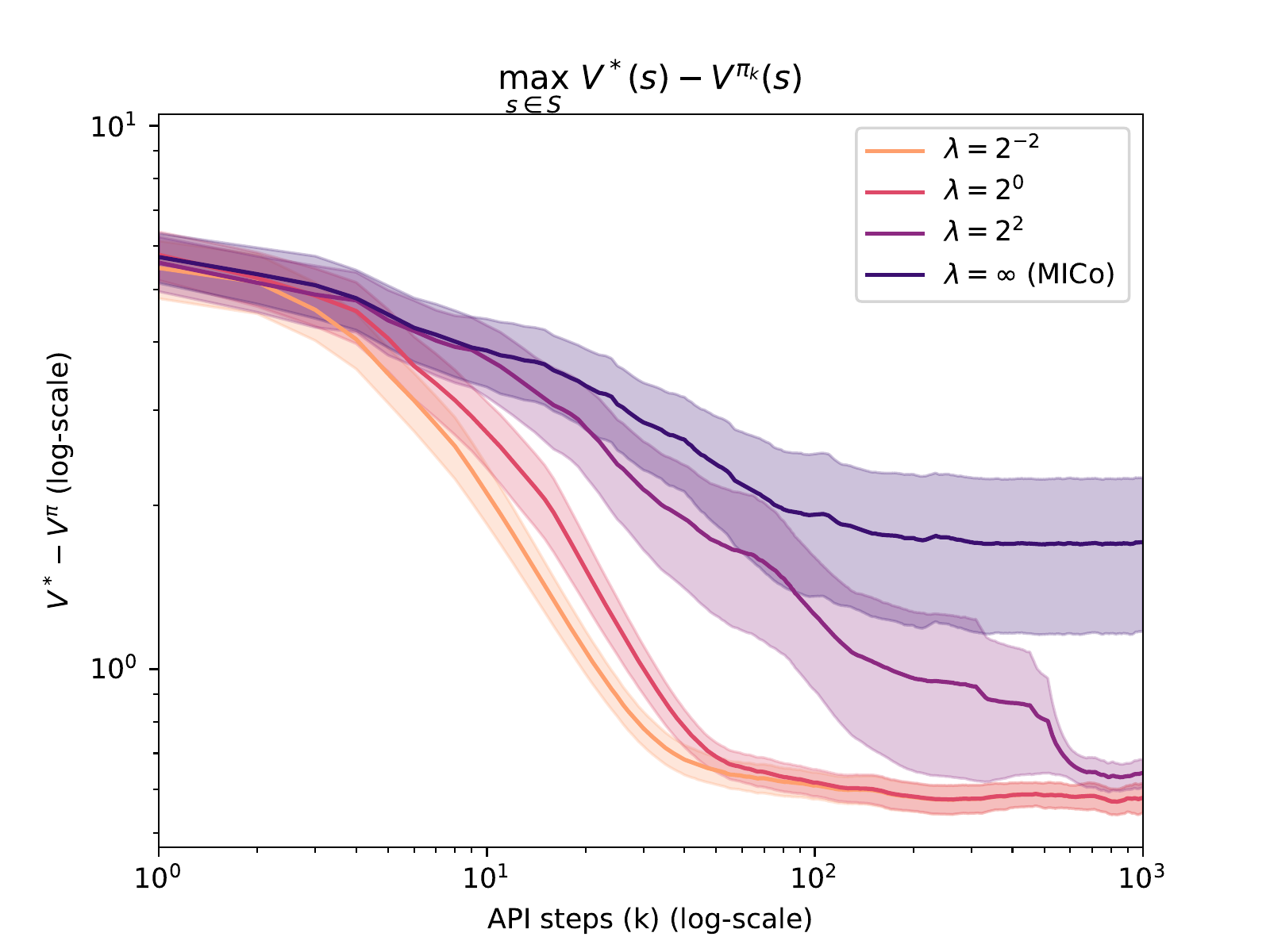}
\end{minipage}
\hspace{-10pt} %
\begin{minipage}{0.32\textwidth}
  \centering
    \includegraphics[width=1.\linewidth]{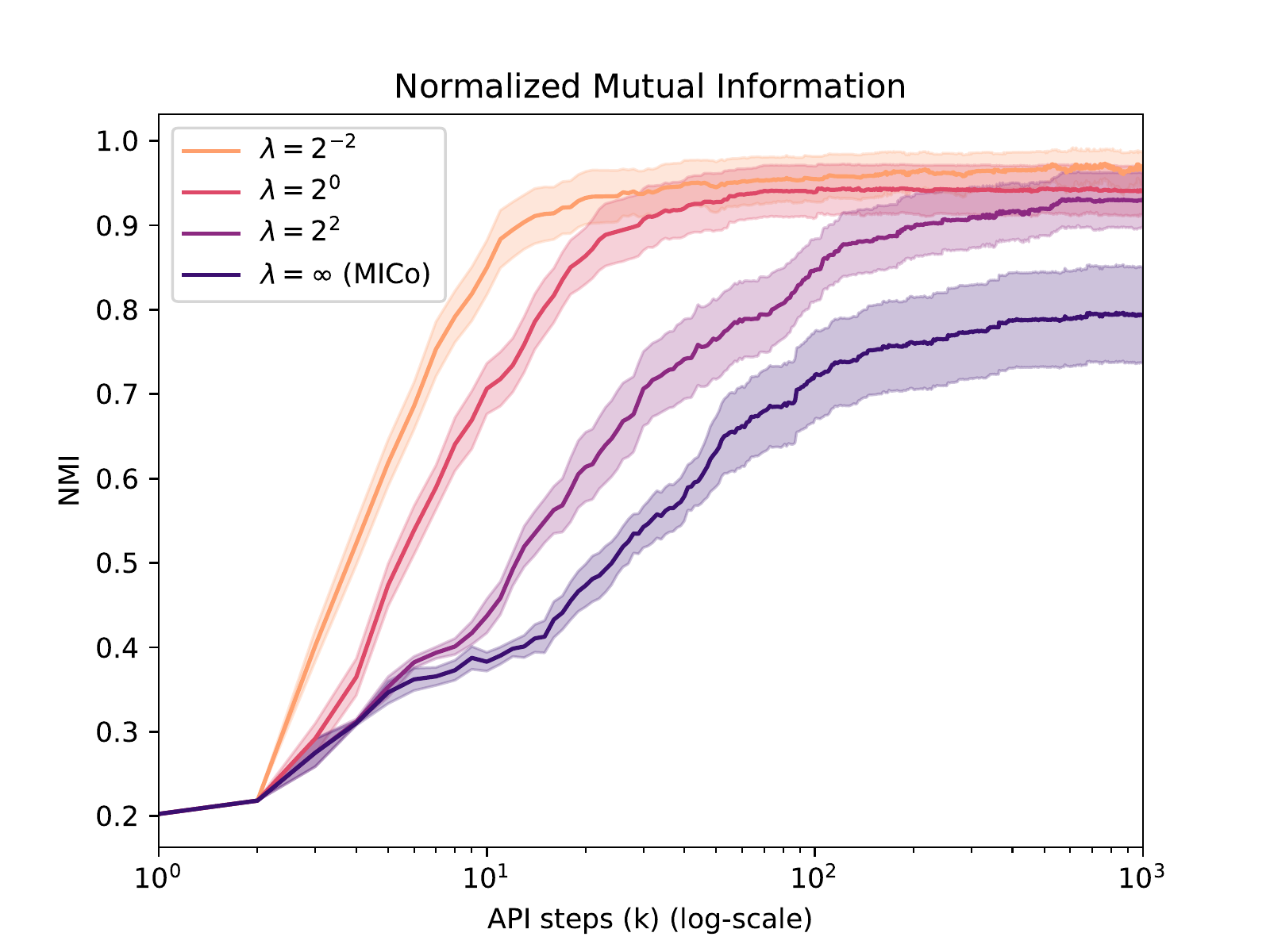}
\end{minipage}
\vspace{-2pt}
\caption{Ablation of $\lambda$ for the new class of MDPs in the limited representation capacity setting, where Algorithm \ref{alg:aggregation} is replaced with 30-medioids partitioning.}
\label{fig:appendix-d-lambda-capped}
\end{figure*}

In Fig. \ref{fig:vstar-vpi}, we illustrate the value functions corresponding to the MDPs analyzed in Sec. \ref{sec:experiments} and the MDP discussed here; the approach here with randomly sampled transitions results in irregular step sizes between ECs for $V^*$, although value equivalence is preserved within each EC. The task is rendered more difficult since many states in different ECs have similar values and the importance of more precise metrics for VFA is highlighted. Indeed, we observe that $\pi$ significantly underperforms the optimal policy in this case and the bisimulation-based approximation $\widetilde{V}^\pi_\Phi$ of $V^\pi$ has high error particularly for ECs with a small value difference.

In Fig. \ref{fig:appendix-d-ablations}, we ablate $\alpha$ and $\lambda$ for this new class of MDPs and observe similar results to Sec. \ref{sec:experiments}. Namely, we note better asymptotic performance but slowed down progress with smaller $\alpha$. While the setting $\lambda$ does not influence performance, lower $\lambda$ produces better metrics and a more efficient partitioning. Lastly, in Fig. \ref{fig:appendix-d-lambda-capped}, we repeat the experiment shown in Fig. \ref{fig:fig-nmi} where the number of allowed partitions is limited to 30. We again obtain qualitatively similar results with lower $\lambda$ producing a higher quality metric and a better policy.
\end{document}